\documentclass[acmsmall,screen]{acmart}

\setcopyright{rightsretained}
\acmJournal{PACMPL}
\acmYear{2020} \acmVolume{4} \acmNumber{OOPSLA} \acmArticle{134} \acmMonth{11} \acmPrice{}\acmDOI{10.1145/3428202}

\bibliographystyle{ACM-Reference-Format}
\citestyle{acmauthoryear}   %

\usepackage[utf8]{inputenc}
\usepackage[nomargin,inline,index,status=draft]{fixme}
\fxusetheme{colorsig}%
\FXRegisterAuthor{fxNY}{anfxNY}{\underline{\textbf{NY}}}%
\FXRegisterAuthor{fxRM}{anfxRM}{\underline{\textbf{RM}}}%
\FXRegisterAuthor{fxDZ}{anfxDZ}{\underline{\textbf{DZ}}}%

\usepackage{microtype}

\usepackage{xspace}
\usepackage{bbm}
\usepackage{color}
\usepackage[inline,shortlabels]{enumitem}%
\usepackage{yfonts}%
\usepackage{stmaryrd}
\usepackage{mathpartir}
\usepackage{nicefrac}%
\usepackage{thmtools}%
\usepackage{prooftree}
\usepackage{mathtools}
\usepackage{booktabs} 
\usepackage{xspace}
\usepackage{bold-extra}
\usepackage{euscript} [mathcal]

\usepackage{subcaption} %
\usepackage{multirow}                        
\usepackage{wrapfig}                        

\usepackage{xifthen}%
\usepackage[all]{xy}%

\usepackage{hyperref}%
\hypersetup{final}%

\usepackage{tikz}
\usetikzlibrary{patterns}

\usepackage{todonotes}

\newcommand\RMi[1]{\todo[color=white,inline]{\textcolor{red}{RM, #1}}}
\newcommand\NYi[1]{\todo[color=white,inline]{\textcolor{violet}{NY, #1}}}
\newcommand\DZi[1]{\todo[color=white,inline]{\textcolor{cyan}{DZ, #1}}}

\usepackage{etoolbox}
\newtoggle{full}

\newcommand{\ifempty}[3]{%
  \ifthenelse{\isempty{#1}}{#2}{#3}%
}%

\makeatletter
\newcommand\footnoteref[1]{\protected@xdef\@thefnmark{\ref{#1}}\@footnotemark}
\makeatother

\newcommand{\RedG}{\Longrightarrow}

\newcommand{\T}{T}

\renewcommand{\S}{S}

\newcommand{\rulename}[1]{\text{\textnormal{\small[\textsc{#1}]}}}

\newcommand{\PGCD}{\textsf{PGCD}\xspace}

\newcommand{\pp}{{\roleP}}
\newcommand{\pq}{{\roleQ}}
\newcommand{\pr}{{\roleR}}
\newcommand{\ps}{{\roleS}}
\newcommand{\q}{\pq}
\newcommand{\e}{\kf{e}}
\newcommand{\x}{x}

\newcommand{\val}{v} %
\newcommand{\valn}{\kf{n}}
\newcommand{\valr}{\kf{i}}

\newcommand{\SEP}{\ensuremath{~~\mathbf{|\!\!|}~~ }}
\newcommand{\kf}[1]{\ensuremath{\mathsf{#1}}}
\newcommand{\ma}{\ensuremath{\mathit{a}}}
\newcommand{\mb}{\ensuremath{\mathit{b}}}

\newcommand{\Gvti}[5]{\ensuremath{#1\to#2:\{#3_i({#4}_i). #5_i \}_{i \in I}}}

\newcommand{\GvtPre}[4]{\ensuremath{#1\to#2: #3({#4})}}

\newcommand{\ty}{\mathbf{t}}
\newcommand{\GvtPair}[3]{\ensuremath{#1\to#2: \{#3\}_{i\in I}}}

\newcommand{\N}{M}

\newcommand{\pa}[2]{#1 \triangleleft  #2}
\newcommand{\set}[1]{\{#1\}}

\newcommand{\eval}[2]{#1 \downarrow #2}
\newcommand{\evalState}[3]{#1 \downarrow_{#3} #2}
\newcommand{\true}{\kf{true}}
\newcommand{\false}{\kf{false}}

\newcommand{\der}[3]{ #1 \vdash   #2  :#3}

\newcommand{\participant}[1]{\mathtt{pt}\{#1\}}

\newcommand{\proj}[2]{ #1 \upharpoonright_{#2}}
\newcommand{\projt}[2]{{#1}{\upharpoonright}{#2}}
\newcommand{\sub}[2]{\set{#1/#2}}

\newcommand{\motion}[2]{\kf{#1}\ENCan{#2}}
\newcommand{\Gmotion}[3]{\kf{#1}\ENCan{#2}.{#3}}
\newcommand{\Lmotion}[3]{\kf{#1}\ENCan{#2}.{#3}}

\newcommand{\procin}[3]{#1 ?  #2 .#3}

\newcommand{\procout}[4]{#1 ! #2\ENCan{#3}. #4}

\newcommand{\PP}{\ensuremath{{P}}}
\newcommand{\PC}{\ensuremath{\Bbb P}}
\newcommand{\plays}{\ensuremath{\triangleleft{}}}
\newcommand{\proc}[3]{\ensuremath{\langle{#2,#3\mathsf{;}\, #1}\rangle}}
\newcommand{\geom}[1]{\ensuremath{\mathsf{geom}\left({#1}\right)}}
\newcommand{\parcomp}{\ensuremath{~|~}}

\newcommand{\Q}{\ensuremath{Q}}

\newcommand{\cond}[3]{\kf{if}~ #1 ~\kf{then} ~#2 ~\kf{else}~#3}

\newcommand{\tend}{\mathtt{end}}
\newcommand{\tunit}{\mathtt{unit}}
\newcommand{\tbool}{\mathtt{bool}}

\newcommand{\treal}{\mathtt{real}}

\newcommand{\tin}[3]{#1?#2(#3)}
\newcommand{\tout}[3]{#1!#2(#3)}

\renewcommand{\S}{S}

\newcommand{\lts}[1]{\xrightarrow{#1}}
\newcommand{\redG}[4]{#1\setminus#2\xrightarrow#3#4}
\newcommand{\redGM}[2]{#1\setminus#2}
\newcommand{\redGequiv}[2]{#1 \rightsquigarrow #2}
\newcommand{\redastGequiv}[2]{#1 \rightsquigarrow^\ast #2}

\newcommand{\subt}{\leqslant}

\newcommand{\subs}{\leq\vcentcolon}

\newcommand{\red}{\longrightarrow}

\newcommand{\cinferrule}[3][]{
  \mprset{fraction={===},
  fractionaboveskip=0.2ex,
  fractionbelowskip=0.4ex}
  \inferrule[#1]{#2}{#3}
}

\definecolor{ceca}{rgb}{1,0.5,0}

\newcommand{\Gvtir}[5]{\ensuremath{#1\to#2:\{#3_i({#4}_i). \redG{#5_i}\pp\ell\q \}_{i \in I}}}

\newcommand{\ENCan}[1]{\langle #1 \rangle}

\newcommand{\pre}{\ensuremath{\mathrm{Pre}\xspace}}
\newcommand{\post}{\ensuremath{\mathrm{Post}\xspace}}

\newcommand{\pred}{\mathcal{P}}

\newcommand{\dt}{\ensuremath{\mathsf{dt}}\xspace}
\newcommand{\clock}{\ensuremath{\mathsf{clock}}\xspace}

\definecolor{dtsessColor}{rgb}{0.8, 0.0, 0.0}%
\newcommand{\dtsessCol}[1]{{\color{dtsessColor}#1}}%
\newcommand{\dtsess}{\dtsessCol{\mathsf{dt}}}%

\def\set#1{\{#1 \}}
\def\tuple#1{\langle #1 \rangle }

\newcommand{\comp}[4]{#1 \lhd_{#2, #3} #4}      %
\newcommand{\sstore}[1]{\ensuremath{\vecFmt{#1}}}
\newcommand{\rbtstate}[4]{\ensuremath{\tuple{#2,#3,#4\mid\;#1}}} %
\newcommand{\judgement}[2]{\ensuremath{ #1 ~\vdash~(#2)}}
\newcommand{\pjudgement}[2]{\ensuremath{\vdash #1 ~\text{\textbf{psat}}~(#2)}}
\newcommand{\timestep}{{\ddagger}}
\newcommand{\instantaneous}{{\mathsf{\lightning}}}
\newcommand{\delayed}{{\mathsf{\rightsquigarrow}}}

\newcommand{\FP}{\ensuremath{\mathit{FP}}}

\newcommand{\myparagraph}[1]{\paragraph{\bf #1}}

\definecolor{vecColor}{rgb}{0.5, 0.0, 0.0}%
\newcommand{\conCol}[1]{{\color{blue}#1}}%
\newcommand{\vecFmt}[1]{\boldsymbol{\mathtt{#1}}}%

\newcommand{\pToc}{\mu^{-1}}
\newcommand{\cTop}{\mu}
\newcommand{\con}{{\boldsymbol{\conCol{\mathbbmtt{c}}}}}%

\newcommand{\freevars}{\ensuremath{\mathit{fv}}}
\newcommand{\refines}{\ensuremath{\preceq}}

\def\Cart{{\color{roleColor}{\mathsf{Cart}}}}
\def\Env{{\color{roleColor}{\mathsf{Env}}}}
\def\Crane{{\color{roleColor}{\mathsf{Crane}}}}
\def\Trolley{{\color{roleColor}{\mathsf{Trolley}}}}

\def\Producer{{\color{roleColor}{\mathsf{Prod}}}}
\def\Red{{\color{roleColor}{\mathsf{RRobot}}}}
\def\Green{{\color{roleColor}{\mathsf{GRobot}}}}

\def\START{\mathit{start}}
\def\GREEN{\mathit{green}}
\def\RED{\mathit{red}}
\def\OK{\mathit{ok}}
\def\FREE{\mathit{free}}

\def\ARRIVE{\mathit{arrive}}
\def\READY{\mathit{ready}}

\def\IDLEM{\mathsf{m\_idle}}
\def\MOVEM{\mathsf{m\_move}}
\def\IDLEMC{\mathsf{c\_idle}}

\def\WORKM{\mathsf{work}}
\def\PICKM{\mathsf{pick}}
\def\PLACEM{\mathsf{place}}

\definecolor{ruleColor}{rgb}{0.1, 0.3, 0.1}%
\definecolor{groundColor}{rgb}{0.38, 0.25, 0.32}%

\definecolor{roleColor}{rgb}{0.1, 0.3, 0.1}%
\newcommand{\roleCol}[1]{{\color{roleColor}#1}}%
\newcommand{\roleFmt}[1]{\boldsymbol{\roleCol{\mathtt{#1}}}}%
\newcommand{\roleP}[1][]{%
  \ifempty{#1}{{\color{roleColor}\roleFmt{p}}}{{\color{roleColor}\roleFmt{p}_{#1}}}%
}%
\newcommand{\pc}{{\roleP}}
\newcommand{\roleVar}[1][]{%
  \ifempty{#1}{{\color{roleColor}\roleFmt{x}}}{{\color{roleColor}\roleFmt{x}_{#1}}}%
}%
\newcommand{\procvar}{{\roleVar}}
\newcommand{\roleQ}[1][]{%
  \ifempty{#1}{{\color{roleColor}\roleFmt{q}}}{{\color{roleColor}\roleFmt{q}_{#1}}}%
}%
\newcommand{\roleR}[1][]{%
  \ifempty{#1}{{\color{roleColor}\roleFmt{r}}}{{\color{roleColor}\roleFmt{r}_{\!#1}}}%
}%
\newcommand{\roleS}[1][]{%
  \ifempty{#1}{{\color{roleColor}\roleFmt{s}}}{{\color{roleColor}\roleFmt{s}_{\!#1}}}%
}%

\newcommand{\GL}{\ensuremath{{\sf G}}}
\newcommand{\GLPre}{\ensuremath{g}}
\newcommand{\TLPre}{\ensuremath{T}}
\newcommand{\GLPAR}{\ensuremath{\ast}}
\newcommand{\SL}{\ensuremath{{\sf S}}}
\newcommand{\NUL}{\ensuremath{\epsilon}}

\newcommand{\Sec}{{Sec.}}
\newcommand\AT{\text{\small{\tt @}}}
\newcommand\ASENDPC{\vec{\pc}}

\usepackage{newunicodechar}
\newunicodechar{∃}{\ensuremath{\exists}}
\newunicodechar{∀}{\ensuremath{\forall}}
\newunicodechar{θ}{\ensuremath{\theta}}
\newunicodechar{τ}{\ensuremath{\tau}}
\newunicodechar{φ}{\ensuremath{\varphi}}
\newunicodechar{ξ}{\ensuremath{\xi}}
\newunicodechar{ζ}{\ensuremath{\zeta}}
\newunicodechar{ψ}{\ensuremath{\psi}}
\newunicodechar{π}{\ensuremath{\pi}}
\newunicodechar{α}{\ensuremath{\alpha}}
\newunicodechar{β}{\ensuremath{\beta}}
\newunicodechar{γ}{\ensuremath{\gamma}}
\newunicodechar{δ}{\ensuremath{\delta}}
\newunicodechar{ε}{\ensuremath{\epsilon}}
\newunicodechar{λ}{\ensuremath{\lambda}}
\newunicodechar{ρ}{\ensuremath{\rho}}
\newunicodechar{σ}{\ensuremath{\sigma}}
\newunicodechar{ω}{\ensuremath{\omega}}
\newunicodechar{Γ}{\ensuremath{\Gamma}}
\newunicodechar{Φ}{\ensuremath{\Phi}}
\newunicodechar{Δ}{\ensuremath{\Delta}}
\newunicodechar{Σ}{\ensuremath{\Sigma}}
\newunicodechar{Π}{\ensuremath{\Pi}}
\newunicodechar{∑}{\ensuremath{\Sigma}}
\newunicodechar{∏}{\ensuremath{\Pi}}
\newunicodechar{Θ}{\ensuremath{\Theta}}
\newunicodechar{Ω}{\ensuremath{\Omega}}
\newunicodechar{⇒}{\ensuremath{\Rightarrow}}
\newunicodechar{⇐}{\ensuremath{\Leftarrow}}
\newunicodechar{⇔}{\ensuremath{\Leftrightarrow}}
\newunicodechar{→}{\ensuremath{\rightarrow}}
\newunicodechar{←}{\ensuremath{\leftarrow}}
\newunicodechar{↔}{\ensuremath{\leftrightarrow}}
\newunicodechar{¬}{\ensuremath{\neg}}
\newunicodechar{∧}{\ensuremath{\land}}
\newunicodechar{∨}{\ensuremath{\lor}}
\newunicodechar{≠}{\ensuremath{\neq}}
\newunicodechar{≡}{\ensuremath{\equiv}}
\newunicodechar{∼}{\ensuremath{\sim}}
\newunicodechar{≈}{\ensuremath{\approx}}
\newunicodechar{≥}{\ensuremath{\geq}}
\newunicodechar{≤}{\ensuremath{\leq}}
\newunicodechar{≫}{\ensuremath{\gg}}
\newunicodechar{≪}{\ensuremath{\ll}}
\newunicodechar{∅}{\ensuremath{\emptyset}}
\newunicodechar{⊆}{\ensuremath{\subseteq}}
\newunicodechar{⊂}{\ensuremath{\subset}}
\newunicodechar{∩}{\ensuremath{\cap}}
\newunicodechar{⋂}{\ensuremath{\cap}}
\newunicodechar{∪}{\ensuremath{\cup}}
\newunicodechar{⋃}{\ensuremath{\cup}}
\newunicodechar{⊎}{\ensuremath{\uplus}}
\newunicodechar{∈}{\ensuremath{\in}}
\newunicodechar{∉}{\ensuremath{\not\in}}
\newunicodechar{⊤}{\ensuremath{\top}}
\newunicodechar{⊥}{\ensuremath{\bot}}
\newunicodechar{₀}{\ensuremath{_0}}
\newunicodechar{₁}{\ensuremath{_1}}
\newunicodechar{₂}{\ensuremath{_2}}
\newunicodechar{₃}{\ensuremath{_3}}
\newunicodechar{₄}{\ensuremath{_4}}
\newunicodechar{₅}{\ensuremath{_5}}
\newunicodechar{₆}{\ensuremath{_6}}
\newunicodechar{₇}{\ensuremath{_7}}
\newunicodechar{₈}{\ensuremath{_8}}
\newunicodechar{₉}{\ensuremath{_9}}
\newunicodechar{⁰}{\ensuremath{^0}}
\newunicodechar{¹}{\ensuremath{^1}}
\newunicodechar{²}{\ensuremath{^2}}
\newunicodechar{³}{\ensuremath{^3}}
\newunicodechar{⁴}{\ensuremath{^4}}
\newunicodechar{⁵}{\ensuremath{^5}}
\newunicodechar{⁶}{\ensuremath{^6}}
\newunicodechar{⁷}{\ensuremath{^7}}
\newunicodechar{⁸}{\ensuremath{^8}}
\newunicodechar{⁹}{\ensuremath{^9}}
\newunicodechar{𝔹}{\ensuremath{\mathbb{B}}}
\newunicodechar{ℝ}{\ensuremath{\mathbb{R}}}
\newunicodechar{ℕ}{\ensuremath{\mathbb{N}}}
\newunicodechar{ℂ}{\ensuremath{\mathbb{C}}}
\newunicodechar{ℚ}{\ensuremath{\mathbb{Q}}}
\newunicodechar{𝕋}{\ensuremath{\mathbb{T}}}
\newunicodechar{𝕏}{\ensuremath{\mathbb{X}}}
\newunicodechar{ℤ}{\ensuremath{\mathbb{Z}}}
\newunicodechar{✓}{\checkmark}
\newunicodechar{✗}{\ensuremath{\times}}
\newunicodechar{◊}{\ensuremath{\lozenge}}
\newunicodechar{□}{\ensuremath{\square}}
\newunicodechar{𝓐}{\ensuremath{\mathcal{A}}}
\newunicodechar{𝓑}{\ensuremath{\mathcal{B}}}
\newunicodechar{𝓒}{\ensuremath{\mathcal{C}}}
\newunicodechar{𝓓}{\ensuremath{\mathcal{D}}}
\newunicodechar{𝓔}{\ensuremath{\mathcal{E}}}
\newunicodechar{𝓕}{\ensuremath{\mathcal{F}}}
\newunicodechar{𝓖}{\ensuremath{\mathcal{G}}}
\newunicodechar{𝓗}{\ensuremath{\mathcal{H}}}
\newunicodechar{𝓘}{\ensuremath{\mathcal{I}}}
\newunicodechar{𝓙}{\ensuremath{\mathcal{J}}}
\newunicodechar{𝓚}{\ensuremath{\mathcal{K}}}
\newunicodechar{𝓛}{\ensuremath{\mathcal{L}}}
\newunicodechar{𝓜}{\ensuremath{\mathcal{M}}}
\newunicodechar{𝓝}{\ensuremath{\mathcal{N}}}
\newunicodechar{𝓞}{\ensuremath{\mathcal{O}}}
\newunicodechar{𝓟}{\ensuremath{\mathcal{P}}}
\newunicodechar{𝓠}{\ensuremath{\mathcal{Q}}}
\newunicodechar{𝓡}{\ensuremath{\mathcal{R}}}
\newunicodechar{𝓢}{\ensuremath{\mathcal{S}}}
\newunicodechar{𝓣}{\ensuremath{\mathcal{T}}}
\newunicodechar{𝓤}{\ensuremath{\mathcal{U}}}
\newunicodechar{𝓥}{\ensuremath{\mathcal{V}}}
\newunicodechar{𝓦}{\ensuremath{\mathcal{W}}}
\newunicodechar{𝓧}{\ensuremath{\mathcal{X}}}
\newunicodechar{𝓨}{\ensuremath{\mathcal{Y}}}
\newunicodechar{𝓩}{\ensuremath{\mathcal{Z}}}
\newunicodechar{…}{\ensuremath{\ldots}}
\newunicodechar{∗}{\ensuremath{\ast}}
\newunicodechar{⊢}{\ensuremath{\vdash}}
\newunicodechar{⊧}{\ensuremath{\models}}
\newunicodechar{′}{\ensuremath{'}}
\newunicodechar{″}{\ensuremath{''}}
\newunicodechar{‴}{\ensuremath{'''}}
\newunicodechar{∥}{\ensuremath{\|}}
\newunicodechar{⊕}{\ensuremath{\oplus}}
\newunicodechar{⁺}{\ensuremath{^+}}
\newunicodechar{⊇}{\ensuremath{\supseteq}}
\newunicodechar{∘}{\ensuremath{\circ}}
\newunicodechar{∙}{\ensuremath{\cdot}}
\newunicodechar{⋅}{\ensuremath{\cdot}}
\newunicodechar{≈}{\ensuremath{\approx}}
\newunicodechar{×}{\ensuremath{\times}}
\newunicodechar{∞}{\ensuremath{\infty}}
 
\sloppy

\begin{document}

\title[]{Multiparty Motion Coordination:\\From Choreographies to Robotics Programs}
\author{Rupak Majumdar}
\affiliation{
  \institution{Max Planck Institute for Software Systems}            %
  \city{Kaiserslautern}
  \country{Germany}                    %
}
\email{rupak@mpi-sws.org}          %

\author{Nobuko Yoshida}
\email{n.yoshida@imperial.ac.uk}
\orcid{0000−0002−3925−8557}
\affiliation{%
  \institution{Imperial College London}
  \department{Computing}
  \streetaddress{180 Queen's Gate}
  \city{London}
  \postcode{SW7 2AZ}
\country{United Kingdom}
}
\author{Damien Zufferey}
\orcid{0000-0002-3197-8736}             %
\affiliation{
  \institution{Max Planck Institute for Software Systems}            %
  \city{Kaiserslautern}
  \country{Germany}                    %
}
\email{zufferey@mpi-sws.org}          %

\toggletrue{full}

\begin{abstract}
We present a programming model and typing discipline for complex multi-robot coordination programming.
Our model encompasses both synchronisation through message passing and continuous-time 
dynamic motion primitives in physical space.
We specify \emph{continuous-time motion primitives} in an assume-guarantee 
logic that ensures compatibility of motion primitives as well as collision freedom.
We specify global behaviour of programs in a \emph{choreographic} type system that extends
multiparty session types with jointly executed motion primitives, predicated refinements, 
as well as a \emph{separating conjunction} that allows reasoning about subsets of interacting
robots.
We describe a notion of \emph{well-formedness} for global types 
that ensures motion and communication can be correctly synchronised and provide algorithms for
checking well-formedness, projecting a type, and local type checking.
A well-typed program is \emph{communication safe}, \emph{motion compatible}, and \emph{collision free}.
Our type system provides a compositional approach to ensuring these properties.

We have implemented our model on top of the ROS framework.
This allows us to program multi-robot coordination scenarios 
on top of commercial and custom robotics hardware platforms.
We show through case studies that we can model and statically verify quite
complex manoeuvres involving multiple manipulators and mobile robots---such
examples are beyond the scope of previous approaches.

 \end{abstract}

\begin{CCSXML}
<ccs2012>
<concept>
<concept_id>10010520.10010553.10010554</concept_id>
<concept_desc>Computer systems organization~Robotics</concept_desc>
<concept_significance>500</concept_significance>
</concept>
<concept>
<concept_id>10011007.10010940.10010971.10010972.10010973</concept_id>
<concept_desc>Software and its engineering~Cooperating communicating processes</concept_desc>
<concept_significance>500</concept_significance>
</concept>
<concept>
<concept_id>10011007.10011074.10011099.10011692</concept_id>
<concept_desc>Software and its engineering~Formal software verification</concept_desc>
<concept_significance>500</concept_significance>
</concept>
</ccs2012>
\end{CCSXML}

\ccsdesc[500]{Computer systems organization~Robotics}
\ccsdesc[500]{Software and its engineering~Cooperating communicating processes}
\ccsdesc[500]{Software and its engineering~Formal software verification}
\keywords{Robotics, 
Message-passing,
Session Types and Choreography,
Continuous-time Motion Primitives. 
}  %

\maketitle

\section{Introduction}
\label{sec:intro}

Modern robotics applications are often deployed in safety- or business-critical applications
and formal specifications and reasoning about their correct behaviours is a difficult and challenging problem.
These applications tightly  
integrate computation, communication, control of physical dynamics,
and geometric reasoning.
Developing programming models and frameworks for such applications has been long noted as an important and challenging problem
in robotics \cite{Lozano-Perez}, 
and yet very little support exists today for compositional specification of behaviours or their static enforcement. 

In this paper, we present a programming model for concurrent robotic systems
and a type-based approach to statically analyse programs. 
Our programming model uses \emph{choreographies}---global protocols which allow implementing distributed and concurrent 
components without a central control---to compositionally specify and statically verify both 
message-based communications and jointly executed motion between robotics components in the physical world.
Our choreographies 
extend \emph{multiparty session types} \cite{HYC08}
with \emph{dynamic motion primitives}, which represent 
pre-verified motions that the robots can execute in the physical world.
We compile well-typed programs into robotics platforms.
We show through a number of nontrivial usecases 
how our programming model can be used to design and implement
correct-by-construction, complex, robotic applications on top 
of commercial and custom-build robotic hardware.

Our starting point is the theory of motion session types
\cite{ECOOP19}, which forms 
a type discipline based on global types with simple, discrete-time motion primitives. 
For many applications, we found this existing model 
too restrictive: it requires that all components agree on a pre-determined, global,
discrete clock and it forces \emph{global synchronisation} among all robots at the fixed interval
determined by the ``tick'' of the global clock.
For example, two robots engaged in independent
activities in different parts of a workspace must 
nevertheless stop their motion and synchronise every tick.
This leads to communication-heavy programs in which the programmer must either pick
a global clock that ensures every motion primitive can finish within one tick
(making the system very slow) or that every motion primitive is interruptible
(all robots stop every tick and coordinate).
Our new model enhances the scope and applicability of motion sesstion types to robots: 
we go from the global and discrete clock to \emph{continuous
behaviours} over time, allowing complex synchronisations as well as
\emph{frame separation} between independent subgroups of robots.
That is, the programmer does
not need to think about global ticks when writing the program. 
Instead, they think in terms of motion primitives and the type system enforces that concurrent composition of 
motions is well-formed and that trajectories exist in a global timeline.

Reasoning simultaneously about dynamics and message-based
synchronisation is difficult because time is global and can be used as
an implicit broadcast synchronisation mechanism.  
The complexity of our model arises from the need to ensure that every component is
simultaneously ready to let time progress (through dynamics) or 
ready to send or receive messages (property \emph{communication
  safety}).  At the same time, we must ensure that systems 
are able to correctly execute motion primitives
(property \emph{motion compatibility});
and jointly executed
trajectories are separated in space and time (property \emph{collision
  freedom}).  
Our verification technique is \emph{static}: if a program type checks, then
every execution of the system satisfies communication safety, motion compatibility, and collision freedom. 
We manage the complexity of reasoning about the interplay between dynamics and concurrency
through a separation of concerns.

First, we specify \emph{continuous-time trajectories} as \emph{motion primitives}.  
Since the dynamics of different components can
be coupled, the proof system uses 
an \emph{assume-guarantee proof system}
\cite{AbadiLamport93,ChandyMisra88,Jones83,DBLP:journals/pieee/NuzzoSBGV15} on continuous time
processes to relate an abstract motion primitive to the original
dynamics.  The assume guarantee contracts decouple the dynamics.  
Additionally, the proof system also checks that trajectories
ensure disjointness of the use of geometric space over time.

Second, we interpret choreographic specifications in continuous time, 
and extend the existing formalism in \cite{ECOOP19} with \emph{predicate refinements}---to reason about
permissions (i.e., what parts of the state space an individual robot can access without colliding into a different robot). 
This is required for motion compatibility and collision freedom. 
We also introduce a \emph{separating conjunction} operator to reason about disjoint frames.  
The combination of message passing and dynamics makes static verification challenging.
We introduce a notion of \emph{well-formedness} of choreographies 
that ensures motion and communication can
be synchronised and disallows, e.g., behaviours when a message is blocked because a motion
cannot be interrupted.
We give an algorithm for checking well-formedness using a dataflow
analysis on choreographies.
We show that well-formed types can be projected on to end-components
and provide a local type checking that allows refinement between
motion primitives. %

Checking compatibility and collision freedom reduces to validity queries in the underlying logic.
Interestingly, the separating conjunction allows subgroups of robots to
interact---through motion and messages---disjointly from other subgroups;
this reduces the need for global communication in our implementations.

We compile well-typed programs into programs in the PGCD \cite{PGCD} and ROS \cite{ROS} frameworks.
We have used our implementation to program and verify a number of complex robotic coordination and manipulation
tasks.
Our implementation uses both commercial platforms (e.g., the Panda 7DOF manipulator, the BCN3D MOVEO arm)
and custom-built components (mobile carts).
In our experience, our programming model and typing discipline are sufficient to specify 
quite complex manoeuvres between multiple robots and to obtain verified implementations.
Moreover, the use of choreographies and the separating conjunction are crucial in reducing verification effort: 
without the separating conjunction, verification times out on more complex examples. 
Our implementation uses SMT solvers to discharge validity queries arising out of the proof system;
our initial experience suggests that while the underlying theories (non-linear arithmetic) are 
complex, it is possible to semi-automatically prove quite involved specifications. 

\paragraph{\bf Contributions and Outline}
This paper provides a static compositional modelling,
verification, and implementation framework through behavioural 
specifications for
concurrent robotics applications 
that involve reasoning about message
passing, continuous control, and geometric separation.
We manage this complexity by decoupling dynamics and message
passing and enables us to specify and implement robotic applications on top of commercial and custom-build robotic hardware.
Our framework coherently integrates programming languages techniques,
session type theories, 
and static analysis techniques; 
this enables us to model \emph{continuous behaviours} over time in the presence of
complex synchronisations between independent subgroups of robots.
We extend the global types in \cite{ECOOP19}
  to \emph{choreographies} including 
  key constructs such as framing and predicate refinements,
  together with separation conjunctions,  
  and integrate the typing system with an assume-guarantee proof system. We
  implement our verification system using a new set of robots
  (extending \cite{ECOOP19,PGCD}) in order to 
program and verify complex coordinations and manipulation tasks.

The rest of the paper is organised as follows.
\begin{description}
\item[\underline{\sc\Sec~\ref{sec:example}}:]
  We first motivate our design for robotics specifications, 
and explain the correctness
  criteria -- \emph{communication safety},
  \emph{motion compatibility}, and
  \emph{collision freedom} with an example.
\item[\underline{\sc\Sec~\ref{sec:calculus}:}]
We introduce a \emph{multiparty motion session calculus} with 
\emph{dynamic abstract motion primitives};
we provide an assume guarantee proof
system to construct abstract motion primitives 
for maintaining geometric
separation in space and time for concurrently executed motion
primitives. We then
prove Theorem~\ref{th:motion-compat} (\emph{Motion Compatibility}).  
\item[\underline{\sc\Sec~\ref{sec:proof-system}:}]
  We provide \emph{choreographic specifications}
enriched with dynamic motion primitives and separation operator,  
and define their dataflow analysis. 
We propose a typing system and prove 
its main properties, 
Theorem~\ref{th:SR} (\emph{Subject Reduction}),
Theorem~\ref{th:progress} (\emph{Communication and Motion Progress} and
\emph{Collision-free Progress}). 
\item[\underline{\sc\Sec~\ref{sec:eval}:}]  
We describe our implementation, evaluation, and case studies. 
Our evaluation demonstrates that our framework allows specifying
complex interactions and verifying examples beyond the scope of previous work.
\item[\underline{\sc\Sec~\ref{sec:related} and \Sec~\ref{sec:discussion}:}]
  We discuss related work and conclude with 
  a number of open challenges in reasoning about robotics applications. 
\end{description}
Putting it all together, we obtain a compositional specification and 
implementation framework for concurrent robotics applications.
Our paper is a starting point rather than the final word on robot programming.
Indeed, we make many simplifying assumptions about world modeling,
sensing and filtering, and (distributed) feedback control and planning.
Even with these simplifications, the theoretical development is non-trivial.
With a firm understanding of the basic theory, we hope that future work
will lift many of the current limitations.

\begin{figure}[t]
\hspace*{-1.5cm}  
\begin{minipage}{0.45\linewidth}

\tikzset {_mwlgzwu4u/.code = {\pgfsetadditionalshadetransform{ \pgftransformshift{\pgfpoint{89.1 bp } { -108.9 bp }  }  \pgftransformscale{1.32 }  }}}
\pgfdeclareradialshading{_4euyx15e0}{\pgfpoint{-72bp}{88bp}}{rgb(0bp)=(1,1,1);
rgb(0bp)=(1,1,1);
rgb(25bp)=(0,0,0);
rgb(400bp)=(0,0,0)}

\tikzset {_7ai0opi17/.code = {\pgfsetadditionalshadetransform{ \pgftransformshift{\pgfpoint{89.1 bp } { -108.9 bp }  }  \pgftransformscale{1.32 }  }}}
\pgfdeclareradialshading{_co9gpcsuk}{\pgfpoint{-72bp}{88bp}}{rgb(0bp)=(1,1,1);
rgb(0bp)=(1,1,1);
rgb(25bp)=(0,0,0);
rgb(400bp)=(0,0,0)}

\tikzset {_eoe8gdy8h/.code = {\pgfsetadditionalshadetransform{ \pgftransformshift{\pgfpoint{89.1 bp } { -108.9 bp }  }  \pgftransformscale{1.32 }  }}}
\pgfdeclareradialshading{_dhd3hjkey}{\pgfpoint{-72bp}{88bp}}{rgb(0bp)=(1,1,1);
rgb(0bp)=(1,1,1);
rgb(25bp)=(0,0,0);
rgb(400bp)=(0,0,0)}

\tikzset {_cis49r535/.code = {\pgfsetadditionalshadetransform{ \pgftransformshift{\pgfpoint{89.1 bp } { -108.9 bp }  }  \pgftransformscale{1.32 }  }}}
\pgfdeclareradialshading{_9wx7ylssl}{\pgfpoint{-72bp}{88bp}}{rgb(0bp)=(1,1,1);
rgb(0bp)=(1,1,1);
rgb(25bp)=(0,0,0);
rgb(400bp)=(0,0,0)}
\tikzset{every picture/.style={line width=0.75pt}} %

\begin{tikzpicture}[x=0.75pt,y=0.75pt,yscale=-0.6,xscale=0.6]
\draw    (177,145) -- (400.5,146) ;
\draw   (256.5,128) -- (318.5,128) -- (318.5,138) -- (256.5,138) -- cycle ;
\draw  [fill={rgb, 255:red, 255; green, 255; blue, 255 }  ,fill opacity=1 ] (263.5,138.5) .. controls (263.5,134.08) and (267.08,130.5) .. (271.5,130.5) .. controls (275.92,130.5) and (279.5,134.08) .. (279.5,138.5) .. controls (279.5,142.92) and (275.92,146.5) .. (271.5,146.5) .. controls (267.08,146.5) and (263.5,142.92) .. (263.5,138.5) -- cycle ;
\draw  [fill={rgb, 255:red, 255; green, 255; blue, 255 }  ,fill opacity=1 ] (296.5,139) .. controls (296.5,134.44) and (300.19,130.75) .. (304.75,130.75) .. controls (309.31,130.75) and (313,134.44) .. (313,139) .. controls (313,143.56) and (309.31,147.25) .. (304.75,147.25) .. controls (300.19,147.25) and (296.5,143.56) .. (296.5,139) -- cycle ;
\draw   (434,146) -- (428,126) -- (403.74,126) -- (397.74,146) -- cycle ;
\path  [shading=_4euyx15e0,_mwlgzwu4u] (419.45,126.5) .. controls (419.45,124.22) and (417.85,122.37) .. (415.87,122.37) .. controls (413.89,122.37) and (412.29,124.22) .. (412.29,126.5) .. controls (412.29,128.78) and (413.89,130.63) .. (415.87,130.63) .. controls (417.85,130.63) and (419.45,128.78) .. (419.45,126.5)(424.81,126.5) .. controls (424.81,121.25) and (420.81,117) .. (415.87,117) .. controls (410.93,117) and (406.93,121.25) .. (406.93,126.5) .. controls (406.93,131.75) and (410.93,136) .. (415.87,136) .. controls (420.81,136) and (424.81,131.75) .. (424.81,126.5) ; %
 \draw   (419.45,126.5) .. controls (419.45,124.22) and (417.85,122.37) .. (415.87,122.37) .. controls (413.89,122.37) and (412.29,124.22) .. (412.29,126.5) .. controls (412.29,128.78) and (413.89,130.63) .. (415.87,130.63) .. controls (417.85,130.63) and (419.45,128.78) .. (419.45,126.5)(424.81,126.5) .. controls (424.81,121.25) and (420.81,117) .. (415.87,117) .. controls (410.93,117) and (406.93,121.25) .. (406.93,126.5) .. controls (406.93,131.75) and (410.93,136) .. (415.87,136) .. controls (420.81,136) and (424.81,131.75) .. (424.81,126.5) ; %

\path  [shading=_co9gpcsuk,_7ai0opi17] (394.31,101.5) .. controls (394.31,99.22) and (392.71,97.37) .. (390.73,97.37) .. controls (388.75,97.37) and (387.15,99.22) .. (387.15,101.5) .. controls (387.15,103.78) and (388.75,105.63) .. (390.73,105.63) .. controls (392.71,105.63) and (394.31,103.78) .. (394.31,101.5)(399.67,101.5) .. controls (399.67,96.25) and (395.67,92) .. (390.73,92) .. controls (385.79,92) and (381.78,96.25) .. (381.78,101.5) .. controls (381.78,106.75) and (385.79,111) .. (390.73,111) .. controls (395.67,111) and (399.67,106.75) .. (399.67,101.5) ; %
 \draw   (394.31,101.5) .. controls (394.31,99.22) and (392.71,97.37) .. (390.73,97.37) .. controls (388.75,97.37) and (387.15,99.22) .. (387.15,101.5) .. controls (387.15,103.78) and (388.75,105.63) .. (390.73,105.63) .. controls (392.71,105.63) and (394.31,103.78) .. (394.31,101.5)(399.67,101.5) .. controls (399.67,96.25) and (395.67,92) .. (390.73,92) .. controls (385.79,92) and (381.78,96.25) .. (381.78,101.5) .. controls (381.78,106.75) and (385.79,111) .. (390.73,111) .. controls (395.67,111) and (399.67,106.75) .. (399.67,101.5) ; %

\draw [line width=2.25]    (415.87,117) -- (399.67,101.5) ;
\draw [line width=2.25]    (406.93,126.5) -- (390.73,111) ;
\draw [line width=2.25]    (385.17,95) -- (366.8,102) ;
\draw [line width=2.25]    (383.24,108) -- (368.73,112) ;
\draw    (365.83,97) -- (370.67,117) ;
\draw  [fill={rgb, 255:red, 0; green, 0; blue, 0 }  ,fill opacity=1 ] (357.5,102.59) -- (365.53,98.71) -- (366.35,102.1) -- cycle ;
\draw  [fill={rgb, 255:red, 0; green, 0; blue, 0 }  ,fill opacity=1 ] (360.4,116.59) -- (368.43,112.71) -- (369.25,116.1) -- cycle ;

\draw   (139.5,145) -- (145.5,125) -- (171.1,125) -- (177.1,145) -- cycle ;
\path  [shading=_dhd3hjkey,_eoe8gdy8h] (154.59,125.5) .. controls (154.59,123.33) and (156.25,121.57) .. (158.3,121.57) .. controls (160.35,121.57) and (162.01,123.33) .. (162.01,125.5) .. controls (162.01,127.67) and (160.35,129.43) .. (158.3,129.43) .. controls (156.25,129.43) and (154.59,127.67) .. (154.59,125.5)(149.03,125.5) .. controls (149.03,120.25) and (153.18,116) .. (158.3,116) .. controls (163.42,116) and (167.58,120.25) .. (167.58,125.5) .. controls (167.58,130.75) and (163.42,135) .. (158.3,135) .. controls (153.18,135) and (149.03,130.75) .. (149.03,125.5) ; %
 \draw   (154.59,125.5) .. controls (154.59,123.33) and (156.25,121.57) .. (158.3,121.57) .. controls (160.35,121.57) and (162.01,123.33) .. (162.01,125.5) .. controls (162.01,127.67) and (160.35,129.43) .. (158.3,129.43) .. controls (156.25,129.43) and (154.59,127.67) .. (154.59,125.5)(149.03,125.5) .. controls (149.03,120.25) and (153.18,116) .. (158.3,116) .. controls (163.42,116) and (167.58,120.25) .. (167.58,125.5) .. controls (167.58,130.75) and (163.42,135) .. (158.3,135) .. controls (153.18,135) and (149.03,130.75) .. (149.03,125.5) ; %

\path  [shading=_9wx7ylssl,_cis49r535] (180.66,100.5) .. controls (180.66,98.33) and (182.32,96.57) .. (184.37,96.57) .. controls (186.42,96.57) and (188.08,98.33) .. (188.08,100.5) .. controls (188.08,102.67) and (186.42,104.43) .. (184.37,104.43) .. controls (182.32,104.43) and (180.66,102.67) .. (180.66,100.5)(175.1,100.5) .. controls (175.1,95.25) and (179.25,91) .. (184.37,91) .. controls (189.5,91) and (193.65,95.25) .. (193.65,100.5) .. controls (193.65,105.75) and (189.5,110) .. (184.37,110) .. controls (179.25,110) and (175.1,105.75) .. (175.1,100.5) ; %
 \draw   (180.66,100.5) .. controls (180.66,98.33) and (182.32,96.57) .. (184.37,96.57) .. controls (186.42,96.57) and (188.08,98.33) .. (188.08,100.5) .. controls (188.08,102.67) and (186.42,104.43) .. (184.37,104.43) .. controls (182.32,104.43) and (180.66,102.67) .. (180.66,100.5)(175.1,100.5) .. controls (175.1,95.25) and (179.25,91) .. (184.37,91) .. controls (189.5,91) and (193.65,95.25) .. (193.65,100.5) .. controls (193.65,105.75) and (189.5,110) .. (184.37,110) .. controls (179.25,110) and (175.1,105.75) .. (175.1,100.5) ; %

\draw [line width=2.25]    (158.3,116) -- (175.1,100.5) ;
\draw [line width=2.25]    (167.58,125.5) -- (184.37,110) ;
\draw [line width=2.25]    (190.14,94) -- (209.19,101) ;
\draw [line width=2.25]    (192.14,107) -- (207.19,111) ;
\draw    (210.19,96) -- (205.18,116) ;
\draw  [fill={rgb, 255:red, 0; green, 0; blue, 0 }  ,fill opacity=1 ] (218.83,101.59) -- (210.51,97.71) -- (209.65,101.11) -- cycle ;
\draw  [fill={rgb, 255:red, 0; green, 0; blue, 0 }  ,fill opacity=1 ] (215.82,115.59) -- (207.5,111.71) -- (206.64,115.11) -- cycle ;

\draw  [fill={rgb, 255:red, 126; green, 211; blue, 33 }  ,fill opacity=1 ] (446.74,133.54) -- (458.24,133.54) -- (458.24,143.54) -- (446.74,143.54) -- cycle ;
\draw  [fill={rgb, 255:red, 126; green, 211; blue, 33 }  ,fill opacity=1 ] (451.24,123.54) -- (462.74,123.54) -- (462.74,133.54) -- (451.24,133.54) -- cycle ;
\draw  [fill={rgb, 255:red, 126; green, 211; blue, 33 }  ,fill opacity=1 ] (458.24,133.54) -- (469.74,133.54) -- (469.74,143.54) -- (458.24,143.54) -- cycle ;
\draw  [fill={rgb, 255:red, 208; green, 2; blue, 27 }  ,fill opacity=1 ] (128.35,133) -- (116.43,133) -- (116.43,143) -- (128.35,143) -- cycle ;
\draw  [fill={rgb, 255:red, 208; green, 2; blue, 27 }  ,fill opacity=1 ] (116.43,133) -- (104.5,133) -- (104.5,143) -- (116.43,143) -- cycle ;
\draw  [fill={rgb, 255:red, 208; green, 2; blue, 27 }  ,fill opacity=1 ] (122.13,122) -- (110.2,122) -- (110.2,132) -- (122.13,132) -- cycle ;
\draw  [fill={rgb, 255:red, 126; green, 211; blue, 33 }  ,fill opacity=1 ] (281.74,117.54) -- (293.24,117.54) -- (293.24,127.54) -- (281.74,127.54) -- cycle ;
\draw   (247.5,107.5) -- (256.3,101) -- (256.3,104.25) -- (269.5,104.25) -- (269.5,110.75) -- (256.3,110.75) -- (256.3,114) -- cycle ;
\draw   (328.49,107.16) -- (319.88,113.94) -- (319.79,110.67) -- (306.59,111.04) -- (306.41,104.52) -- (319.61,104.15) -- (319.51,100.88) -- cycle ;

\draw (405,157) node   [align=left] {$\displaystyle \mathsf{GRobot}$};
\draw (292,157) node   [align=left] {$\displaystyle \mathsf{Cart}$};
\draw (158,157) node   [align=left] {$\displaystyle \mathsf{RRobot}$};

\end{tikzpicture}

 \includegraphics[width=0.9\linewidth]{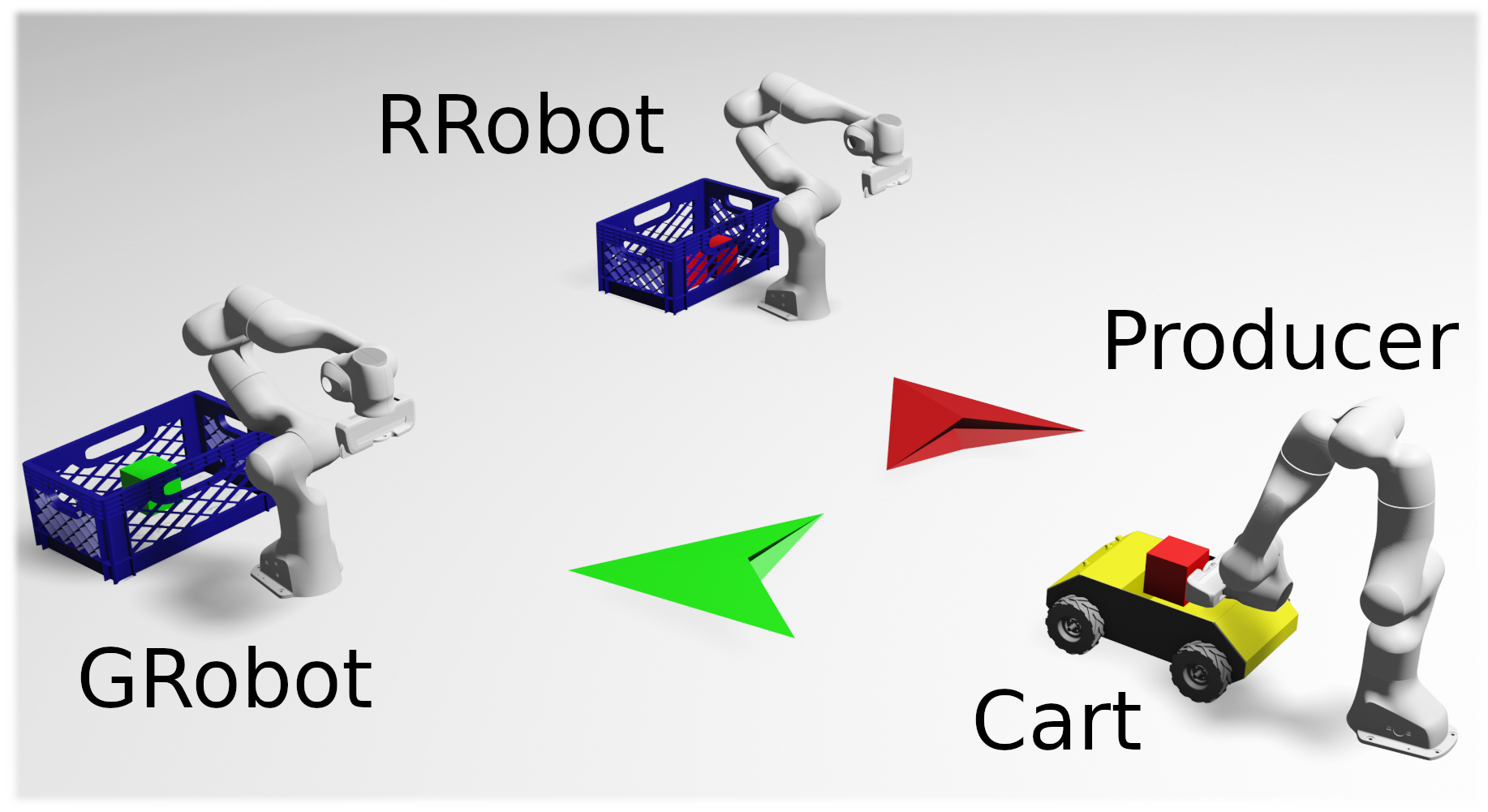}
\end{minipage}
\begin{minipage}[c]{0.40\linewidth}

\tikzset{every picture/.style={line width=0.75pt}} %

\begin{tikzpicture}[x=0.75pt,y=0.75pt,yscale=-0.6,xscale=1]
\draw [line width=2.25]    (200.97,24.47) -- (198.46,300) ;
\draw [shift={(198.42,304)}, rotate = 270.52] [color={rgb, 255:red, 0; green, 0; blue, 0 }  ][line width=2.25]    (17.49,-5.26) .. controls (11.12,-2.23) and (5.29,-0.48) .. (0,0) .. controls (5.29,0.48) and (11.12,2.23) .. (17.49,5.26)   ;
\draw [line width=2.25]    (310.21,25.97) -- (307.69,298.49) ;
\draw [shift={(307.66,302.49)}, rotate = 270.53] [color={rgb, 255:red, 0; green, 0; blue, 0 }  ][line width=2.25]    (17.49,-5.26) .. controls (11.12,-2.23) and (5.29,-0.48) .. (0,0) .. controls (5.29,0.48) and (11.12,2.23) .. (17.49,5.26)   ;
\draw  [fill={rgb, 255:red, 238; green, 10; blue, 10 }  ,fill opacity=1 ] (193.32,76.46) -- (205.57,76.46) -- (205.57,144.27) -- (193.32,144.27) -- cycle ;
\draw  [fill={rgb, 255:red, 184; green, 233; blue, 134 }  ,fill opacity=1 ] (205.57,154.82) -- (194.34,154.82) -- (194.34,206.8) -- (205.57,206.8) -- cycle ;
\draw [line width=2.25]    (403.11,25.22) -- (400.6,297.74) ;
\draw [shift={(400.56,301.74)}, rotate = 270.53] [color={rgb, 255:red, 0; green, 0; blue, 0 }  ][line width=2.25]    (17.49,-5.26) .. controls (11.12,-2.23) and (5.29,-0.48) .. (0,0) .. controls (5.29,0.48) and (11.12,2.23) .. (17.49,5.26)   ;
\draw  [fill={rgb, 255:red, 30; green, 211; blue, 231 }  ,fill opacity=1 ] (315.82,78.72) -- (303.57,78.72) -- (303.57,145.77) -- (315.82,145.77) -- cycle ;
\draw    (200.46,51.59) -- (307.7,59.73) ;
\draw [shift={(309.7,59.88)}, rotate = 184.34] [color={rgb, 255:red, 0; green, 0; blue, 0 }  ][line width=0.75]    (10.93,-3.29) .. controls (6.95,-1.4) and (3.31,-0.3) .. (0,0) .. controls (3.31,0.3) and (6.95,1.4) .. (10.93,3.29)   ;
\draw    (200.46,62.14) -- (402.65,77.06) ;
\draw [shift={(404.64,77.21)}, rotate = 184.22] [color={rgb, 255:red, 0; green, 0; blue, 0 }  ][line width=0.75]    (10.93,-3.29) .. controls (6.95,-1.4) and (3.31,-0.3) .. (0,0) .. controls (3.31,0.3) and (6.95,1.4) .. (10.93,3.29)   ;
\draw    (201.48,144.27) -- (308.73,152.4) ;
\draw [shift={(310.72,152.55)}, rotate = 184.34] [color={rgb, 255:red, 0; green, 0; blue, 0 }  ][line width=0.75]    (10.93,-3.29) .. controls (6.95,-1.4) and (3.31,-0.3) .. (0,0) .. controls (3.31,0.3) and (6.95,1.4) .. (10.93,3.29)   ;
\draw  [fill={rgb, 255:red, 65; green, 117; blue, 5 }  ,fill opacity=1 ] (314.8,155.57) -- (303.57,155.57) -- (303.57,207.56) -- (314.8,207.56) -- cycle ;
\draw    (309.7,209.06) -- (200.41,220.9) ;
\draw [shift={(198.42,221.12)}, rotate = 353.82] [color={rgb, 255:red, 0; green, 0; blue, 0 }  ][line width=0.75]    (10.93,-3.29) .. controls (6.95,-1.4) and (3.31,-0.3) .. (0,0) .. controls (3.31,0.3) and (6.95,1.4) .. (10.93,3.29)   ;
\draw  [fill={rgb, 255:red, 238; green, 10; blue, 10 }  ,fill opacity=1 ] (193.32,222.63) -- (205.57,222.63) -- (205.57,287.42) -- (193.32,287.42) -- cycle ;
\draw  [fill={rgb, 255:red, 30; green, 211; blue, 231 }  ,fill opacity=1 ] (312.76,221.87) -- (301.53,221.87) -- (301.53,286.67) -- (312.76,286.67) -- cycle ;
\draw  [fill={rgb, 255:red, 30; green, 211; blue, 231 }  ,fill opacity=1 ] (408.73,79.47) -- (395.45,79.47) -- (395.45,286.67) -- (408.73,286.67) -- cycle ;
\draw  [dash pattern={on 4.5pt off 4.5pt}]  (153.5,76.46) -- (446.5,78.72) ;
\draw  [dash pattern={on 4.5pt off 4.5pt}]  (153.5,144.27) -- (346.45,145.77) ;
\draw  [dash pattern={on 4.5pt off 4.5pt}]  (154.52,206.05) -- (347.47,207.56) ;
\draw  [dash pattern={on 4.5pt off 4.5pt}]  (159.63,286.67) -- (426.08,287.42) ;

\draw (201.99,12.41) node  [font=\large] [align=left] {Cart};
\draw (401.07,13.16) node  [font=\large] [align=left] {RRobot};
\draw (260.31,43.58) node  [font=\large] [align=left] {$\displaystyle \mathit{arrive}$};
\draw (311.23,12.41) node  [font=\large] [align=left] {GRobot};
\draw (355.28,61.15) node  [font=\large] [align=left] {$\displaystyle \mathit{free}$};
\draw (259.18,160.32) node  [font=\large] [align=left] {$\displaystyle \mathit{ready}$};
\draw (257.25,223.8) node  [font=\large] [align=left] {$\displaystyle \mathit{ok}$};
\draw (358.41,154.02) node [anchor=north west][inner sep=0.75pt]  [font=\Huge] [align=left] {*};
\draw (202,127) node [anchor=north west][inner sep=0.75pt]  [font=\large] [align=left] {$\displaystyle {\displaystyle \rightsquigarrow }$};
\draw (202,269) node [anchor=north west][inner sep=0.75pt]  [font=\large] [align=left] {$\displaystyle \rightsquigarrow $};
\draw (286,189) node [anchor=north west][inner sep=0.75pt]  [font=\large] [align=left] {$\displaystyle \leftsquigarrow $};

\end{tikzpicture}
 \end{minipage}
\caption{An example with a cart and two robot arms
  (left: schematic on top, actual
robots used in experiments in \Sec~\ref{sec:eval} bottom) and a partial message sequence
for one behavior of the system (right). 
Message exchanges take zero time but are shown with downward sloping arrows for readability.
The colored boxes denote motion primitives and their execution allows physical time to pass. 
The light blue box denotes the motion primitive ``$\WORKM$''
(in Figure~\ref{fig:basic-example}),
red denotes ``$\MOVEM$'', light green ``$\IDLEM$'', dark green ``$\PICKM$''.
The dotted horizontal lines show synchronisation
of global time across components. 
The use of the separating conjunction ``$\ast$'' ensures that time need not
be synchronised with the red robot
at intermediate synchronisations between the cart and the green robot.
At each point, at most one motion primitive is non-interruptible (shown with ``$\delayed$'') %
\label{fig:basic_example_fig}}
\end{figure}

\begin{figure*}[t]
{
  \small
\begin{align*}
\mu & \ty. \!\!\!\!\!
 & 
  \left( \begin{array}{l}
     \left( 
          \begin{array}{l}  
                    \colorlet{oldcolor}{.} \color{blue}\boxed{\color{oldcolor}\Cart\to\Green:\ARRIVE.\Cart\to\Red:\FREE}.  \\[5px]
                    \left(\begin{array}{cc}
                         \left( \colorlet{oldcolor}{.} \color{green} \boxed{\color{oldcolor} \begin{array}{l}
                             \motion{\dtsess}{\Cart:\MOVEM^\delayed(\mathsf{point}),\Green:\WORKM^\instantaneous}.\\
                             \Cart\to\Green:\READY.\\
                              \motion{\dtsess}{\Cart:\IDLEM^\instantaneous,\Green:\PICKM^\delayed}.\\ 
                             \Green\to\Cart:\OK.\\
                             (\motion{\dtsess}{\Cart:\MOVEM^\delayed(\mathsf{point})} ~\GLPAR~ \motion{\dtsess}{\Green:\WORKM^\instantaneous})
                         \end{array}}  \color{oldcolor} \right)
                      & \!\!\GLPAR~ \motion{\dtsess}{\Red:\WORKM^\instantaneous}
                      \end{array} \right)
 \end{array}\right) \\
 + \\
 \big( \text{\quad as the above swapping $\Green$ with $\Red$\quad } \big) 
 \end{array} \right).\ty
\end{align*}
}
\vspace{-3mm}
\caption{Global choreography for the example\label{fig:basic-example}}
\vspace*{-4mm}
\end{figure*}
 
\section{Motivating Example}
\label{sec:example}
In this section, we show a coordination example and use it to motivate the different choices of our programming model and
choreography specifications.
We start with a simple protocol.
Consider a scenario where a cart shuttles (based on some unmodeled criterion) 
between two robotic arms, ``red'' and ``green'' (corresponding, e.g.,
to two different processing units).
In each round, the cart lets the robots know of its choice and then moves to the chosen robot.
On reaching its destination, the cart signals to the arm that it has arrived, and waits for the arm
to finish processing.
Meanwhile, the other arm can continue its own work independently.
When the arm finishes its processing, it signals the cart that processing is complete.
The cart moves back and then repeats the cycle.
Figure~\ref{fig:basic_example_fig} shows a schematic of the system as well as a sample
sequence of messages in each cycle.

\myparagraph{Multiparty Motion Choreographies}
We specify the global behaviour of a program using \emph{choreographies}, which describe 
the allowed sequence of global message exchanges and joint motion primitives as types for programs.
The choreography for our example is shown in Figure~\ref{fig:basic-example}.
It extends session types \cite{BLY2015,BYY2014,BMVY2019}
with joint motion primitives similarly to motion session types \cite{ECOOP19}
and a \emph{separating conjunction} $\GLPAR$. 

{\bf\em (1) Messages and motion primitives.} 
The choreography in Figure~\ref{fig:basic-example} is a
{\bf\emph{recursion}} ($\mu \ty.G$),
that makes a {\bf\emph{choice}} ($G_1 + G_2$) between two possibilities, 
corresponding to the cart interacting with the green or with the red robots (blue box). 
{\bf\em Message exchange} is specified by 
$\pp\to\pq:\ell$ 
which is a flow of a message labeled $\ell$
from process $\pp$ to process $\pq$.
{\bf\em Joint motion primitives} $\motion{\dtsess}{\pp:\ma, \pp':\ma', \cdots}$
specify that the processes $\pp$, $\pp'$, etc.\ jointly execute
motion primitives $\ma$, $\ma'$, etc.\ for the same amount of time.
Global time is a global synchroniser; the type system makes sure that
there is a consistent execution
that advances time in the same way for every process.

With just the construct for joint motion primitives, every component is forced to globally
synchronise in time, preventing the robots to work independently in physically isolated spaces. 
We would like to specify, for example, that when the cart and the green robot $\Green$ is
interacting, the red robot $\Red$ can independently perform its work without additional
synchronisations.

{\bf\em (2) Separating conjunctions.} 
To overcome this shortcoming,
we introduce the {\bf\emph{separating conjunction}} $\GLPAR$ (a
novel addition from previous session types \cite{ECOOP19,BLY2015,BYY2014,BMVY2019}, explained below),
which decomposes the participants into
\emph{disjoint} subgroups, in terms of communication, dynamics, and use of geometric space over time,
each of which can proceed independently until a merge point.

The core interaction between the cart and an arm happens within the green box.
Let us focus on the interaction when the green robot is picked.
In this case, the cart moves to the green robot (while the green arm performs internal work), 
then synchronises with a $\READY$ message, and then idles while the green arm picks an object from the cart.
When the arm is done, it sends an $\OK$ message, the cart moves back while the arm does internal work,
and the protocol round ends.
Meanwhile, the red robot can independently perform its work without synchronising with either the cart
or the green robot. 
The other branch of $+$ is identical, swapping the roles of the red and the green robots.

The separating conjunction makes it easy to specify physically and logically \emph{independent}
portions of a protocol.
Without it, we would be forced to send a message to the other (red) arm
every time the cart and the green arm synchronised, even if the red arm proceeded independently.
This would be required because time is global, and we would have to ensure that all motion primitives
execute for exactly the same duration.
Obviously, this leads to unwanted global synchronisations.

{\bf\em (3) Interruptible and non-interruptible motions.} 
Our type system ensures that messages and motions alternate in the correct way.
This requires differentiating between motion primitives
that are {\bf\emph{interruptible}} (marked by $\instantaneous$) 
from those that are not interruptible.
An interruptible motion primitive is one that can be interrupted by a message receipt.
For example, $\IDLEM$ or $\WORKM$ in our example are interruptible.
A {\bf\emph{non-interruptible}} motion primitive (marked by $\delayed$), on the other hand, should not be ``stopped'' by an external
message.
In order to ensure proper synchronisation between motion primitives and messages, 
we annotate motion primitives with $\instantaneous$ or $\delayed$, which specify whether a motion can be
interrupted by a message receipt or not.
The type system checks that there is, at any time, at most one non-interruptible motion primitive,
and moreover, the process executing that motion is the next one to send a synchronising message.
This ensures that motion is compatible with synchronisation.

The assumption that there is at most one non-interruptible motion primitive is a restriction; it rules
out complex multi-robot maneuvres.
However, there are already many ``loosely coupled'' systems that satisfy the assumption and it allows
us to develop the (already complicated) theory without introducing distributed feedback control strategies to the mix.
We leave lifting the assumption to future work.  

\myparagraph{Processes: Motion Primitives + Messages}
We model robotic systems as concurrent processes in a variant of 
the session $\pi$-calculus for multiparty interactions \cite{YG2020}. 
Our calculus (defined in \Sec~\ref{sec:calculus}) abstracts away from the sequential operations and only considers the \emph{synchronisation behaviour} and
the \emph{physical state changes}.
Synchronisation is implemented through synchronous message exchanges between 
individual physical components (participants). 
Between message exchanges, each process executes \emph{motion primitives}:
controller code that implements an actual robot motion.
Motion primitives affect state changes in the physical world.

\myparagraph{Correctness Criteria}
Our type system prevents a well-typed implementation from ``going wrong'' in the following ways. 

\textbf{\emph{Communication Safety.}} 
We provide a choreography type system to ensure that programs are \emph{communication safe} and \emph{deadlock free}:
a component does not get stuck sending a message with no recipient or waiting for a non-existent
message.
Ensuring communication safety is trickier for robotics programs because components may not be ready to receive messages 
because they are in the middle of executing a motion primitive that cannot be interrupted.

\textbf{\emph{Motion Compatibility.}}
We check that programs are able to \emph{jointly} execute motion primitives.
The controllers implementing motion primitives implement control actions for coupled dynamical systems
and we have to check that controllers for different components can be run together.

\textbf{\emph{Collision Freedom.}}
Components occupy 3D space; a correct implementation must ensure no geometric collision among objects.
For example, we must ensure that the robot arms do not hit the cart when the cart is moving in their workspace. 
Our motion compatibility check ensures the geometric footprint of each component remains disjoint from others
at all points in time.

\myparagraph{Type Checking} 
The type checking algorithm has three parts.
(1) First, we use an {\bf\emph{assume guarantee proof system}} to ensure jointly executed motion primitives are
{\bf\emph{compatible}}: they allow a global trajectory for all the robots in the system and ensure
there is no collision between components.
(2) Second, we introduce a notion of {\bf\emph{well-formedness}} for choreographies. 
Well-formed choreographies ensure motion and message exchanges can be globally synchronised and that
joint trajectories of the system are well-defined and collision free.
(3) Finally, as in standard session types, we {\bf\emph{project a global choreography to its end-points}}, yielding a local
type that can be checked for each process.
The challenge is to manage the delicate interplay between time, motion, message exchanges, choices and the separating
conjunction.
This makes the well-formedness check more sophisticated than other choreography type systems.

\section{Multiparty Motion Session Calculus}
\label{sec:calculus}
We now describe the motion session calculus, extending from \cite{ECOOP19}.

\subsection{Syntax} 
\myparagraph{Physical Components and Processes}
We assume a fixed static set $\PC$
of \emph{physical components} (ranged over by $\pp$, $\q$,...), which
are often called \emph{participants} or \emph{roles};
these are the different components that constitute the overall system.
Each physical component executes a software process, which 
takes care of communication, synchronisation and motion.
We describe the motion session calculus, which forms the core of the software process.

A value $\val$ can be a natural number $\valn$, an integer $\valr$, a boolean $\true$/$\false$, or a real number. %
An expression $\e$ can be a variable, a value, or a term built from expressions by applying (type-correct) computable operators.
A \emph{motion primitive} (\ma, \mb, $\ldots$) is an abstraction of underlying physical trajectories; for the moment,
consider a motion primitive simply as a name and describe its semantics later.
We use the notation $\motion{dt}{\ma}$ to represent that a motion primitive executes and time elapses.
We write the tuple $\motion{dt}{(\pp_i: \ma_i)}$ to denote a group of processes executing their respective motion primitives at the same time.
For the sake of simplicity, we sometimes use $\ma$ for both single or grouped motions.

The \emph{processes} ($\PP,Q,R,...$) of the multiparty motion session calculus are defined by:
\[
\begin{array}{lll}\PP   ::=  \procout {\pp}{\ell}{\e}{\PP} 
            \SEP \sum\limits_{i\in I}
            \procin{\pp}{\ell_i(x_i)}{\PP_i} \SEP \cond{\e}\PP \PP %
            \SEP
             \sum\limits_{i\in I}
            \procin{\pp}{\ell_i(x_i)}{\PP_i}+\motion{dt}{a}.{\PP}
            \SEP
            \motion{dt}{\ma}.\PP \SEP \mu\procvar.\PP \SEP \procvar %
\end{array}
\]
The output process $\procout{\pp}{\ell}{\e}{\Q}$ sends the value of expression $\e$ with label $\ell$ to participant $\pp$.
The sum of input processes (external choice) 
$\sum_{i\in I}\procin{\pp}{\ell_i(x_i)}{\PP_i}$ is a process that can accept a value
with label $ \ell_i $ from participant $ \pp$ for any $ i\in I$; 
$\sum_{i\in I}\procin{\pp}{\ell_i(x_i)}{\PP_i}+\motion{dt}{a}.{\PP}$
is an external choice with a \emph{default branch} with a motion action $\motion{dt}{a}.{\PP}$ which can always proceed when there is no message to receive.
According to the label $\ell_i$ of the received value, the variable $x_i$
is instantiated with the value in the continuation process $ \PP_i$. 
We assume that the set $I$ is always finite and non-empty. 
Motion primitives are indicated by $\motion{dt}{\ma}$; the $\kf{dt}$ denoting
that real time progresses when a motion primitive $\ma$ is executed.
The conditional process $\cond{\e}{\PP}{Q} $ represents the internal choice between processes $ \PP $ and $ \Q $. 
Which branch of the conditional process will be taken depends on the evaluation of the expression $ \e. $ 
The process $\mu \procvar.\PP$ is a recursive process.  
Note that our processes do not have a null process: this is because a physical component does not ``disappear'' when the program stops.
Instead, we model inaction with an iteration of some default motion primitive.

\begin{example}[Processes]\rm
The processes for the cart 
and the two robots from Figure~\ref{fig:basic_example_fig} are given as:
\[
\begin{array}{cc}
\begin{array}{rl}\small
\Cart:\!\!\!\!\!\!\!\!\!\!\!\!\\  
\mu \procvar.\!\!\!\!\!\!
& (\Green!\ARRIVE.\Red!\FREE.\\
& \quad \motion{dt}{\MOVEM(\text{co-ord of }\Green)}.\Green!\READY.\\
&  \quad \motion{dt}{\IDLEM}.\Green?\OK.\motion{dt}{\MOVEM(\text{base})}.\procvar\\
& + \textrm{ symmetrically for $\Red$ })
\end{array}
&
\!\!\!\!\!\!\!\!
\begin{array}{rl}
\small
\Red,\!\!\!\!\!\!\!\!\!\!\!\!\!\!\!\!\!\\\
\Green:\!\!\!\!\!\!\!\!\!\!\!\!\!\!\!\!\!\!\!\!\\
\mu \procvar.\!\!\!\!\!\!& 
(\Cart?\ARRIVE.\motion{dt}{\WORKM}.\Cart?\READY.\\
& \quad \ \motion{dt}{\PICKM}.\Cart!\OK.\motion{dt}{\WORKM}).\procvar\\
& \ + \ (\Cart?\FREE.\motion{dt}{\WORKM}).\procvar\\
\end{array}
\end{array}
\]
Note that for both processes, the recursion ensures that the program does not terminate.
Motion primitives for the cart involve moving to various locations $\MOVEM(\mathit{pos})$
and idling at a location ($\IDLEM$), and those for the robots involve doing some (unspecified)
work $\WORKM$ or picking items off the cart $\PICKM$
(internally, these motion primitives would involve motion planning and inverse kinematics for the robot arms).
Next, we describe the modelling and representation of motion primitives in more detail.
\end{example}

\myparagraph{Physical Variables and Footprint}
Motion primitives make the robot ``move'' in the physical world.
Each motion primitive represents an abstraction of an underlying controlled dynamical
system, such as the controller for the robot arm or the controller for a cart.
The dynamical system changes underlying physical state (such as the position and orientation of the arm
or the position and velocity of the cart).
The dynamics can be coupled: for example, if an arm is mounted on a cart, then the motion of the cart
is influenced by the mass and position of the arm.

We model the underlying physical system using \emph{physical variables}, and we partition these
into state variables $X$ (dynamical variables read and controlled by the component),
input variables $W$ (dynamical variables read by the component, whose values are provided by the environment).
The specification of a motion primitive will constrain the values of these variables over time.
We make the physical variables clear by writing $\pc \plays\proc{\PP}{{X}}{{W}}$ for a 
physical component $\pc\in \PC$ with state and input
variables $X$ and $W$, respectively, which executes the process $\PP$.

Each physical component $\pc\in \PC$ has a \emph{geometric footprint} $\geom{\pc}$ associated
with it.
This represents the physical space occupied by the component, and will be used to ensure that two
components do not collide.
The footprint is a function from valuations to variables in $X$ to a subset of $ℝ^3$.
It describes
an overapproximation of the space
occupied by the component as a function of the current state.
Note that the footprint can depend on the state.

\begin{example}\rm
\label{ex:mp-footprint}
Let $x$ and $v$ denote the position and the velocity of the cart, respectively, restricting
the discussion only on one axis.
Thus, $X = \set{x, v}$. If we assume there are no external influences on the cart, we can
take $W = \emptyset$.
The footprint provides a bounding box around the cart as it moves.
If the cart has dimensions $(l, w, h)$, the footprint at the location $(x, y, z)$ is 
\begin{align}
\set{(\mathsf{x}, \mathsf{y}, \mathsf{z}) \mid 
x-\frac{l}{2} \leq \mathsf{x} \leq x+\frac{l}{2} \wedge 
0\leq \mathsf{y} \leq h 
\wedge z -\frac{w}{2} \leq \mathsf{z} \leq z+\frac{w}{2}
}
\end{align} 
\end{example}

\myparagraph{Composition}
We now define parallel composition of physical components.
Composition of physical components ensure the following aspects.
First, like process calculi, parallel composition provides a locus for message
exchange: a physical component can send messages to, or receive messages from, another one.
Second, composition connects physical state variables of one component to the physical input variables
of another---this enables the coupling of the underlying dynamics.

To ease reasoning about connections between physical variables,
we assume that the components in a composition have disjoint sets of physical and
logical variables, and connections occur through syntactic naming of input variables.
Thus, a component with an input variable $x$ gets its value from the unique component that has a physical
state variable called $x$. 
Hence, there is no ambiguity in forming connections.
We refer to the variables of $\pc$ as $\pc.X$ and $\pc.W$.

We define a \emph{multiparty session} as a parallel composition of pairs of participants and processes:
\[ 
\begin{array}{lll}
   \N  &  ::=   & \pp\plays\proc{\PP}{X}{W}  \SEP \N \parcomp \N
 \end{array}
\]
with the intuition that process $\PP$ plays the role of participant $\pp$, %
and can interact with other processes playing other roles in $M$. %
A multiparty session is \emph{well-formed} if all its participants are different, 
and for each input variable of each participant there is a syntactically unique
state variable in a different participant so that connections between physical variables is well-defined.
We consider only well-formed processes. 

\begin{example}\rm
For the example from \Sec~\ref{sec:example}, the multiparty session is
\[
\Cart\plays\proc{\text{proc.\ for $\Cart$}}{\set{x,v}}{\emptyset} \parcomp
\Red \plays\proc{\text{proc.\ for $\Red$}}{\cdot}{\cdot} \parcomp
\Green \plays\proc{\text{proc.\ for $\Green$}}{\cdot}{\cdot} 
\]
(we have omitted the physical variables for the robot arms for simplicity).
\end{example}

\subsection{Motion Primitives}
\label{sec:mp-rules}

Before providing the operational semantics, we first
consider how motion primitives 
are specified. 
Recall that motion primitives of a component $\pc\plays\proc{\PP}{X}{W}$
abstract a trajectory arising out of the underlying dynamics. 
A first idea is to represent a motion primitive as a pair 
$\tuple{\pre, \post}$---this provides a precondition $\pre$ that specifies 
the condition on the state under which the 
motion primitive can be applied and 
a postcondition $\post$ that specifies the effect of applying the
motion primitive on the physical state. 
However, this is not sufficient: the motion of two components can be coupled in time and the  
trajectory of a component depends on the inputs it gets from other components 
and, in turn, its trajectory influences the trajectories of the other components. 
Therefore, a motion primitive also needs to specify \emph{assumptions} on its external inputs and 
\emph{guarantees} it provides to other processes over time.
These predicates are used to decouple the dynamics. 

Our motion primitive has three more ingredients. 
The first is the \emph{footprint}:
the geometric space used by a process over time while it executes its trajectory and is used 
to ensure geometric separation between components. 
The second is a pair $(D, \timestep)$ of a \emph{time interval and an annotation}: the time interval 
bounds the \emph{minimal} and \emph{maximal times} between which a motion primitive is ready to communicate 
via message passing, and the 
$\timestep$ annotation distinguishes between motion primitives
that are interruptible by external messages 
from those that cannot be interrupted. 

We assume preconditions and postconditions only depend on the state, while assumptions, guarantees,
and footprints may depend on both the state and the elapsed time. 

\begin{definition}[Motion primitive]
The specification for a motion primitive $\ma$ of a physical process $\pc\plays\proc{\PP}{X}{W}$,
written 
$\judgement{\ma}{\pre,A,G,\post,\FP,D, \timestep}$,
consists of 
two predicates \emph{precondition} $\pre$ and \emph{postcondition} $\post$ over $X\cup W$, 
an \emph{input assumption} $A$ which is a predicate over $W\cup\set{\clock}$, 
an \emph{output guarantee} $G$ which is a predicate over $X\cup\set{\clock}$, 
a \emph{footprint} $\FP$ which is a predicate over $X \cup \set{\mathsf{x},\mathsf{y},\mathsf{z},\clock}$,
a \emph{duration} $D\in (ℝ\cup\set{\infty})^2$ which is a time interval,
and 
$\timestep\in\set{\instantaneous,\delayed}$ which indicates if the motion primitive can be interrupted 
by an external message ($\instantaneous$) or not ($\delayed$). 

Given motion primitives $\ma$ and $\ma'$, we say $\ma$ \emph{refines} $\ma'$, denoted by 
$\ma \refines \ma'$,  
with $\judgement{\ma}{φ,A,G,ψ,\FP, D, \timestep}$  
and $\judgement{\ma'}{φ',A',G',ψ',\FP', D', \timestep'}$
iff 
(1) $φ ⇒ φ'$,
(2) $A ⇒ A'$,
(3) $G' ⇒ G$,
(4) $ψ' ⇒ ψ$, 
(5) $\FP' ⊆ \FP$, 
(6) $\timestep = \timestep'$, and
(7) either $\timestep =\instantaneous$ and $D⊆ D'$ or
$\timestep=\delayed$ and $D' ⊆ D$. 
\end{definition}

\subsection{Operational Semantics}
\label{sec:opsem}
The operational semantics is given as reduction rules relative to \emph{stores} $\sstore{X},\sstore{W}$
(Figure~\ref{fig:opsem})
that map physical variables to values. 
The semantics uses the standard structural rules defined
in Figure~\ref{tab:sync:congr}. 

We adopt some standard conventions regarding the syntax of processes and sessions. 
Namely, we  will use 
$\prod_{i\in I} {\pp_i}\plays\proc{\PP_i}{X_i}{W_i}$ for ${\pp_1}\plays\proc{\PP_1}{X_1}{W_1}\parcomp \ldots\parcomp {\pp_n}\plays\proc{\PP_n}{X_n}{W_n},$ 
where $I=\set{1,\ldots,n}$, or simply as $\prod_{i\in I} \pa{\pp_i}{\PP_i}$ when the physical variables are not important. %
We use infix notation for external choice process, 
e.g., instead of $\sum_{i\in \{1,2\}}
\procin{\pp}{\ell_i(\x)}{\PP_i}$, we write 
$\procin{\pp}{\ell_1(\x)}{\PP_1}+\procin{\pp}{\ell_2(\x)}{\PP_2}.$
{\em The value $\val$ of expression $\e$ with physical state $\sstore{X}$} (notation $\evalState{\e}{\val}{\sstore{X}}$) is computed as expected. 
We assume that $\evalState{\e}{\val}{\sstore{X}}$ is effectively computable and takes logical ``zero time.''

\begin{figure}[!t]
  \label{tab:sync:red}
\small
\[
\begin{array}[t]{@{}c@{}}
\rulename{recv}\inferrule%
{
j\in I 
}
{
\pp\plays\proc{\sum\limits_{i\in I} \procin{\q}{\ell_i(\x)}{\PP_i} + \motion{dt}{a}.{\PP} }{\sstore{X}}{\sstore{W}}
\xrightarrow{\q?\ell_j(v)}
\pp\plays\proc{\PP_j\set{v/x}}{\sstore{X}}{\sstore{W}}
}
\\[5mm]
\rulename{send}
\inferrule%
          {
            \evalState{\e}{\val}{\sstore{X},\sstore{W}}
}{
\pp\plays\proc{\procout{\q}{\ell_j}{\e}{Q}}{\sstore{X}}{\sstore{W}}
\xrightarrow{\q!\ell\tuple{\val}}
\pp\plays\proc{Q}{\sstore{X}}{\sstore{W}}
}
          \quad
          \rulename{comm}
          \inferrule%
{
\pa\pp{\PP} \xrightarrow{\q!\ell\tuple{\val}} \pa\pp{\PP'}\\
\pa\q{\Q} \xrightarrow{\pp?\ell(\val)} \pa\q{\Q'}\\
}
{
  \pa\pp{\PP} \parcomp \pa\q{\Q} %
  \red{} \pa\pp{\PP'} \parcomp \pa\q{\Q'} %
}
\\[6mm]
\inferrule[\rulename{default}]
{
\pp\plays \proc{\motion{\dtsess}{\ma}.\PP}{\sstore{X}}{\sstore{W}} \xrightarrow{\tau}
\pp\plays \proc{\motion{\dtsess}{\ma\AT 0}.\PP}{\sstore{X}}{\sstore{W}} \\
\pp\plays \proc{\motion{\dtsess}{\ma\AT 0}.\PP}{\sstore{X}}{\sstore{W}} \xrightarrow{\motion{\dtsess}{\ma},\xi,\nu, t} 
\pp\plays \proc{\motion{\dtsess}{\ma\AT t}.\PP}{\sstore{X}'}{\sstore{W}'}
} 
{
\pp\plays \proc{\sum\limits_{i\in I} \procin{\q}{\ell_i(\x)}{\PP_i} + \motion{\dtsess}{a}.{\PP}}{\sstore{X}}{\sstore{W}}
\xrightarrow{\motion{\dtsess}{\ma},\xi,\nu, t} 
\pp\plays \proc{\motion{\dtsess}{\ma\AT t}.\PP}{\sstore{X}'}{\sstore{W}'}
 }
\\[5mm]
\inferrule[\rulename{traj-base}]
          {
\pre\{\sstore{X}/X,\sstore{W}/W, 0/\clock\}\quad 
A\{\sstore{W}/W, 0/\clock\}\wedge G\{\sstore{X}/X, 0/\clock\} \\
\geom{\pp}(\sstore{X})\subseteq\FP\set{\sstore{X}/X,0/\clock}
}
{
\pp\plays \proc{\motion{\dtsess}{\ma}.\PP}{\sstore{X}}{\sstore{W}} \xrightarrow{\tau}
\pp\plays \proc{\motion{\dtsess}{\ma\AT 0}.\PP}{\sstore{X}}{\sstore{W}}
}
\\[5mm]
\rulename{traj-step}
\inferrule%
          {
  \xi: [t_1,t_2] \rightarrow ℝ^X, \nu: [t_1,t_2] \rightarrow ℝ^W\\
  \xi(t_1) = \sstore{X}, \xi(t_2) = \sstore{X}' \\
  \nu(t_1) = \sstore{W}, \nu(t_2) = \sstore{W}' \\
  \forall t\in [t_1,t_2].\ A\{\nu(t)/W, t/\clock\} \wedge G\{\xi(t)/X, t/\clock\}
\ \wedge \geom{\pp}(\xi(t))\subseteq\FP\set{\sstore{X}/X,t/\clock}
}
{
\pp\plays \proc{\motion{\dtsess}{\ma\AT t_1}.P}{\sstore{X}}{\sstore{W}} \xrightarrow{\motion{\dtsess}{\ma},\xi,\nu, t_2-t_1} 
\pp\plays \proc{\motion{\dtsess}{\ma\AT t_2}.P}{\sstore{X}'}{\sstore{W}'}
}
\\[7mm]
\inferrule[\rulename{t-conditional}]
{
  \evalState{\e}{\true}{\sstore{X},\sstore{W}}
}
{
\pp\plays\proc{\cond{\e}{\PP}{\Q}}{\sstore{X}}{\sstore{W}}
\xrightarrow{}
\pp\plays\proc{\PP}{\sstore{X}}{\sstore{W}}
}
\quad 
\inferrule[\rulename{non-interrupt}]
{
t \in D\\ 
\timestep = \delayed\\
\post\set{\sstore{X}/X,\sstore{W}/W,t/\clock}\\
}
{
\pp\plays \proc{\motion{\dtsess}{\ma\AT t}.\PP}{\sstore{X}}{\sstore{W}} \xrightarrow{\tau} \pp\plays \proc{\PP}{\sstore{X}}{\sstore{W}}
}
\\[5mm]
\rulename{interrupt}
\inferrule%
    {
t \in D\\ 
\timestep = \instantaneous\\
\post\set{\sstore{X}/X,\sstore{W}/W,t/\clock}\\
\pp\plays \proc{\PP}{\sstore{X}}{\sstore{W}} \xrightarrow{\q?\ell(\val)} \pp\plays \proc{\PP'}{\sstore{X}}{\sstore{W}}
}
{
\pp\plays \proc{\motion{\dtsess}{\ma\AT t}.\PP}{\sstore{X}}{\sstore{W}}
\xrightarrow{\q?\ell(\val)} 
\pp\plays \proc{\PP'}{\sstore{X}}{\sstore{W}}
}
\\[5mm]
\rulename{m-par}
\inferrule%
    {
\exists \xi, \nu.~ \forall i,j \in I.\\
  i\neq j \Rightarrow \mathsf{disjoint}(\ma_i,\ma_j) \\
  \pp_i\plays\proc{P_i}{\sstore{X}_i}{\sstore{W}_i}
    \xrightarrow{\motion{\dtsess}{\ma_i},\xi|_{X_i}, (\xi\cup\nu)|_{W_i}, t}
    \pp_i\plays\proc{P_i'}{\sstore{X}'_i}{\sstore{W}'_i}\\
}
{
  \prod_{i \in I} \pp_i\plays\proc{P_i}{\sstore{X}_i}{\sstore{W}_i} 
  \xrightarrow{\motion{\dtsess}{\ma_i}, t}
  \prod_{i \in I} \pp_i\plays\proc{P_i'}{\sstore{X}'_i}{\sstore{W}'_i}
}
\\[6mm]
\rulename{r-par$\tau$}
\inferrule%
    {
\N_1
\xrightarrow{\tau}
\N_2
}
{
\N_1
\parcomp \N \xrightarrow{\tau}
\N_2
\parcomp \N 
}
\qquad
\rulename{r-par}
\inferrule%
          {
\N_1
\red{}
\N_2
}
{
\N_1
\parcomp \N \red{}
\N_2
\parcomp \N 
}
\qquad
\rulename{r-struct}
\inferrule%
{
  \N'_1\equiv \N_1 \xrightarrow{\alpha} \N_2 \equiv \N'_2 
}
{ 
  \N'_1 \xrightarrow{\alpha} \N'_2 
}
\end{array}
\]
\noindent
We omit \rulename{f-conditional}. 
We use $\xrightarrow{\alpha}$ for 
any labelled transition relation or reduction ($\xrightarrow{}$). 
\caption{Operational semantics}
\label{fig:opsem}
\vspace*{-0.5cm}
\end{figure}
\begin{figure}{}
$
\small
\begin{array}[t]{@{}c@{}}
  \inferrule[\rulename{s-rec}]{}{ \mu \procvar.\PP \equiv \PP\sub{\mu \procvar.\PP}{\procvar} }
\quad
\inferrule[\rulename{s-multi}]{}{
	\PP \equiv \Q  \Rightarrow \pa\pp\PP \ | \  M\equiv\pa\pp\Q \ | \ M
} \quad 
\inferrule[\rulename{s-par 1}]{}{
	M \ | \ M' \equiv M' \ | \  M
}
\quad      
\inferrule[\rulename{s-par 2}]{}{
	(M \ | \ M') \ | \ M'' \equiv M \ | \ ( M' \ | \  M'')
} 
\end{array}
$
\caption{Structural congruence rules}
\label{tab:sync:congr}
\vspace*{-0.5cm}
\end{figure}

For reduction rules that do not change the physical state, we omit writing the physical state in the rule.
Time is global, and processes synchronize in time to make concurrent
motion steps of the same (but not pre-determined) duration.
Communication (\rulename{comm}) is synchronous and puts together sends and receives.
Rule \rulename{default} selects the default branch.
Rules for conditionals, communication without default motion  
and parallel composition 
are defined in a standard way \cite{ECOOP19,DGJPY2016,GHPSY19}.

To define the operational semantics for motions, we extend the process
syntax $P$ with time-annotated motion primitives, 
$\motion{dt}{\ma\AT t}\quad \text{ for }t\geq 0$. 
Let us fix a component $\pp$ and its motion primitive
$\judgement{\ma}{\pre,A,G,\post,\FP,D,\timestep}$.
Rules \rulename{traj-base} and \rulename{traj-step} set up trajectories for each motion primitive.
We distinguish between interruptible (\rulename{interrupt}) and non-interruptible (\rulename{non-interrupt})
motion primitives.
Non-interruptible motion primitives are consumed by the process.
Interruptible motion primitives consume the motion primitive on a message receipt.
The parallel composition rule \rulename{m-par} requires a consistent global trajectory and
ensures that when (physical) time elapses for one process, it elapses equally for all processes. 
Here, $\mathsf{disjoint}(\ma_i, \ma_j)$ states that the footprints along the trajectory are disjoint: 
    $\ma_i.\FP\set{\xi_i(t')/X,t'/\clock}\cap\ma_j.\FP\set{\xi_j(t')/X,t'/\clock}=\emptyset$ for all $t'\in[0,t]$.

We use $\red^*$ for the reflexive transitive closure of $\red$.
We say a program state $\prod_i \pp_i \plays\proc{\PP_i}{\sstore{X}_i}{\sstore{W}_i}$ is
\emph{collision free} if $\geom{\pp_i}(\sstore{X}_i)\cap\geom{\pp_j}(\sstore{X}_j) = \emptyset$
for every $i\neq j$.

\subsection{Joint Compatibility of Motion Primitives}
Two motion primitives are \emph{compatible} if they can be jointly executed.
To decide compatibility, we compose the specifications using the following
\emph{assume-guarantee proof rule}:\\
\centerline{
$
\small
\begin{array}{ll}
  \begin{array}{ll}
    \rulename{AGcomp}\\\\\\\\\
          \end{array}
    &
\inferrule{
\exists \FP_1, \FP_2.\
G_1 \wedge G_2 \Rightarrow \FP_1 \cap FP_2 = \emptyset\quad
G_1 \wedge G_2 \Rightarrow \FP_1 \cup FP_2 \subseteq \FP\\\\
 \timestep_1 =\instantaneous \vee \timestep_2 = \instantaneous \quad
 \timestep_1 =\delayed ⇒ D₁ ⊆ D₂ \quad
 \timestep_2 =\delayed ⇒ D₂ ⊆ D₁ \\\\
 \ma_1 \refines \tuple{φ₁,A∧ G₂,G₁,\FP_1,ψ₁,D_1,\timestep_1}\quad 
 \ma_2 \refines \tuple{φ₂,A∧ G₁,G₂,\FP_2,ψ₂,D_2,\timestep_2} 
}
          {\hspace*{-2mm}\judgement{\tuple{\pp_1{:}\ma_1, \pp_2{:}\ma_2}}{φ₁∧φ₂,A,G₁∧G₂,\FP,ψ₁∧ψ₂,D₁∩D₂,\timestep_1\oplus\timestep_2}}
          \end{array}
$}%

\noindent The $\oplus$ operator combines the interruptibility: 
$\instantaneous \oplus \instantaneous = \instantaneous$ and
$\instantaneous \oplus \delayed = \delayed \oplus \instantaneous = \delayed \oplus\delayed = \delayed$.

The \rulename{AGcomp} rule performs three checks.
First, the check on footprints ensures that the motion primitives are disjoint in space
(there is a way to find subsets $\FP_1$ and $\FP_2$ of the footprint $\FP$ so that the composed
motion primitives are in these disjoint portions).
Second, the guarantee of one motion primitive is used as the assumption of the other to check
that they are compatible.
Third, the checks on timing ensures that 
at most one process executes a non-interruptible motion primitive and 
the interruptible ones are ready before the non-interruptible one. 

We say 
$\motion{\dtsess}{(\pp_i {:} \ma_i)}$ is \emph{compatible} from $\pre$ if
there exist $G$, $\post$, $\FP$, and $D$ such that $\judgement{\prod_i{\pp_i:\ma_i}}{\pre, \true, G, \post, \FP, D, \timestep}$ is derivable
using \rulename{AGcomp} repeatedly.
Thus, compatibility checks that motion primitive specifications can be put together if 
processes start from their preconditions:
first, the assumptions and guarantees are compatible; 
second, there is no ``leftover'' assumption;
and
third, the footprints of the motion primitives do not intersect in space. 
Compatibility provides the ``converse'' condition that allows joint
trajectories in \rulename{M-par} to exist.

The next theorem formalizes what motion compatibility guarantees.
Intuitively, motion compatibility means it is sufficient to check the compatibility of contracts
to guarantee the existence of a joint trajectory, i.e., the execution is defined for all the components
and the joint trajectory satisfies the composition of the contracts.

\begin{restatable}[Motion Compatibility]{theorem}{theoremMC}
\label{th:motion-compat}
Suppose $\motion{\dtsess}{(\pp_1:\ma_1,\pp_2:\ma_2)}$ is compatible.
For every $t\in D$, if there exist trajectories $\xi_1, \xi_2,\nu_1,\nu_2$ such that
$\pa{\pp_i}{\PP_i} \xrightarrow{\motion{\dtsess}{\ma_i},\xi_i, \nu_i, t} \pa{\pp_i}{\PP_i'}$
for $i\in \set{1,2}$, then there exist trajectories
$\xi:[0,t]\rightarrow  ℝ^{X_1\cup X_2},
\nu:[0,t]\rightarrow ℝ^{W_1\cup W_2 \setminus (X_1\cup X_2)}$
such that
$\pa{\pp_i}{\PP_i} \xrightarrow{\motion{\dtsess}{\ma_i},\xi|_{X_i}, \nu|_{W_i}, t} \pa{\pp_i}{\PP_i'}$
and for all $0 \leq t' \leq t$, 
the footprints of $\pp_1$ and $\pp_2$ are disjoint:
$\geom{\pp_1}({\xi(t')|_{X_1}}) \cap \geom{\pp_2}({\xi(t')|_{X_2}}) = \emptyset$.
\end{restatable}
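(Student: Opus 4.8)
The plan is to prove the statement by \emph{inverting} the compatibility derivation and the two hypothesised motion steps, gluing the witnessed state trajectories into one joint trajectory, and then checking that this joint object projects back to legal individual steps and keeps the footprints apart. Since $\motion{\dtsess}{(\pp_1:\ma_1,\pp_2:\ma_2)}$ is compatible with external assumption $\true$, a single application of $\rulename{AGcomp}$ witnesses it, and I would first read off its premises: chosen contracts with guarantees $G_1,G_2$ and footprints $\FP_1,\FP_2$ satisfying $G_1\wedge G_2\Rightarrow\FP_1\cap\FP_2=\emptyset$; the timing facts $D=D_1\cap D_2$ and ``at most one $\delayed$''; and the two refinements $\ma_i\refines\tuple{\varphi_i,\, A\wedge G_{3-i},\, G_i,\ldots}$ with $A=\true$, so that each $\ma_i$'s assumption is accounted for by the \emph{partner's} guarantee $G_{3-i}$ alone. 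Next I would invert each hypothesised step $\pa{\pp_i}{\PP_i}\xrightarrow{\motion{\dtsess}{\ma_i},\xi_i,\nu_i,t}\pa{\pp_i}{\PP_i'}$ through $\rulename{traj-step}$ to obtain, for all $t'\in[0,t]$, the pointwise facts $G_i(\xi_i(t'),t')$, $A_i(\nu_i(t'),t')$, and $\geom{\pp_i}(\xi_i(t'))\subseteq\FP_i\{\cdots\}$.

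I would then set $\xi=\xi_1\uplus\xi_2$ on $X_1\cup X_2$ (a well-defined union, since well-formedness makes $X_1,X_2$ disjoint) and define $\nu$ on the genuinely external variables $W_1\cup W_2\setminus(X_1\cup X_2)$ from the external parts of $\nu_1,\nu_2$, while the coupling inputs $W_i\cap X_{3-i}$ are forced by $\xi$ to coincide with the partner's state $\xi_{3-i}$. The only real content is to show that each projection $\xi|_{X_i},\nu|_{W_i}$ still witnesses $\rulename{traj-step}$ for $\pp_i$, reaching the same continuation $\PP_i'$. The crucial observation is that in $\rulename{traj-step}$ the state obligation ($G_i$ on $\xi_i$) and the input obligation ($A_i$ on the input) are \emph{decoupled}: re-routing $\pp_1$'s coupling inputs to $\xi_2$ does not disturb $G_1(\xi_1)$, so $\xi_1$ remains a legal state trajectory, and it only remains to check that the re-routed input satisfies $A_1$. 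Here the assume-guarantee premises do the work: because both state trajectories are fixed \emph{first}, the partner guarantee $G_{3-i}(\xi_{3-i})$ holds unconditionally along $[0,t]$, which discharges the coupling part of $\pp_i$'s assumption, while the external part is inherited from the witness $\nu_i$ and, with $A=\true$, imposes nothing further.

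Collision freedom is then immediate. For every $t'\in[0,t]$ the fixed trajectories satisfy $G_1(\xi_1(t'))\wedge G_2(\xi_2(t'))$, so the $\rulename{AGcomp}$ footprint premise gives $\FP_1\cap\FP_2=\emptyset$ at that instant; combining this with the two containments $\geom{\pp_i}(\xi(t')|_{X_i})\subseteq\FP_i$ from $\rulename{traj-step}$ yields $\geom{\pp_1}(\xi(t')|_{X_1})\cap\geom{\pp_2}(\xi(t')|_{X_2})=\emptyset$, which is exactly the required disjointness. The duration side-condition is met because $t\in D=D_1\cap D_2$, so $t$ lies in the admissible interval of each primitive, and the continuations $\PP_i'$ are unchanged because only the inputs were re-routed.

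I expect the main obstacle to be precisely this input-consistency step: arguing that the joint trajectory can be made to respect every component's assumption once the coupling inputs are pinned to the partner's actual state. The delicate point is to present the decoupling of the $G$- and $A$-obligations in $\rulename{traj-step}$ cleanly enough that the assume-guarantee circularity is visibly well-founded (states fixed before inputs, so no quantity is left to be determined), and to bridge the \emph{chosen} contracts of $\rulename{AGcomp}$ with the primitives' own contracts used in the operational rules via the refinement clauses. The binary statement above is the base case; the general ``$\rulename{AGcomp}$ repeatedly'' formulation would follow by induction on the derivation, treating an already-composed subgroup as one party, but the theorem as stated only needs the two-party instance.
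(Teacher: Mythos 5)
Your proof is essentially correct, but it takes a genuinely different route from the paper's. The paper's proof is a sketch that imports the circular assume--guarantee rule of \citet{DBLP:conf/hybrid/HenzingerMP01}, arguing via prefix closure, deadlock freedom and input permissiveness, with an induction over time to break the apparent circularity between the two components' assumptions and guarantees. You observe, correctly, that for the theorem \emph{as stated} this machinery is not needed: the hypothesis already hands you complete state trajectories $\xi_1,\xi_2$ over $[0,t]$, and since the guarantee obligation in \rulename{traj-step} is a predicate over $X_i\cup\set{\clock}$ only, $G_i(\xi_i(t'))$ holds unconditionally of the inputs. Fixing both states first and only re-routing the coupling inputs to the partner's actual state dissolves the circularity, and the assumption obligation is then discharged pointwise by the partner's guarantee via the \rulename{AGcomp} premises; collision freedom falls out of the footprint premise plus the two containments from \rulename{traj-step}. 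What your approach buys is a short, self-contained verification argument tied exactly to the operational rules; what the paper's approach buys is a template that would still work for the stronger existence statement one really wants (constructing a joint trajectory from scratch when each component's state genuinely depends on its inputs), where your shortcut of reusing $\xi_1,\xi_2$ unchanged is unavailable and induction over time is essential.

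One point needs care, and you rightly flag it: bridging the primitives' \emph{own} contracts (which \rulename{traj-step} and the footprint containments refer to) with the \emph{chosen} abstract contracts of \rulename{AGcomp}. As literally written, the clauses of $\ma\refines\ma'$ ($A\Rightarrow A'$, $G'\Rightarrow G$, $\FP'\subseteq\FP$) point in the opposite direction from what your argument (and, for that matter, the soundness of \rulename{sub-motion}) requires --- you need the partner's concrete guarantee to entail the abstract $G_{3-i}$, the abstract assumption to entail $\ma_i$'s own, and $\geom{\pp_i}$ to land inside the chosen disjoint $\FP_i$. You will have to read these clauses in the substitutability direction (or note the discrepancy explicitly) for the gluing step to close.
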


\begin{proof}
(Sketch.) %
This theorem is proved along the lines of the AG rule by \citet{DBLP:conf/hybrid/HenzingerMP01}.
Their proof relies on motion primitives having the following properties:
prefix closure,
deadlock freedom,
and input permissiveness.
Deadlock freedom is, in this setting \cite{DBLP:conf/hybrid/HenzingerMP01}, means that
    if a precond is satisfied then the trajectory must exists, and
    every execution that does not yet satisfies the postcondition can be prolonged be extended.
Input permissiveness states that a component cannot deadlock no matter how the environment decides to change the inputs.
This condition is needed to do induction over time.
Input permissiveness does not directly follow from the definition of our contracts.
However, it holds when the environment changes the inputs in a way that is allowed by the assumptions.
As \rulename{AGcomp} checks that this is true and rejects the composition otherwise, we can reuse the same proof strategy.
A process should not be allowed to block time making the system ``trivially safe'' but physically meaningless.
We also need to check the disjointness of footprints which is new in our model (needed by \rulename{M-par}).
This is also checked by \rulename{AGcomp}.
\end{proof}

\subsection{Examples for motion primitives, compatibilities and environments}
\myparagraph{Motion primitives from dynamics}
\label{ex:motion}
For the $\Cart$ in \Sec~\ref{sec:example}, we can derive motion primitives from
a simple dynamical model $\dot{x} = v, \dot{v} = u$.
Here, $X = \set{x,v}$ and $W= \emptyset$. 
For simplicity, assume the cart moves along a straight line (its $x$-axis)
and that the control $u$ can apply a fixed 
acceleration $a_{\max}$ or a fixed deceleration $-a_{\max}$.
Consider the motion primitive $\MOVEM$ which takes the cart from an initial position
and velocity $(x_i, v_i)$ to a final position $(x_f, v_f)$.
From high school physics, we can solve for $x$ and $v$, given initial values $x_i$ and $v_i$:\\
\centerline{$
x = x_i + v_it + \frac{1}{2}a_{\max} t^2 \mbox{ and } v = v_i + a_{\max} t
$}
from which, by eliminating $t$, we have $(v - v_i)^2 = 2 a (x - x_i)$.
Suppose that the cart starts from rest ($v_i=0$), accelerates until the midpoint $\mathit{mid} = \frac{1}{2}(x_i+x_f)$, 
and thereafter decelerates to reach $x_f$ again with velocity $v_f= 0$.
The precondition $\pre$ is $x_f > x\wedge v = 0$, the first conjunction saying we move right
(we can write another motion primitive for moving left).
The assumption $A$ is $\true$, and the guarantee is\\
\centerline{
$x_i \leq x \leq x_f \wedge \left(
\begin{array}{c}
(x_i \leq x \leq \mathit{mid} \Rightarrow v^2 = 2 a_{\max} (x - x_i))  \wedge\\
(\mathit{mid} \leq x \leq x_f \Rightarrow v^2 = 2 a_{\max} (x_f - x))
\end{array}
\right)
$}\\[1mm]
The postcondition is $x = x_f \wedge v = 0$.
The footprint provides a bounding box around the cart as it moves, and is given as in Example~\ref{ex:mp-footprint}.
We assume that the motion cannot be interrupted. Thus, we place the annotation $\delayed$.
The least time to destination is $t_m = 2\sqrt{(x_f-x_i)/a_{\max}}$ and the duration (the interval when it is ready to communicate) is $[t_m, \infty)$. 
A simpler primitive is $\IDLEM$: it keeps the cart stationary.
Its assumption is again $\true$, guarantee (and postcondition) is $x = x_i \wedge v = 0$, footprint is a bounding box around the fixed position.
It is interruptible by messages from other components (annotation $\instantaneous$) at any time: $D = [0,\infty)$. 

\myparagraph{Compatibility}
Now consider the constraints that ensure the joint trajectories involving the $\Cart$'s motion
and the $\Producer$'s $\WORKM$ primitive are compatible. 
First, note that the motion of the cart is non-interruptible ($\delayed$) but we assume the arm is interruptible,
satisfying the constraint in \rulename{AGcomp} that at most one motion is not interruptible.
Instead of the complexities of modeling the geometry and the dynamics of the arm, we approximate the footprint of the arm
(the geometric space it occupies) as a half-sphere centered at the base of the arm and extending upward.
Assume first that the motion primitive guarantees that the state is always within this half-sphere. 
In order for the cart and the robot arm to be compatible, we have to check that the footprints do not overlap.
Our guarantee is too weak to prove compatibility, as the footprint of the cart and the arm can intersect when the
cart is close to the arm.
Instead, we strengthen the guarantee to state that the arm can use the entire sphere when the cart is far and
as the cart comes closer the arm effector moves up to make space.

To realise this motion, the cart sends the arm the following information with $\ARRIVE$:\footnote{
	For clarity, the type in Figure~\ref{fig:basic-example} omits parameters to messages. Our type system
	uses predicated refinements to reason about parameters.
}
its current position $x_i$, its target position $x_f$, and a lower bound $t_{\mathit{ref}}$ on the time
it will take to arrive at $x_f$.
The footprint for the cart's motion can be specified by 
$\set{(\mathsf{x}, \mathsf{y}, \mathsf{z}) \mid \mathsf{x} ≤ x_0 + l/2 + (x_1 - x_0) * t / t_{\mathit{ref}} ∧ … }$.
For the producer, suppose $R_{\mathit{base}}$ is the radius of its base and $x_{\Producer}$ its $x$ coordinate. 
The footprint of the $\WORKM$ action is strengthened 
as $\set{(\mathsf{x}, \mathsf{y}, \mathsf{z}) \mid (\mathsf{z} >  \min(c t, h) \lor |\mathsf{x} - x_\Producer| ≤ R_{\mathit{base}})  ∧ … }$,
where $c ≥ h \cdot |x_0 -l - x_\Producer - R_{\mathit{base}}| / t_{\mathit{ref}}$.
Finally, for compatibility, we need to check that both $\Cart$ and $\Producer$ can adhere 
to their footprint and that the footprints are disjoint.
Disjointness is satisfied if $x_1 + l < x_\Producer - R_{\mathit{base}}$, set this as a precondition to $\MOVEM$.

\myparagraph{Environment Assumptions}
We can model a more complex cart moving in a dynamic environment
by adding a new component (participant) $\Env$ (for \emph{environment}). 
The physical variables of the environment process encode dynamic properties of the state, such as
obstacles in the workspace or external disturbances acting on the cart.
We can abstract the environment assumptions into a single motion primitive whose guarantees provide
assumptions about the environment behavior to the other components.
For example, the environment providing a bounded disturbance of magnitude to the cart's acceleration could be 
modeled as the guarantee $-1 \leq d \leq 1$ for a physical variable $d$ (for \emph{disturbance}) of $\Env$.
The cart can include this input variable in its dynamics: $\dot{v} = u + d$.
We can also model sensor and actuator errors in this way.
Likewise, we can model obstacles by a physical variable in the environment that denotes the portions
of the state space occupied by obstacles, exporting this information through the guarantees, and
using this information when the cart plans its trajectory in $\MOVEM$. 

\section{Motion Choreographies} 
\label{sec:proof-system}
In this section, we develop a multiparty session typing discipline to
prove communication safety and collision freedom.
Our session types extend usual multiparty session types by introducing
new operators and by reasoning
about joint motion in real-time. Moreover, as message exchanges can be a proxy to exchange permissions for motion primitives, 
our types have predicates as guards in order to model permissions for motion primitives.

\subsection{Global Types with Motions and Predicates}
\label{subsec:global}
We start with a choreography given as a \emph{global type}. 
The global type constrains the possible sequences of messages and controller actions that may occur in any execution of the system.
We extend \cite{ECOOP19} by 
framing and predicate refinements, together with 
separation conjunctions. 

{\em Sorts}, ranged over by $\SL,$ are used to define base types:
\[\SL \quad::=\quad \tunit \SEP \treal \SEP \text{\sf{point}}(ℝ³) \SEP \text{\sf{vector}} \SEP 
	\ldots \SEP \SL\times \SL\]
A \emph{predicate refinement} is of the form $\set{\nu : \SL\mid \pred}$, where 
$\nu$ is a \emph{value variable}
not appearing in the program, $\SL$ a sort, and 
$\pred$ a Boolean predicate.
Intuitively, a predicate refinement represents assumptions on the 
state of the sender and the communicated value to the recipient.
We write $\SL$ as abbreviation for $\set{\nu: \SL\mid\true}$ and
$\pred$ for $\set{\nu:\SL\mid \pred}$ if 
$\SL$ is not important. 

\begin{definition}[Global types]
  \label{def:global-types}%
  {\em Global types} ($\GL, \GL', ...$) are generated by the following grammar:
\[
\begin{array}{rcl}
\GL & ::=   & %
 \GLPre.\GL \;\SEP\; \ty \;\SEP\; \mu \ty.\GL \\[1mm]
  \GLPre & ::= & \motion{\dtsess}{(\pp_i{:}\ma_i)}\;\SEP\;
                 \GvtPre{\pp}{\q}{[\pred]\ell}{\set{\nu:\SL\mid\pred'}}
\; \SEP \; \GLPre.\GLPre \;\SEP\; \GLPre + \GLPre \;\SEP\; 
\GLPre \GLPAR \GLPre 
\end{array}
\]
where $\pp, \q$ range over $\PC$, $\ma_i$ range over 
abstract motion primitives, and
$\pred_i, \pred'$ range over predicates. 
$\GLPre$ corresponds to the prefix of global types where we do not allow
the recursion. 
We require that
    $\pp \neq \q$,
    $I \neq \emptyset$,
    $\freevars(\pred_i) ⊆ \pp.X ∪ \pp.W$,
    $\freevars(\pred'_i) ⊆ \pp.X ∪ \pp.W ∪ \set{\nu}$,
    and $\ell_i \neq \ell_j$ whenever $i \neq j,$ for all $i,j\in I$.
We postulate that recursion is guarded and recursive types with the same regular tree are considered equal.
We omit the predicate annotation if the predicate is $\true$ or not important.
\end{definition}

In Definition~\ref{def:global-types}, the main syntax 
follows the standard definition of global types in multiparty session types
\cite{KY13,KY2015,DGJPY15}, with the inclusion of more flexible syntax
of choreographies (\emph{separation} $\GLPre \GLPAR \GLPre$, 
\emph{sequencing} $\GLPre.\GLPre$ and summation $\GLPre + \GLPre$)
and \emph{motion} primitive $\motion{\dtsess}{(\pp_i{:}\ma_i)}$ 
extended from \cite{ECOOP19},  
and branching
$\GvtPre{\pp}{\q}{[\pred]\ell}{\set{\nu:\SL\mid\pred'}}$. 

The motion primitive explicitly declares 
a \emph{synchronisation} by AG contracts 
among all the participants $\pp_i$.
The {\em branching} type 
formalises a protocol where participant $\pp$ 
first tests if the guard $[\pred]$, then sends to $\pq$ one message with label $\ell$ 
and a value satisfying the predicate refinement $\pred'$. 
Recursion is modelled as $\mu\ty.\GL$, %
where  variable  $\ty$ is bound and guarded in $G$, e.g.,  
$\mu\ty.\ty$ is not a valid type. 
Following the standard session types, 
in $\GLPre_1 + \GLPre_2 + ... + \GLPre_n$, we assume: 
$g_i = \GvtPre{\pp}{\q}{[\pred_i]\ell_i}{\set{\nu:\SL\mid\pred'_i}}.g_i'$ 
and $\ell_i \not=\ell_j$; and similarly for $\GL$. 
By this rule, hereafter we write: 
$\Gvti{\pp}{\pq}{[\pred_i]\ell}{\pred'}{\GL}$,  
combining summations and branchings and putting $\GL$ in the tail into
each branching as the standard multiparty session types. 
$\participant{\GL}$ denotes a set of participants appeared in
$\GL$, inductively defined by $\participant{\motion{\dtsess}{(\pp_1{:}\ma_1,\ldots,\pp_k{:}\ma_k)}}
= \set{\pp_1,\ldots,\pp_k}$ and 
$\participant{\GvtPre{\pp}{\q}{\ell_i[\pred]}{\set{\nu:\SL\mid\pred'}}}=\{\pp,\q\}$
  with other homomorphic rules. 

A ``separating conjunction'' $\GLPAR$  allows us
to reason about subsets of participants.
It places two constraints on the processes and motion primitives on 
the two sides of $\GLPAR$: 
first, there should not be any communication that crosses the boundary and second, the motion
primitives executed on one side should not be coupled (through
physical inputs) with motion primitives on the other.
We call 
$\GLPre_1 \GLPAR \GLPre_2$
\textbf{\emph{fully-separated}} if $\participant{\GLPre_1} \cap \participant{\GLPre_2} = \emptyset$, there exist
$\FP_1$, $\FP_2$ with $\FP_1\cap \FP_2 = \emptyset$, and 
for every motion primitive $\judgement{\ma}{\pre,A, G,\post, \FP, D,\timestep}$ in $\GLPre_i$, we have
$A$ depends only on state variables in $\participant{\GLPre_i}$ and for all $t$, $\FP\{t/\clock\} \subseteq \FP_i\{t/\clock\}$,
for $i\in\set{1,2}$. 
$\GL$ is \textbf{\emph{fully-separated}} if each $\GLPre_1 \GLPAR \GLPre_2$ in $\GL$
is fully-separated. 

Our global type does not include an $\tend$ type.
An $\tend$ introduces an implicit global synchronisation that requires 
all components to finish exactly at the same time.
Informally, ``ending'' a program is interpreted as robots stopping their motion and staying idle forever (by forever
executing an ``idle'' primitive).
Our progress theorem will show that well-typed programs have 
infinite executions. 

\myparagraph{Data Flow Analysis on Choreographies}
Not every syntactically correct global type is meaningful.
We need to check that the AG contracts work together and that the
sends and receive work w.r.t the time taken by the different motions.
These checks can be performed as a ``dataflow analysis'' on the tree obtained
by unfolding the global type.
We start with a few definitions and properties.

\textbf{\emph{Well-scopedness.}}
Consider the unfolding of a global type $\GL$ as a tree. The leaves
of the tree are labeled with motions $\motion{\dtsess}{(\pp_i{:}\ma_i)}$ or message $\pp\to\q$, and internal nodes are labeled with the operators $.$, $\GLPAR$, or $+$.
By our assumption, we decorate each $+$ node with the sender and receiver $\pp\to\q$
below it.

We say the tree is \textbf{\emph{well-scoped}} if there is a way to label the nodes of the tree
following the rules below:
(1) The root is labeled with the set $\PC$ of all participants.
(2) If a ``.'' node labeled with $\PC'$, both its children are also labeled with $\PC'$, if possible.
(3) If a ``+'' node associated with a message exchange $\pp\to\q$ is labeled with $\PC'$, and both
$\pp,\q\in \PC'$, then each of its children are also labeled with $\PC'$, if possible.
(4) If a ``\GLPAR'' node is labeled with $\PC'$, then we label its children with $\PC_1$ and $\PC_2$
such that $\PC_1 \cap \PC_2 = \emptyset$ and $\PC_1 \cup \PC_2 = \PC'$.
(5) A leaf node $\motion{\dtsess}{\pc_i:\ma_i}$ is labeled with $\PC'$ if all $\pc_i$ are in $\PC'$, otherwise the labeling fails.
(6) A leaf node $\pp \to \q$ is labeled with $\PC'$ if both $\pp, \q$ are in $\PC'$, otherwise the labeling fails.

Such a scope labeling is unique if the global type is fully-separated and no participant is ``dropped,''
and so we can define the scope of a node in an unambiguous way.

\textbf{\emph{Unique minimal communication}.}\ 
We first define a notion of \emph{happens-before}
with \emph{events} as nodes labeled with motions $\motion{\dtsess}{(\pp_i{:}\ma_i)}$ or message exchanges $\pp\to \q$.
We define a \emph{happens before} relation on events as the smallest strict partial order such that: 
there is an edge $n:e \to n':e'$ if
there is an internal node $n^*$ in the tree labeled with $.$ and $n$ is in the left subtree of $n^*$ and
$n'$ is in the right subtree of $n^*$, and one of the following holds:
(1)~$e \equiv \pp\to\q$, $e' \equiv \pp'\to\q'$ and $\pp'$ is either
$\pp$ or $\q$; or
(2)~$\participant{e} \cap \participant{e'} \not = \emptyset$.  
We say there is an \emph{immediate happens before} edge $n\to n'$ if $n\to n'$ is in the happens before
relation but there is no $n''$ such that $n \to n''$ and $n''\to n'$.
A global type has \textbf{\emph{unique minimal communication}} after
motion 
iff every motion node $n:\motion{\dtsess}{\pp_i:\ma_i}$
has a \emph{unique} immediate happens before edge to some node $n':\pp\to\q$.

\textbf{\emph{Sender readiness}.}\ 
Consider now a type with unique minimal communication after motion and consider the unique immediate happens before edge 
$n:\motion{\dtsess}{(\pp_i{:}\ma_i)} \to n':(\pp\to\q)$.
Suppose $\pp$ is among the processes executing the motion primitives.
Since the motion primitives are assumed to be compatible, we know that at most one process is executing a non-interruptible motion,
and all the others are executing interruptible motions.
Since $\pp$ is the next process to send a message, we must ensure that it is the unique process executing the non-interruptible motion.
Compatibility ensures that the durations of all other processes are such that they are ready to receive the message from $\pp$.
We call this condition \textbf{\emph{sender readiness}}: whenever there is a communication after a motion, the sender of the communication
was the unique participant executing a non-interruptible motion; or every process in the motion was executing an interruptible motion.
On the other hand, if $\pp$ is not among the processes (this can happen when to parallel branches merge),
every participant in the motion must have been executing an interruptible motion.

\textbf{\emph{Total synchronisation}.}\ 
It is not enough that $\pp$ sends a message to only one other participant: if $\pp$ and $\q$ decide to switch to
a different motion primitive, every other participant in scope must also be informed.
This is ensured by the \textbf{\emph{total synchronisation}} between motions: 
we require that whenever there is a happens before edge between $n:\motion{\dtsess}{(\pp_i:\ma_i)}$ to $n':\motion{\dtsess}{(\pp'_i:\ma'_i)}$,
then for every $\pp_i$, there is a node where $\pp_i$ is a sender or a recipient that happens before $n'$. 

\textbf{\emph{Synchronisability}.} \ 
We call a global type is \textbf{\emph{synchronisable}} if it satisfies 
unique minimal communication, 
sender readiness, and 
total synchronisation. 
Synchronisability of a global type can be checked in time polynomial
in the size of the type by unfolding each recursive type once.  

\begin{example}[Synchronisability] \rm 
We illustrate the synchronisability condition. 
In general, a node can have multiple outgoing immediate happens before edges.
Consider, for distinct participants $\pp,\q,\pp'$, the type
${\pp}\rightarrow {\q}:{\ell}.
{\pp'}\rightarrow {\q}:{\ell'}.\GL$. 
The minimal senders are $\pp$ and $\pp'$, because these two sends cannot be uniquely ordered in an execution.
We avoid such a situation because in a process $\q$ we would not know whether to expect
a message from $\pp$ or from $\pp'$ or from both.

Note that synchronisability disallows a sequence of two motions without an intervening
message exchange.
Thus,
$\motion{\dtsess}{(\pp{:}\ma_1,\q{:}\ma_1')}.\motion{\dtsess}{(\pp{:}\ma_2,
  \q{:}\ma_2')}.\GL$ is not well-formed. 
This is because the implementation of the participants do not have any mechanism to synchronize 
when to shift from the first motion primitive to the second.

We require the total synchronisation between motions to notify every participant by some message between any change of motion primitives.
We disallow the type 
$\motion{\dtsess}{(\pp{:}\ma_{11},\pq{:}\ma_{12}, \pr{:}\ma_{13})}.{\pp}\rightarrow{\pq}:\ell.\motion{\dtsess}{(\pp{:}\ma_{21}, \pq{:}\ma_{22}, \pr{:}\ma_{23})}.\GL$.
because $\pr$ does not know when to shift from 
$\ma_{13}$ to $\ma_{23}$.

Note however, that the scoping introduced by $\GLPAR$ requires messages to be sent only to ``local'' subgroups.
The following type is fine, even though $\pp_3$ was not informed when $\pp_1$ and $\pp_2$ changed motion primitives, as it is in a different branch and there is no
happens before edge:
\begin{align*}
\left((
\begin{array}{l}
(\motion{\dtsess}{(\pp_1{:}\ma_1, \pp_2{:}\ma_2)}.
{\pp_1}\rightarrow{\pp_2}:{\ell_1}.
\motion{\dtsess}{(\pp_1{:}\ma_1',\pp_2{:}\ma_2')}) 
\end{array}  
) \GLPAR \motion{\dtsess}{\pp_3{:}\ma_3} \right)
.{\pp_1}\rightarrow{\pp_2}{:\ell_1}.{\pp_1}\rightarrow{\pp_3}{:\ell_3}.G
\end{align*}
\end{example}

\begin{example}\rm
By inspection, the motion type in Figure~\ref{fig:basic-example} is well-scoped and fully separated.
It is also synchronizable: to see this, note that for every joint motion primitive, there is at most
one motion which is non-interruptible and the participant corresponding to that motion primitive
is the unique minimal sender.
Moreover, the unique minimal sender sends messages to every participant in the scope of the separating conjunction,
thus the type is totally synchronized.
\end{example}

We are ready to define when global types are well-formed.

\begin{definition}[Well-formed global types]
  \label{def:well-formed}
A global type $\GL$ is \emph{well-formed} if it satisfies the following conditions:
\begin{enumerate}
\item \emph{Total choice:} for each branching type $\Gvti{\pp}{\q}{[\pred_i]\ell}{\pred'}{G}$, we have $\bigvee_i \pred_i$ is valid.

\item 
\emph{Well-scoped:} 
$\GL$ is well-scoped and fully-separated.

\item
\emph{Total and compatible motion:} 
For every motion type $\motion{\dtsess}{(\pp_i {:} \ma_i)}$, there exists a motion primitive
for each participant in scope 
and moreover the tuple of motion primitives $(\pp_i{:}\ma_i)$ is compatible.

\item \emph{Synchronisability:}
The global type is synchronisable.
\end{enumerate}
\end{definition}
\noindent Hereafter we assume global types are well-formed.

\begin{restatable}[Well-formedness]{proposition}{theoremWF}
\label{pro:WF}
Synchronisability is decidable in polynomial time.
Checking well-formedness of global types reduces to checking validity in the 
underlying logic of predicates in global types and motion primitives.
\end{restatable}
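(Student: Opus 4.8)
The plan is to split the four conditions of Definition~\ref{def:well-formed} into a \emph{structural} group, decidable by a purely combinatorial analysis of the once-unfolded syntax tree, and a \emph{semantic} group, each of whose clauses is a first-order implication that becomes a validity query in the underlying predicate logic. The first sentence of the proposition then follows from the structural analysis of synchronisability, and the second from collecting the semantic clauses.

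First I would establish polynomial-time decidability of synchronisability. Since recursion is guarded and recursive types with the same regular tree are identified, I would unfold each $\mu$-binder exactly once to obtain a finite tree whose leaves carry motions $\motion{\dtsess}{(\pp_i{:}\ma_i)}$ or messages $\pp\to\q$ and whose internal nodes carry $.$, $+$, or $\GLPAR$. On this tree the scope labelling of clauses (1)--(6) of well-scopedness is computed top-down in linear time, and by the uniqueness remark for fully-separated types the scope of every node is determined. The happens-before relation is then the transitive closure of the edges generated by the two clauses of its definition, computable in time polynomial in the tree size. Unique minimal communication, sender readiness, and total synchronisation are each local predicates over this relation: respectively, counting immediate happens-before successors of each motion node; comparing the unique minimal sender against the participant annotated $\delayed$ in the compatible tuple; and checking, for each participant of a motion, the existence of an intervening communication before the next motion. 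Each is therefore decidable in polynomial time.

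The key justification---and the step I expect to be the main obstacle---is that unfolding \emph{once} is sound and complete for these conditions: I must argue that every happens-before edge relevant to a synchronisability violation already appears between two successive iterations, so that no deeper unfolding can expose a new violation or repair an apparent one. Guardedness is what makes this work, since it forces at least one prefix between a recursion variable and its binder, so the loop-back behaviour at the recursion boundary is faithfully represented by the second copy produced by one unfolding. I would make this precise by showing that the happens-before relation on the $k$-fold unfolding is, modulo renaming of iterations, the periodic extension of the relation on the two-fold unfolding, and that the three conditions are invariant under this periodic extension.

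Finally I would reduce the remaining content of well-formedness to validity. Total choice (clause~1) is literally the validity of $\bigvee_i \pred_i$ for each branching. For total and compatible motion (clause~3), the \emph{total} part---existence of a motion primitive for each participant in scope---is a syntactic lookup against the scope labelling, while \emph{compatibility} is derivability by finitely many applications of \rulename{AGcomp} following the fixed shape of the motion tuple, so it suffices to discharge the premises of each instance: the timing and $\timestep$ premises are combinatorial, whereas the footprint-disjointness, footprint-containment, and refinement premises (clauses (1)--(5) of $\refines$) are implications in the predicate logic over $X\cup W\cup\set{\clock,\mathsf{x},\mathsf{y},\mathsf{z}}$. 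The existential quantifiers $\exists\,\FP_1,\FP_2$ in \rulename{AGcomp}, and likewise in full-separation, are eliminated by taking the footprints supplied by the refinement targets as canonical witnesses, turning each check into a first-order validity query; the variable-support side condition of full-separation is syntactic and hence polynomial. Collecting these queries shows that, modulo the polynomial structural checks, well-formedness holds iff a finite set of validity queries in the underlying logic all succeed.
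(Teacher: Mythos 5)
Your proposal is correct and follows essentially the same route as the paper: unfold each recursive type once, check the combinatorial synchronisability conditions on the resulting finite tree via the happens-before relation in polynomial time, and reduce the remaining clauses of well-formedness (total choice, compatibility, full separation) to validity queries in the underlying logic. The paper's own proof is only a two-sentence sketch that delegates the ``one unfolding suffices for causality checks'' claim to prior work on multiparty session types, whereas you supply the periodicity/guardedness argument and the witness-elimination for the existentials in \rulename{AGcomp} explicitly --- useful elaboration, but not a different approach.
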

\begin{proof}
All causalities in global types can be checked 
when unfolding each recursive type once 
(cf.~\cite[\Sec.~3.5]{HYC2016}).
Hence synchronisability of a global type can be checked in time polynomial
in the size of the type by checking unique minimum communication,
sender readiness and total synchronisation simultaneously. 
By Definition \ref{def:well-formed}, checking well-formedness depends on 
checking validity in the 
underlying logic. 
\end{proof}

\subsection{Local Types and Projection}
\label{sec:local}

Next, we project global types to their end points against which each
end-point process is typed. 
The syntax of \emph{local types} extends local types from \cite{ECOOP19}.
Each local type represents a specification for each component. 

\begin{definition}[Local motion session types]
  \label{def:session-types}%
  The grammar of local types, ranged over by $T$, is:
  \begin{align*}
  T ::=  \ \Lmotion{\dtsess}{\ma}{T} \SEP \oplus\{[\pred_i]\tout\q{\ell_i}{\pred_i'}.T_i\}_{i\in I} 
          \SEP  \&\{\tin\pp{\ell_i}{\pred_i}.T_i\}_{i\in I} 
     \SEP  \&\{\tin\pp{\ell_i}{\pred_i}.T_i\}_{i\in I}\ \& \ \Lmotion{\dtsess}{\ma}{T} 
          \SEP \mu \ty.T \SEP \ty
  \end{align*}
  where $\ma$ ranges over motion primitives. 
  We require that $\ell_i \neq \ell_j$ whenever $i \neq j$, for all $i,j\in I$.
  We postulate that recursion is always guarded. 
  Unless otherwise noted, types are closed.
\end{definition}

Our goal is to \emph{project} a global type onto a participant to get a local type.
To define this notion formally, we first need the following definition that merges two
global types.

\begin{definition}[Merging] 
  \label{def:merge}%
We define a \emph{merging operator} $\bigsqcap$, which is a partial operation over global types, as:
\[
\small
  \textstyle%
  T_1 \mathbin{\bigsqcap} T_2 \;=\; 
  \left\{%
\text{%
    \begin{tabular}{@{\hskip 0mm}l}
      $T_1$ \hfill if $T_1 = T_2$ \hspace{2mm}
      \iftoggle{full}{$\rulename{mrg-id}$}{}  \\[2mm]%
      $\&\{\tin{\pp'}{\ell_k}{\pred_k}{.T_k}\}_{k \in I \cup J}$ \quad\quad
      \hfill if \(%
      \left\{%
      \begin{array}{l}
        T_1 = \&\{ \tin{\pp'}{\ell_i}{\pred_i}{.T_i}\}_{i \in I} \hfill \text{ and}\\%
        T_2 = \&\{ \tin{\pp'}{\ell_j}{\pred_j}{.T_j}\}_{j \in J}%
      \end{array}
      \right.%
      \)\\[3mm]
      \iftoggle{full}{\hfill $\rulename{mrg-bra1}$\\}{}
      $\&\{\tin{\pp'}{\ell_k}{\pred_k}{.T_k}\}_{k \in I \cup J} \& \ \Lmotion{\dtsess}{\ma \bigsqcap \ma'}{T'}$ \quad\quad
      \hfill if \(%
      \left\{%
      \begin{array}{l}
        T_1 = \&\{ \tin{\pp'}{\ell_i}{\pred_i}{.T_i}\}_{i \in I} \& \ \Lmotion{\dtsess}{\ma}{T'} \hfill \text{ and}\\%
        T_2 = \&\{ \tin{\pp'}{\ell_j}{\pred_j}{.T_j}\}_{j \in J} \& \ \Lmotion{\dtsess}{\ma'}{T'}%
      \end{array}
      \right.%
      \)\\[3mm]
      \iftoggle{full}{\hfill $\rulename{mrg-bra2}$\\}{}
      $\&\{\tin{\pp'}{\ell_k}{\pred_k}{.T_k}\}_{k \in I \cup J} \& \ \Lmotion{\dtsess}{\ma}{T'}$ 
      \hfill if \(%
      \left\{%
      \begin{array}{l}
        T_1 = \&\{ \tin{\pp'}{\ell_i}{\pred_i}{.T_i}\}_{i \in I} \hfill \text{ and} \\ 
        T_2 = \&\{ \tin{\pp'}{\ell_j}{\pred_j}{.T_j}\}_{j \in J} \& \ \Lmotion{\dtsess}{\ma}{T'}%
      \end{array}
      \right.%
      \) \\[3mm]
      \iftoggle{full}{\hfill $\rulename{mrg-bra3}$\\}{} 
      $\&\{\tin{\pp'}{\ell_i}{\pred_i}{.T_i}\}_{i \in I} \& \ \Lmotion{\dtsess}{\ma \bigsqcap \ma'}{T'}$ 
      \hfill if \(%
      \left\{%
      \begin{array}{l}
        T_1 = \Lmotion{\dtsess}{\ma}{T'} \hfill \text{ and} \\%
        T_2 = \&\{ \tin{\pp'}{\ell_i}{\pred_i}{.T_i}\}_{i \in I} \& \ \Lmotion{\dtsess}{\ma'}{T'}%
      \end{array}
      \right.%
      \) \\[3mm]
      \iftoggle{full}{\hfill $\rulename{mrg-bra4}$\\}{}
      $\&\{\tin{\pp'}{\ell_i}{\pred_i}{.T_i}\}_{i \in I} \& \ \Lmotion{\dtsess}{\ma}{T'}$ 
      \hfill if \(%
      \left\{%
      \begin{array}{l}
        T_1 = \Lmotion{\dtsess}{\ma}{T'} \hfill \text{ and} \\%
        T_2 = \&\{ \tin{\pp'}{\ell_j}{\pred_j}{.T_j}\}_{i \in I}%
      \end{array}
      \right. %
      \) \\[3mm]
      \if{\hfill $\rulename{mrg-bra5}$\\}\else{}\fi 
      $T_2 \mathbin{\bigsqcap} T_1$ \hfill if $T_2 \mathbin{\bigsqcap} T_1$ is defined, \\[2mm]
      undefined \hfill otherwise.
    \end{tabular}
  } %
  \right.%
\]
The merge operator for motion $\ma \bigsqcap \ma'$ returns a motion primitive $\ma''$ such that $\ma'' \refines \ma$ and $\ma'' \refines \ma'$.
We can build such $\ma''$ by taking the union of the assumptions and precondition and the intersection of the guarantees, postcondition, and footprint. 
\end{definition}

\begin{definition}[Projection] 
  \label{pro}%
  \label{def:projection}%
  The \emph{projection of a global type onto a participant $\pr$} is the largest relation $\proj{}{\pr}{}$ between global and session types such that, whenever
  $\proj{\GL}{\pr}{T}$:
\begin{enumerate}
    \item  if ${\GL}=\Gvti{\pp}{\pr}{[\pred_i]\ell}{\pred'}{\GL}$ then ${T}=\&\{ \tin\pp{\ell_i}{\pred'_i}.{T_i}\}_{i\in I}$ with $\proj{\GL_i} \pr T_i$;
    \item  if ${\GL}=\Gvti{\pr}{\q}{[\pred_i]\ell}{\pred'}{\GL}$ then ${T}= \oplus\{[\pred_i]\tout\pq{\ell_i}{\pred'_i}.{T_i}\}_{i\in I} $ and $\proj{\GL_i}\pr  T_i,$ $\forall i \!\in\! I$;
    \item  if ${\GL}=\Gvti{\pp}{\pq}{[\pred_i]\ell}{\pred'}{\GL}$ and $\pr \!\not\in\! \{\pp,\pq\}$ then there are $T_{\!i},$ $i \in I$ s.t.
    ${T} = \bigsqcap_{i \in I}\!\!{T_{\!i}}$, and
    $\proj{\GL_i\!}{\pr}{\!T_{\!i}},$ for every $i\in I$; 
  \item  if $\GL = \mu\ty. \GL'$ then $T=\mu\ty.T'$ with $\proj{\GL'}\pr T'$ if $\pr$ occurs in $\GL'$, otherwise undefined; and
\item if $\GL = \GLPre. \GL$ then $T=T' . T''$ 
with $\proj{\GLPre}{\pr}{T'}$ and 
$\proj{\GL}{\pr}{T''}$.\footnote{We abuse $T$ to denote \emph{a local type 
prefix} which is given replacing $\ty$ and $\mu\ty.T$ by $\NUL$ (as
defined for global type prefix $g$).}  

 \item 
$\proj{\motion{\dtsess}{(\pp_i {:} \ma_i)}}{\pr}{
\motion{\dtsess}{a_j}}$ with $\pr = \pp_j$;

\item  $\proj{(\GLPre_1 \GLPAR \GLPre_2)}{\pr}{\TLPre_i}$ 
and $\proj{\GLPre_i}{\pr}{\TLPre_i}$ if 
$\pr \in \participant {\GLPre_i}$ ($i\in \{1,2\}$)

\item  $\proj{\GLPre_1.\GLPre_2}{\pr}{\TLPre_1.\TLPre_2}$ 
with $\proj{\GLPre_i}{\pr}{\TLPre_i}$ ($i=1,2$)

\end{enumerate}
We omit the cases for recursions and selections 
(defined as \cite[Section~3]{SY2019}). The branching prefix 
is defined as the branching in (1-3). 
\end{definition}

\begin{example}[Projection of Figure~\ref{fig:basic-example}]\rm 
For the example from \Sec~\ref{sec:example}, 
the local type of the cart is:
\begin{align*}
\small
\mu \ty.\left(
\begin{array}{l}
	 \ (\Green!\ARRIVE(t).\Red!\FREE.\\
         \ \ \motion{\dtsess}{\MOVEM(\Green)}.\Green!\READY.
        \motion{\dtsess}{\IDLEM}.\Green?\OK.\motion{\dtsess}{\MOVEM}) \\
        \& \ (\Red!\ARRIVE(t).\Green!\FREE. \ldots \text{symmetric} \ldots)\ 
\end{array}
\right).\ty
\end{align*}
\end{example}

\subsection{Subtyping and Typing Processes}
The local types are a specification for the processes and there is some freedom to implement these specifications.
The subtyping relation helps bridge the gap between the specification and the implementation.

\begin{definition}[Subtyping]
\label{def:subt}%
\label{tab:sync:subt}%
{\em Subtyping} $\subt$ is the largest relation between session types coinductively defined by rules in Figure~\ref{fig:subtyping}.
\end{definition}

\begin{figure}[t]
\[
\small
\begin{array}{@{}c@{}}
\cinferrule[\rulename{sub-motion}]{
    \ma \refines \ma' \\
    T \subt T'
}{
  \Lmotion{\dtsess}{\ma}{T} \subt  \Lmotion{\dtsess}{\ma'}{T'}
}\quad
\cinferrule[\rulename{sub-out}]{
 \forall i\in I.\ \pred_i ⇒ \pred''_i ∧ \pred'_i ⇒ \pred'''_i  ∧ T_i \subt T'_i  
}{ \oplus\{[\pred_i]\tout\pp{\ell_i}{\pred'_i}.T_i\}_{i\in I} \subt
   \oplus\{[\pred''_i]\tout\pp{\ell_i}{\pred_i'''}.T_i'\}_{i\in I\cup J}
}
\\[5mm]

\cinferrule[\rulename{sub-in1}]{
    \forall i\in I.\ \pred_i' ⇒ \pred_i ∧ T_i \subt T_i' \\ 
    \Lmotion{\dtsess}{\ma}{T} \subt \Lmotion{\dtsess}{\ma'}{T'}
}{
  \&\{\tin\pp{\ell_i}{\pred_i}.T_i\}_{i\in I\cup J} \ \& \ \Lmotion{\dtsess}{\ma}{T}
    \subt 
   \&\{\tin\pp{\ell_i}{\pred_i'}.T_i'\}_{i\in I}\ \& \ \Lmotion{\dtsess}{\ma'}{T'}
}
\\[5mm]

\cinferrule[\rulename{sub-in2}]{
\forall i\in I.\ \pred_i' ⇒ \pred_i ∧ T_i \subt T_i'}{
  \&\{\tin\pp{\ell_i}{\pred_i}.T_i\}_{i\in I\cup J}\ \& \ \Lmotion{\dtsess}{\ma}{T}
    \subt 
   \&\{\tin\pp{\ell_i}{\pred_i'}.T_i'\}_{i\in I}
}
\quad 
\cinferrule[\rulename{sub-in3}]{
    \Lmotion{\dtsess}{\ma}{T} \subt \Lmotion{\dtsess}{\ma'}{T'}
}{
  \&\{\tin\pp{\ell_i}{\pred_i}.T_i\}_{i\in I}\ \& \ \Lmotion{\dtsess}{\ma}{T}
  \subt 
  \Lmotion{\dtsess}{\ma'}{T'}
}
\end{array}
\]
\caption{Subtyping rules\label{fig:subtyping}}
\end{figure}

A subtype has fewer requirements and provide stronger guarantees.
The subtyping of motion primitives (\rulename{sub-motion}) 
allows replacing an abstract motion primitive
by a concrete one if $\ma$ refines $\ma'$ and $T \subt T'$. 
For internal choice and sending (\rulename{sub-out}), the subtyping ensure the all the messages along with the associated predicate are allowed by the supertype.
Predicate refinements are converted to logical implication.

For the external choice and message reception (\rulename{sub-in1,2,3}), subtyping makes sure the process reacts properly to the messages expected by the super type.
A subtype can have cases for more messages but they don't matter.
The subtype guarantees that the process will never receive theses messages.
On the other hand, the sender of the messages has to be the same.
We enforce this directly in the syntax of the programming language and the type system.
It may seem that (\rulename{sub-in3}) introduces a new sender in the subtype 
but this rule is correct because, in our synchronous model, sending is blocking
and cannot be delayed.
Therefore, if the supertype only contains a motion, we know for a fact that no messages can
arrive unexpectedly in the subtype. 

Our subtyping conditions generalise those of motion session types of \citet{ECOOP19} in multiple ways.
First, we allow refinement of motion primitives (\rulename{sub-motion}) as a way to abstract trajectories.
In \cite{ECOOP19}, the actions are abstract symbols and are fixed statically. 
Second, the rules for communication must check predicate refinements---in \cite{ECOOP19}, global
types did not have refinements.

The final step in our workflow checks that the process of each
participant in a program implements its local type using the typing
rules from Table~\ref{tab:sync:typing}.  
Our typing rules additionally
maintain a logical context to deal with the predicate refinements
\cite{LiquidTypes}.

We write $\der{\Gamma,\Sigma}{S}{T}$ to indicate the statement $S$ has
type $T$ 
under the variable context $Γ$ and the logical context $Σ$.
$\Gamma$ is the usual typing context; $\Sigma$ is a formula characterising what is known about the state of the 
system when a process is about to execute. 
The typing rules are shown in Table~\ref{tab:sync:typing}. 
\rulename{t-motion} considers $\Sigma$ derives the pre-condition 
of motion $a$, while $Q$ guarantees its post-condition.
\rulename{t-out} assumes $\Sigma$ of $P$ derives a conjunction of the guard and
refined predicates declared in types. \rulename{t-choice1} is its
dual and \rulename{t-choice2} includes the default motion branch. 
\rulename{t-cond} is similar with the usual conditional proof
rule. 
\rulename{t-sess} combines well-typed participants each of which
follows a projection of some global type $\GL$ given assumptions over the initial state $\bigwedge_{i\in I}𝓟_i$ of each participant.
We requires the initial state to be collision free as the global types only maintains collision freedom.

\begin{table}[t]
\caption{\label{tab:sync:typing} Typing rules for motion processes.}
\centerline{$
\small
\begin{array}{c}
\inferrule[\rulename{t-sub}]{
    \der{\Gamma,\Sigma}{\PP}T \quad T\subt T'
}{
    \der{\Gamma,\Sigma}{\PP}T'
}
\qquad
\inferrule[\rulename{t-rec}]{\der{\Gamma\cup\set{\procvar:T},\true}{\PP}{T}}{\der{\Gamma,\Sigma}{\mu \procvar.\PP}T}
\qquad
\inferrule[\rulename{t-motion}]{
\der{\Gamma,a.\post}{{Q}}{T}\\
\Sigma \Rightarrow a.\pre\\
}{\der{\Gamma,\Sigma}{\motion{dt}{a}.{Q}}{\Lmotion{dt}{a}{T}}}
\\\\
\inferrule[\rulename{t-out}]{
    \Sigma \Rightarrow \pred\wedge \pred'\set{e/\nu}\\
    \der{\Gamma,\Sigma}{\e}{S}\\
    \der{\Gamma,\Sigma}{P}T
}{
    \der{\Gamma,\Sigma}{{\q}!{\ell(\e)}.{P}}{[\pred]\q!\ell(\set{\nu: S\mid\pred'}).T}
}
\quad 
\inferrule[\rulename{t-choice1}]{
    \forall i\in I \quad \der{\Gamma\cup\set{x_i:\S_i},\Sigma\wedge \pred_i\set{x_i/\nu}}{\PP_i}\T_i
}{
    \der{\Gamma,\Sigma}{\sum\limits_{i\in I}{\procin  \q   {\ell_i(x_i)} \PP_i}}{\&\{\q?\ell_i(\pred_i).\T_i}\}_{i\in I}
}
\\\\
\rulename{t-choice2}
\inferrule%
    {
    \forall i\in I \quad \der{\Gamma\cup \set{x_i:\S_i},\Sigma\wedge \pred_i\set{x_i/\nu}}{\PP_i}\T_i \quad
    \der{\Gamma,\Sigma}{\motion{dt}{a}.{Q}}{\T}
}{
    \der{\Gamma,\Sigma}{\sum\limits_{i\in I}{\procin  \q   {\ell_i(x_i)} \PP_i}+\motion{dt}{a}.{Q}}{\&\{\q?\ell_i(\set{\nu:S\mid\pred_i}).T_i}\}_{i\in I}\ \&\ T
}
\\\\
\rulename{t-cond}
\inferrule%
    {
    \der\Gamma\e\tbool \
    \exists k \in I \
    \Sigma \wedge \e \Rightarrow \pred_k\
    \der{\Gamma,\Sigma\wedge \e}{\PP_1}{T_k} \
    \der{\Gamma, \Sigma\wedge \lnot \e}{\PP_2}{\oplus\{[\pred_i]T_i\}_{i \in I \setminus \{k\}}}
}{
    \der{\Gamma,\Sigma}{{\cond{\e}{\PP_1}{\PP_2}}}{ \oplus\{[\pred_i]T_i\}_{i\in I}}
}
    \\\\
    \rulename{t-sess}
    \inferrule%
        {
    \forall i\in I\quad\der{∅,𝓟_i}{\PP_i}{\projt \GL {{\pp_i}}} \\
    \participant \GL = \PC \\
    \forall j\in I.\ j≠i ⇒ \geom{\pp_i}(𝓟_i)\cap\geom{\pp_j}(𝓟_j) = \emptyset
}{
    \der{\bigwedge\limits_{i\in I}𝓟_i}{\prod\limits_{i\in I}\pa {\pp_i}\PP_i}\GL
}
\end{array}
$}
\vspace*{-1em}
\end{table}

\subsection{Soundness}

The soundness of multiparty sessions is shown, using subject reduction (typed sessions reduce
to typed sessions) and progress.
In order to state soundness, 
we need to formalise how global types are reduced 
when local session types evolve. 
To define the consumption of global types, as done for processes in \Sec~\ref{sec:opsem}, 
we extend the syntax of global types to allow \emph{partial consumption} 
of motion types.
In addition, we extend $\GLPre$, annotating each motion type with its unique minimal sender,  
$\motion{\dtsess}{(\pp_i{:}\ma_i)\AT t}^{\pp}$ for $t\geq 0$.  

\begin{definition}[Global types consumption and reduction]
\label{def:global-type-consumption}%
\label{def:global-type-reduction}%
The {\em consumption} of the communication $\pp\lts{\ell}\q$ or motion $\motion{\dtsess}{a}[t_1,t_2]$ %
for the global type $\GL$ 
(notation $\redG \GL\pp\ell\q$ and 
$\redGM{\GL}{\motion{\dtsess}{a}[t_1,t_2]}$) %
is the global type %
defined (up to unfolding of recursive types) using the following rules: 
\begin{enumerate}
\item\label{Item:GC1}  $\redG{\big(\GvtPair \pp\q {{\ell}_i.{\GL_i}}\big)}\pp\ell\q
  = \GL_{k}$
            if there exists $k \in I$ with $\ell = \ell_k$;
\item\label{Item:GC1}
  $\redG{\big(\GvtPair \pr\ps {{\ell}_i.{\GL_i}}\big)}\pp\ell\q
   =\GvtPair \pr \ps {{\ell}_i.{(\redG{\GL_i}{\pp}{\ell}{\q})}}$
if $\{\pr, \ps\} \cap \{\pp, \q\} = \emptyset$;
\item\label{Item:GC3}
  $\redG{(\GLPre₁ \ast \GLPre₂)}\pp\ell\q  =  (\GLPre₁' \ast \GLPre₂)$ if $\set{\pp,\q}\subseteq \participant{\GLPre_1}$ and $\redG{\GLPre₁}\pp\ell\q = \GLPre₁'$ and 
  symmetrically if $\set{\pp,\q}\subseteq\participant{\GLPre_2}$;

\item\label{Item:GC41}  
  $\redGM{\motion{\dtsess}{(\pp_i{:}\ma_i)}}
     {\motion{\dtsess}{(\pp_i{:}\ma_i[t_i;t'_i])}} =
      \motion{\dtsess}{(\pp_i{:}\ma_i)\AT 0}^\pp$ 
      where $\pp$ is the unique minimal sender in $\GL$.
\item\label{Item:GC4}
  $\redGM{\motion{\dtsess}{(\pp_i{:}\ma_i)\AT t}^\pp}
     {\motion{\dtsess}{(\pp_i{:}\ma_i[t_i;t'_i])}} =
      \motion{\dtsess}{(\pp_i{:}\ma_i)\AT t'}^\pp$
            if $t' > t$ and for all $i$, $t'_i-t_i = t'-t$  and $t' \leq \lceil\ma_i.D\rceil$ where $\lceil D\rceil$ denotes the upper
	    bound of the interval $D$;
\item\label{Item:GC5}
  $\redGM{(\GLPre₁ \GLPAR \GLPre₂)}
{\motion{\dtsess}{(\pp_i{:}\ma_i[t_i;t'_i])_{\pp_i∈
      \participant{\GLPre_1} \uplus \participant{\GLPre_2}}}}  =  
(\GLPre₁' \GLPAR \GLPre₂')$ 
            if
            $\redGM{\GLPre_j}{\motion{\dtsess}{(\pp_i:\ma_i[t_i;t'_i])_{\pp_i∈\participant{\GLPre_j}}}} = \GLPre_j'$ ($j=1,2$)

\item\label{Item:GC6} 
$\redGM{\GLPre}{x}= \GLPre'$ then 
$\redGM{\GLPre.\GLPre_1}{x}= \GLPre'.\GLPre_1$ and 
$\redGM{\GLPre.\GL}{x}= \GLPre'.\GL$. 
\item\label{Item:GC7} 
$\redGM{\GLPre}{x} = \GLPre'$ 
if $∃\GLPre''.\exists \ASENDPC.\, 
\redastGequiv{\emptyset\rhd \GLPre}{\ASENDPC \rhd \GLPre''}$ and 
$\redGM{\GLPre''}{x} = \GLPre'$ 
where $C$ is a reduction 
context of the prefix defined as: $C = C.g \SEP   C\GLPAR g \SEP   g
\GLPAR C \SEP [\ ]$ and $\ASENDPC$ is a set of enabled senders
such that $\ASENDPC\subseteq \PC$ and  
\begin{enumerate}
\item\label{Item:GC7A} $\redGequiv{\ASENDPC\rhd C[\motion{\dtsess}{(\pp_i{:}\ma_i)\AT t}^\pp]}{
  \ASENDPC \cup\set{\pp} \rhd C }$ 
if $∃ i.~\pp_i = \pp \wedge t ∈ D_i \wedge \timestep_i =
\delayed$.

\item\label{Item:GC7B} $\redGequiv{\ASENDPC\rhd C[\motion{\dtsess}{(\pp_i{:}\ma_i)\AT
      t}^\pp]}{\ASENDPC \rhd C}$ with $\pp\in\ASENDPC$ and  
for all $i$, $\pp_i\not = \pp$ and $t\in D_i$.
\end{enumerate}

%
\end{enumerate}

%
%
%
%
%
%
%
%
%
%
%
%
%
In (6) and (7), we write $x$ as either
${\pp}\rightarrow{\q}$ or $\motion{\dtsess}{a}[t_1,t_2]$.
The \emph{reduction of global types} %
is the smallest pre-order relation closed under the rule:
\(
\GL\RedG\redG \GL\pp{\ell}\q \quad \text{and} \quad \GL\RedG\redGM{\GL}{\motion{\dtsess}{\pp_i{:}\ma_i[t_i,t'_i]}}
\).
\end{definition}

\noindent Note that the above reduction preserves 
well-formedness of global types.
We can now state the main results.
Our typing system 
is based on one in \cite{DBLP:journals/jlp/GhilezanJPSY19}
with motion primitives and refinements. 
Since the global type reduction in 
Definition~\ref{def:global-type-consumption}
is only related to communications or synchronisation
of motions but not refinements, 
(1) when the two processes $\pp$ and $\q$ synchronise by a
communication, its global type can always reduce; or
(2) when all processes synchronise by the same motion
action, their corresponding global type can always be
consumed.

Thus process behaviours correspond to reductions of global types, which
is formulated as in the following theorem.
Recall $\xrightarrow{\alpha}$ denotes any transition relation or reduction. 

\begin{restatable}[Subject Reduction]{theorem}{theoremSR}
\label{th:SR}
  Let  $\N$ be a multiparty session, $\GL$ be a well-formed global type, and a physical state $I$ such that $\der{I}{\N}{\GL}$.
  For all $\N'$, if $\N \xrightarrow{\alpha} \N'$, then
  $\der{I'}{\N'}{{\GL}'}$ for some ${\GL}'$, $I'$ such that ${\GL}\RedG{\GL}'$.
  Thus, if $\N\equiv \N'$ or $\N \xrightarrow{\alpha_1}\cdots\xrightarrow{\alpha_n} \N'$, then $\der{}{\N'}{{\GL}'}$ for some ${\GL}'$ such that ${\GL}\RedG{\GL}'$.
\end{restatable}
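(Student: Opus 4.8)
The plan is to prove the single-step claim by induction on the derivation of $\N \xrightarrow{\alpha} \N'$, with a case analysis on the last operational rule of Figure~\ref{fig:opsem} and inversion on the typing derivation $\der{I}{\N}{\GL}$ (whose root must be \rulename{t-sess}). The multi-step claim then follows by induction on the length of the reduction sequence: each step supplies a $\GL'$ with $\GL\RedG\GL'$, and these compose by transitivity of $\RedG$, while an $\equiv$-step is absorbed using reflexivity of $\RedG$ together with a subsidiary lemma that structural congruence (Figure~\ref{tab:sync:congr}) preserves typing. Before the case analysis I would establish two auxiliary results. The first is an \emph{inversion lemma} for \rulename{t-sess}: from $\der{I}{\N}{\GL}$ I extract, for each $\pp_i$, a local type $T_i$ with $\der{\emptyset,𝓟_i}{\PP_i}{T_i}$, $T_i \subt \projt{\GL}{\pp_i}$, and the initial collision-freedom side-condition. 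The second is a \emph{projection-commutes-with-consumption} lemma: whenever $\GL \RedG \GL'$ via one of the rules of Definition~\ref{def:global-type-consumption}, the projection onto the acting participant(s) is consumed exactly as the corresponding local step requires, and the projection onto every bystander is preserved up to $\subt$. This lemma is where well-formedness (Definition~\ref{def:well-formed})---synchronisability and full separation---is spent, since it is what makes the reduction context and the unique minimal sender of rule~\ref{Item:GC7} well defined. I also recall that $\RedG$ preserves well-formedness, so the hypotheses of both lemmas persist along the induction.

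I would then dispatch the operational rules in groups. For \textbf{communication} (\rulename{comm}, composed from \rulename{send} and \rulename{recv}), inversion on the two end-point typings forces $\GL$, after $\rightsquigarrow^\ast$-normalising any completed motions inside the reduction context of rule~\ref{Item:GC7}, to expose a branching from $\pp$ to $\q$; I set $\GL' = \redG{\GL}{\pp}{\ell}{\q}$ (the communication-consumption rules, including rule~\ref{Item:GC3} for the $\GLPAR$ frame) and $I'=I$, since communication is instantaneous and leaves the stores untouched. Re-typing $\N'$ uses a \emph{substitution lemma} on the receiver: replacing $x$ by $v$ and strengthening the logical context $\Sigma$ with the refinement $\pred'_k$ carried by $\ell_k$, which is sound because \rulename{t-out} had already discharged $\Sigma\Rightarrow\pred'_k\set{\e/\nu}$ on the sender. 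For \textbf{motion} (\rulename{m-par}, built from \rulename{traj-step}), inversion exposes a head joint motion $\motion{\dtsess}{(\pp_i{:}\ma_i)}$; I take $\GL'$ to be the motion consumption of rules~\ref{Item:GC41}, \ref{Item:GC4} and \ref{Item:GC5}, and let $I'$ be the updated stores. Here Theorem~\ref{th:motion-compat} guarantees the joint trajectory exists with disjoint footprints, and the guarantees and postconditions in the motion specification re-establish the state predicates $𝓟_i$ (strengthened to $\ma_i.\post$ once the primitive is finally consumed). The \textbf{silent steps} (\rulename{default}, \rulename{traj-base}, \rulename{non-interrupt}, \rulename{interrupt}, and the conditional rule) either set up or discharge a motion prefix or resolve an internal choice; each is matched either by a $\rightsquigarrow^\ast$ rewrite inside a reduction context (rules~\ref{Item:GC7A} and \ref{Item:GC7B}, which enrol the firing participant into the enabled-sender set $\ASENDPC$) or by taking $\GL'=\GL$ through reflexivity of $\RedG$, using \rulename{t-sub} to realign the selected branch. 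The \textbf{structural rules} (\rulename{r-par}, its $\tau$ variant, and \rulename{r-struct}) follow from the induction hypothesis applied to the active sub-session, using rule~\ref{Item:GC6} for sequencing and rule~\ref{Item:GC3}/\ref{Item:GC5} to leave the inert frame of a $\GLPAR$ and its projection unchanged.

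I expect the principal obstacle to be the \textbf{interaction between the separating conjunction and continuous time}. In a type $(\GLPre_1 \GLPAR \GLPre_2).\GL$ the two frames can be at different stages---one still consuming a non-interruptible motion while the other is ready to communicate---so a single global reduction must be reassembled from independent one-sided progress while keeping the global clock consistent. Rule~\ref{Item:GC5} insists that the \emph{same} elapsed time $t'-t$ be charged to every participant in both frames, and re-establishing this after a one-sided step is exactly what full separation and total synchronisation were designed to ensure; threading these invariants through the induction, and checking that the reduced type stays well-formed so the lemmas keep applying, is the delicate part. A secondary difficulty is the bookkeeping for $\ASENDPC$ and the $\rightsquigarrow^\ast$-normalisation of rule~\ref{Item:GC7}: I must show that the operational order in which motions complete and the designated sender fires can always be matched by a legal rewriting sequence $\redastGequiv{\emptyset\rhd\GLPre}{\ASENDPC\rhd\GLPre''}$, which again reduces to sender readiness from well-formedness.
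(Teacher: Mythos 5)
Your proposal is correct and follows essentially the same route as the paper: induction on the reduction derivation with inversion on \rulename{t-sess}, a substitution lemma for the receiver, a subject-congruence lemma for \rulename{r-struct}, a lemma relating projection to consumption (the paper's Lemma~\ref{lem:erase}, which it states only for communication and handles the motion/silent cases by direct appeal to the clauses of Definition~\ref{def:global-type-consumption}), and transitivity of $\RedG$ for the multi-step claim. Your extra care about $\rightsquigarrow^\ast$-normalisation, the $\ASENDPC$ bookkeeping, and preservation of well-formedness along $\RedG$ matches remarks the paper makes but does not elaborate, so there is no substantive divergence.
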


\iftoggle{full}{
See Appendix~\ref{app:sr} for the detailed proofs.
}{
See \cite{fullversion} for the detailed proofs.
}

Below we state the two progress properties as already explained
in \Sec~\ref{sec:example}. 
The first progress
is related to communications, while the second one gurantees
the typed multiparty session processes are always collision free.
As a consequence, if $\der{I}\N{\GL}$ for a well-formed type,
then $\N$ does not get stuck and can always reduce.

\begin{restatable}[Progress]{theorem}{theoremPROG}
\label{th:progress}
Let $\N$ be a multiparty session, $\GL$ be a well-formed type, and a physical state $I$. 
If $\der{I}\N{\GL}$ then:
\begin{enumerate}
\item 
{\sc (Communication and Motion Progress)}
there is $M'$ such that $M \red M'$ and
\item 
{\sc (Collision-free Progress)} If $M \red^\ast M'$, then 
$M'$ is also collision free.
\end{enumerate}
\end{restatable}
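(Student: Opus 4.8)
The plan is to prove both parts by leveraging Subject Reduction (Theorem~\ref{th:SR}) to keep the session typed against a well-formed global type at every step, and then reading off an available reduction from the head of that global type (using the correspondence between process behaviour and global-type reduction noted before Theorem~\ref{th:SR}).

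For \textbf{Communication and Motion Progress}, I would first unfold any leading recursion (via \rulename{s-rec}), so that $\GL$ presents a head prefix $\GLPre$; guardedness together with the absence of an $\tend$ type guarantees such a head always exists and is one of a communication $\pp\to\q$, a joint motion $\motion{\dtsess}{(\pp_i{:}\ma_i)}$, a choice $+$, or a separation $\GLPAR$. I then argue by cases that a reduction is enabled. If the head is a communication, inverting \rulename{t-out} and \rulename{t-choice1}/\rulename{t-choice2} (modulo \rulename{t-sub}) shows $\pp$ is poised to send and $\q$ to receive; if $\q$ is mid-motion, sender readiness guarantees $\q$'s motion is interruptible ($\instantaneous$), so \rulename{interrupt} admits the receipt and \rulename{comm} fires. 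If the head is a joint motion, well-formedness (total and compatible motion) supplies a motion primitive for every participant in scope and compatibility of the tuple, and Motion Compatibility (Theorem~\ref{th:motion-compat}) then yields a joint trajectory, so \rulename{traj-base} followed by \rulename{m-par} advances time jointly. A choice is resolved by the sender's internal choice (\rulename{t-cond}), and a separation $\GLPre_1\GLPAR\GLPre_2$ reduces by independent progress in either subgroup via \rulename{r-par}. The synchronisability conditions are precisely what make these cases coherent: unique minimal communication and sender readiness ensure that at each motion at most one participant is non-interruptible — exactly the next sender — while all others are interruptible and available, so a single global reduction $M\red M'$ is always enabled.

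For \textbf{Collision-free Progress}, I would induct on the length of $M\red^\ast M'$. The base case is immediate: the premise of \rulename{t-sess} forces the initial physical state to be collision free, $\geom{\pp_i}(\mathcal P_i)\cap\geom{\pp_j}(\mathcal P_j)=\emptyset$ for $i\neq j$. For the inductive step, suppose $M\red^\ast M''\red M'$ with $M''$ collision free; by Subject Reduction $M''$ remains typed against a well-formed global type. I case on the last step. Communication (\rulename{comm}), conditional resolution, recursion unfolding, default selection, and trajectory set-up (\rulename{traj-base}) leave every store $\sstore X_i$ unchanged, so collision freedom is inherited verbatim. The only store-changing step is \rulename{m-par}: its premise $\mathsf{disjoint}(\ma_i,\ma_j)$, guaranteed for all co-moving pairs by the compatibility check \rulename{AGcomp} inside well-formedness, together with Motion Compatibility (Theorem~\ref{th:motion-compat}), keeps the footprints of co-moving participants disjoint at every instant of the trajectory, hence at its endpoint; for participants on opposite sides of a separating conjunction, the fully-separated condition assigns footprint regions $\FP_1,\FP_2$ with $\FP_1\cap\FP_2=\emptyset$, so their footprints cannot meet either. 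Thus $M'$ is collision free.

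The main obstacle I anticipate lies in the first part: establishing that a \emph{single} consistent global reduction is enabled despite the decoupling introduced by $\GLPAR$ and the interplay of interruptible and non-interruptible motions. The delicate point is showing that when the head is a communication following a motion, the intended sender has genuinely finished its non-interruptible motion while all intended receivers are parked on interruptible motions and thus ready — this is where sender readiness and total synchronisation must be invoked carefully, and where I must check that well-scopedness makes these conditions hold \emph{locally} within each $\GLPAR$-subgroup rather than requiring a global barrier. Collision freedom, by contrast, reduces cleanly to the per-motion disjointness already discharged by \rulename{AGcomp} plus the disjoint frame regions from full separation, and should present no essential difficulty beyond bookkeeping.
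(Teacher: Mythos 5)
Your proposal is correct and follows essentially the same route as the paper's proof: case analysis on the head of the well-formed global type (guard/choice, message exchange, joint motion, separation), discharging the motion case via Definition~\ref{def:well-formed}(3) and Theorem~\ref{th:motion-compat}, and proving collision freedom by induction on the reduction sequence using the \rulename{t-sess} premise for the base case and \rulename{AGcomp}/full separation for the motion steps. The only point treated more lightly than in the paper is the choice case, where the paper explicitly invokes the total-choice condition (Definition~\ref{def:well-formed}(1)) to guarantee that at least one guard predicate holds so the selected branch's send is enabled.
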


\begin{proof} ({\em Outline --
\iftoggle{full}{
see Appendix~\ref{app:sr} for the detailed proofs
}{
see \cite{fullversion} for the detailed proofs.
}
})

{\em (Communication Progress)} The proof is divided into two cases:
  (a) the guard appearing in the branching type followed by
  (b) a message exchange between two parties.
Case (a) follows from the fact that there is always at least one guard
  in a choice whose predicate is evaluated to true (Definition \ref{def:well-formed}(1)).
  The typing rules ensure that the predicates are satisfied when a
  message is sent.
Case (b) is proved using
  Theorem~\ref{th:SR} with
  Definition \ref{def:well-formed}(4) since the unique minimal
  sender in $G$ can always send a message.

{\em (Motion Progress)} 
For executing motions, Definition~\ref{def:well-formed}(3) and Theorem~\ref{th:motion-compat}
ensure local trajectories can be composed into global trajectories. 

{\em (Collision-free Progress)} 
By induction.
Under $I$ and by \rulename{t-sess}, $\N$ is initially collision free.
For the communication, note that it does not change the physical state, and therefore, does not impact the geometric footprint used by a process.
Hence the case $\redG \GL\pp\ell\q$ is straightforward.
For motion actions, Definition~\ref{def:well-formed}(3) and Theorem~\ref{th:motion-compat} ensures collision freedom through execution steps.
For a collision free program, collision freedom of the next state follows from compatibility of motion. 
\end{proof}
%

%
%
%
%
%
%
%
%
%
%
%
%
%
%
%
%
%
%
%
%
%
%
%

%
%
%

\subsection{A More Complex Coordination Example with Producer}
\label{subsec:large}

We now discuss an extended version of the coordination example from \Sec~\ref{sec:example}
that we shall implement on physical robots in \Sec~\ref{sec:eval}.
In addition to cart and two arms, we add a producer robot.
The $\Producer$ generates green or red parts and places them on to the cart.
The cart $\Cart$ carries the part to the two robot arms as before.
We shall use this extended version as the basis for one of our case studies.

The coordination protocol is as follows.
First, the protocol starts by syncing all participants.
Then, the cart moves to the producer after announcing a message $\ARRIVE$.
While the cart moves to the producer, the two consumer robots can work independently.
When the cart is at the producer, it synchronises through a message $\READY$, and idles
while the producer places an object on to it.
When the producer is done, it synchronises with the cart, and tells it whether the object is green or red.
Based on this information, the cart tells one of the consumers that it is arriving with a part and
tells the other consumer that it is free to work.
Subsequently, the cart moves to the appropriate consumer to deposit the part, while the other consumer
as well as the producer is free to continue their work.
When the robot is at the consumer, it syncs through a message, and idles until the part is taken off.
After this, the protocol starts again as the cart makes its journey back to the producer, while the consumers
independently continue their work.

\begin{figure*}[t]
{\small
\begin{align*}
\text{Initial Phase} \ \equiv\ 
\begin{array}{l}
\Cart \to \Producer: \ARRIVE. \Cart\to \Green: \START.\Cart\to\Red:\START.\\
  \left( 
 \begin{array}{ccc}
 \left(
 \begin{array}{l}
 (\motion{\dtsess}{\tuple{\Cart:\MOVEM^\delayed(\Producer), \Producer:\WORKM^\instantaneous}}^\Cart.\\
  \Cart\to\Producer:\READY.\motion{\dtsess}{\Cart:\IDLEM^\instantaneous,\Producer:\PLACEM^\delayed})^\Producer.
 \end{array} \right) \\
 \quad\quad ~\GLPAR~ \motion{\dtsess}{\Red: \WORKM^\instantaneous}^\Producer 
 ~\GLPAR~ \motion{\dtsess}{\Green: \WORKM^\instantaneous}^\Producer 
 \end{array} 
 \right)
\end{array}
\end{align*}
\begin{align*}
\begin{array}{l}
\text{Process Green Item} \ \equiv\ \\
  \quad \quad \left( \begin{array}{cc}
               \motion{\dtsess}{\Producer:\WORKM^\instantaneous}^\Cart) ~\GLPAR~ & 
               \left( 
               \begin{array}{l}  
                    \Cart\to\Green:\ARRIVE.\Cart\to\Red:\FREE.  \\
                    \left(\begin{array}{cc}
     \text{as in the green box in Figure~\ref{fig:basic-example}}
                      & ~\GLPAR~ \motion{\dtsess}{\Red:\WORKM^\instantaneous}^\Cart
                      \end{array}\right)
               \end{array} \right)
          \end{array} \right) \\
\end{array}
\end{align*}
\begin{align*}
\text{Global Type} \ \equiv \ 
\mu \ty. & \langle \text{Initial Phase}\rangle.
 \left(
 \begin{array}{l}
 (\Producer  \to\Cart:\GREEN. \langle \text{Process Green Item} \rangle)\\
 + 
  (\Producer\to\Cart:\RED. \langle \text{Process Red Item} \rangle)
 \end{array} 
\right) .\ty
\end{align*}
}
\vspace{-3mm}
\caption{Motion session type annotated with minimal senders. For readability, we have broken the type into sub-parts:
the initial phase, and processing items (the type for processing red items is symmetric and omitted)\label{fig:annotated-global-session-type}}
\end{figure*}

\myparagraph{Global Type}
The global type for the example extends one in Figure~\ref{fig:basic-example}, and is shown
in 
Figure~\ref{fig:annotated-global-session-type}. 
It can be seen that the global type has total choice (trivially), and is well-scoped and synchronisable. 
The motion primitive specifications are omitted;
we ensure in our evaluation that the motion primitives are compatible and the type is fully separated using calls
to the SMT solver dReal.

\myparagraph{Processes}
The processes in this example extend the processes from \Sec~\ref{sec:example} but the $\Red$ and $\Green$
processes are as before:
\[
\begin{array}{ll}
\Cart:
  &
\begin{array}{l}\small
\mu \procvar. \Producer!\ARRIVE.\Red!\START.\Green!\START.\MOVEM(\text{co-ord of }\Producer).\Producer!\READY.\IDLEM.\\
 \ (\Producer?\RED.\Red!\ARRIVE.\Green!\FREE.\MOVEM(\text{co-ord of }\Red).\\
 \ \ \ \quad\quad \Red!\READY.\IDLEM.\Red?\OK.\procvar\\
 \ + \Producer?\GREEN.\textrm{ symmetrically for $\Green$ })
\end{array}
\\[3mm]
\Producer: &
\begin{array}{l}\small
\mu \procvar.\Cart?\ARRIVE.\WORKM.\Cart?\READY.\PLACEM. (\Cart!\GREEN + \Cart!\RED).\WORKM.\procvar
\end{array}    
\end{array}  
\]
%

\myparagraph{Local Types}
The local types for the components are:
\begin{align*}
\small
\begin{array}{c|c}
\small
\begin{array}{rl}
\Cart:\!\!\!\!\!\!\\
\mu \ty. & \Producer!\ARRIVE(t).\Red!\START.\Green!\START.\\
        & \motion{\dtsess}{\MOVEM(\Producer)}.\Producer!\READY.\motion{\dtsess}{\IDLEM}.\\
	& (\ (\Producer?\GREEN.\Green!\ARRIVE(t).\Red!\FREE.\\
        & \ \ \motion{\dtsess}{\MOVEM(\Green)}.\Green!\READY.\\
        & \ \ \motion{\dtsess}{\IDLEM}.\Green?\OK) \\
        & \& \ (\Producer?\RED. \ldots \text{symmetric} \ldots)\ ).\ty
\end{array}
&
\begin{array}{l}
\Producer:\!\!\!\!\!\!\!\!\!\!\!\!\!\!\!\!\!\!\!\!\!\!\\
\quad\mu \ty.  \Cart?\ARRIVE(t).\motion{\dtsess}{\WORKM}.\Cart?\READY.\motion{\dtsess}{\PLACEM}.\\
	 \quad\quad (\Cart!\GREEN.\motion{\dtsess}{\WORKM}.\ty\; \oplus \; \Cart!\RED.\motion{\dtsess}{\WORKM}.\ty)\\
\Red,\Green:\!\!\!\!\!\!\!\!\!\!\!\!\!\!\!\!\!\!\!\!\!\!\!\!\!\!\\
\quad\mu \ty.  (\Cart?\START.\motion{\dtsess}{\WORKM}.\\
          \quad\quad(\ (\Cart?\ARRIVE(t).\motion{\dtsess}{\WORKM}.\\
          \quad\quad      \Cart?\READY.\motion{\dtsess}{\PICKM}.\Cart!\OK) \\
          \quad\quad\& \; (\Cart?\FREE.\motion{\dtsess}{\WORKM}) \ ).\ty
\end{array}
\end{array}
\end{align*}
We can show that all the processes type check.
From the soundness theorems, we conclude that the example 
satisfies communication safety, motion compatibility, and collision freedom.

\section{Implementation and Case Study}
\label{sec:eval}
\myparagraph{Implementation}
Our implementation has two parts.
The first part takes a program, a specification for each motion primitive, and a global type and checks
that the type is well-formed and that each process satisfies its local type.
The second part implements the program on top of
the Robotic Operating System (ROS)~\cite{ROS}, a software ecosystem for robots.
We reuse the infrastructure of motion session types \cite{ECOOP19}; for example, we write programs in \PGCD syntax \cite{PGCD}.
The verification infrastructure is about 4000 lines of Python code, excluding the solver.
The code is available at \url{https://github.com/MPI-SWS/pgcd} and instructions to run these experiments are located in the \texttt{oopsla20\_artifact} branch.

Internally, we represent programs and global types as state machines \cite{DBLP:conf/esop/DenielouY12}, and implement the dataflow analysis
on this representation.
Additionally, we specify motion primitives in the local co-ordinate system for each robot and automatically
perform frame transformations between two robots.
The core of ROS is a publish-subscribe messaging system; we extend the messaging layer to implement a synchronous message-passing layer.
Our specifications contain predicates with nonlinear arithmetic, for example, to represent footprints of components as spheres.
Obstacle are represented as passive components, i.e., components which have a physical footprint but do not execute program.
Such components can interact with normal components through input and state variables and their are considered when checking the absence of collision.
On the other hand, the obstacles are excluded from the checks related to communication, e.g., synchronisability.
We use the dReal4 SMT solver \cite{dReal} to discharge validity queries.
The running times are obtained on a Intel i7-7700K CPU at $4.2$GHz and dReal4 running in parallel on 6 cores.

\myparagraph{Tests}
We evaluate our system on two benchmarks and a more complex case study:
(1) We test scalability of the verification using micro-benchmarks. 
(2) We compare our approach with previous approaches
on a set of robotic coordination scenarios from the literature \cite{PGCD,ECOOP19}, and
(3) As a large case study, we verify and implement 
a complex choreography example based on a variation of the example from \Sec~\ref{subsec:large}.

\begin{wrapfigure}{r}{5cm}
\centering
\includegraphics[width=45mm]{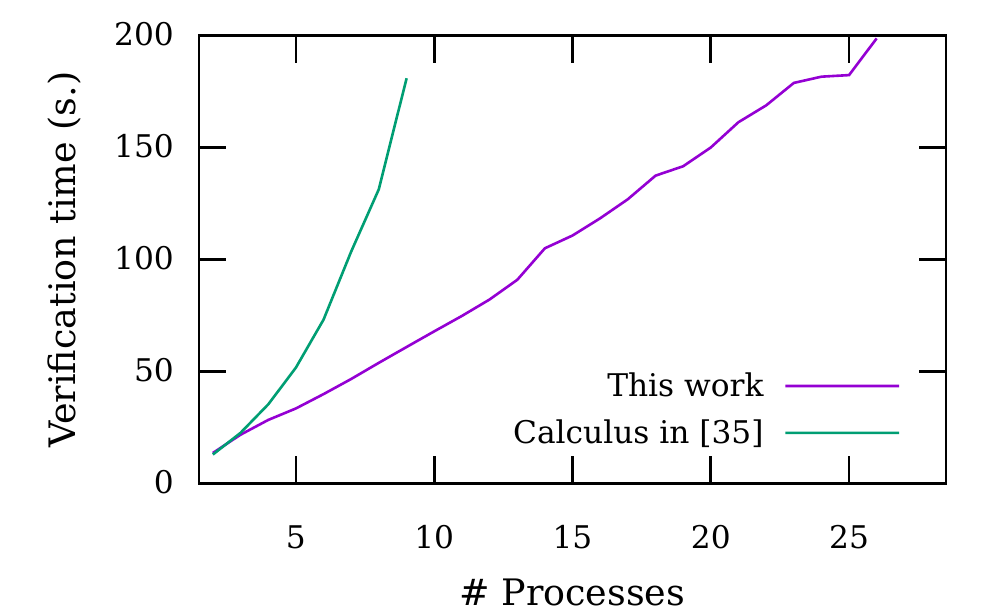}
\vspace{-2ex}
\caption{Parallel lanes micro-benchmarks}
\label{fig:lanes}
\vspace{-2ex}
\end{wrapfigure}

\myparagraph{(1) Micro-benchmarks}
The micro-benchmarks comprise a parametric family of examples that highlight the advantage 
of our motion session calculus, specifically the separating conjunction, over previous typing
approaches \cite{ECOOP19}.
The scenario consists of carts moving back and forth along parallel trajectories, and is
parameterised by the number of carts.
Fig.~\ref{fig:lanes} shows the verification times for this example in the two systems.
We start with 2 robots and increase the number of robots until we reach a $200$ s. timeout for the verification.
The previous calculus is discrete-time and does not separate independent components.
Thus, the type system synchronises every process and, therefore, generates complex collision checks which quickly overwhelms the verifier.
Our global types use the separating conjunction to split the specifications into independent pieces
and the verifcation time increases linearly in the number of processes.

\myparagraph{Setup for (2) and (3)}
For examples used previously in related works \cite{PGCD,ECOOP19}, we write specifications
using our motion session types, taking advantage of the modularity of the type system. 
For the new case study (part (3)), we implement a variation of the example presented in \Sec~\ref{sec:example},
where we decouple the producer placing an object and a sensor that determines if the object is green or red.
Thus, after the producer places an object on the cart, the cart first moves to the sensor, communicates with
the sensor to get the color, and then delivers the object as described in the protocol.
We filmed our experiments and a short video can be seen in the \textbf{\emph{supplementary materials}}.
We use three robots: a custom-built robot arm modified from an open-source arm, a commercial manipulator,
and a mobile cart, as shown in Figure~\ref{fig:robots-xp}.
We use placeholders for the producer arm and color sensor, as we do not have additional hardware (and arms are expensive).
The robots are built with a mix of off-the-self parts and 3D printed parts as described below.

\textbf{\emph{Arm}.}\ 
    The arm is a modified BCN3D MOVEO ({\url{https://github.com/BCN3D/BCN3D-Moveo}}).
    The upper arm section is shortened to make it lighter and easier to mount on a mobile cart. 
    It has three degrees of freedom.
    The motion primitives consist of moving between poses and opening/closing the gripper.
    The motion is a straight line in the configuration space (angles of the joints) which corresponds to curves in physical space.

\textbf{\emph{Panda Arm.}} \ 
    The Panda arm by Franka Emika ({\url{https://www.franka.de/technology}}) is a seven degrees of freedom commercial manipulator platform.
    It is controlled similarly to the MOVEO arm but with a more complex configuration space.
	The Panda arm has a closed-loop controller and can get to a pose with an error less than 0.1mm.
	The controller also comes with collision detection using feedback from the motors.
	We command this arm using a library of motion primitives provided by the manufacturer.

\textbf{\emph{Mobile Carts}.} \ 
    There are two carts.
    Both are omnidirectional driving platforms.
    One uses mecanum wheels and the other omniwheels to get three degrees of freedom 
    (two in translation, one in rotation) and can move between any two positions on a flat ground.
    The advantage of using such wheels is that all the three degrees of freedom are controllable and movement does not require complex planning.
    Its basic motion primitives are moving in the direction of the wheels, moving perpendicularly to the direction of the wheels, and rotating around its center.

The arm and cart are equipped with RaspberryPi 3 model B to run their
processes and use stepper motors.
This enables a control of the joints and wheels with less than 1.8 degree of error.
However, these robots do not have global feedback on their position and keep track of their state using \emph{dead reckoning}.
This can be a challenge for the cart, which keeps accummulating error over time.
As our example is a loop, we manually reset the cart's position when it gets back to the producer after delivering the object.
(A more realistic implementation would use feedback control, but we omit this because control algorithms are a somewhat orthogonal
concern.)
For the producer and the color sorter, we run the processes but have placeholders in the physical world and realise the corresponding action manually.
All computers run Ubuntu 18.04 with ROS 2 Dashing Diademata.

Table~\ref{tbl:xp1} shows the specification for the robots and their motions.
As the two carts share most of their specification, we group them together.
The specification includes the geometrical description of the robots and the motion primitives.

\myparagraph{(2) Revisiting the Examples from \PGCD}
As we build on top of \PGCD, we can use the examples used to evaluate that system.
We take 4 examples, for which we compare the specification effort 
in the previous calculus \cite{ECOOP19} to our new calculus.
As the two calculi have different models for the time and synchronisation, we 
made some minor adaptations such that the same programs can be described by the two calculi.
The old calculus requires motion primitives to have fixed duration and does not support interruptible motions.
Furthermore, there is no ``$\GLPAR$'' in the old calculus so all the motions are always synchronised.
In our new specification, we take advantage of our richer calculus to better decompose the protocol.
We use the following scenarios.
\begin{description}
\item[Fetch.]
    in this experiment, the Moveo arm is attached on top of a cart.
    The goal is to get an object.
    The cart moves toward the object until the object if within the arm's reach.
    The arm grabs the object and the cart goes back to its initial position.
\item[Handover.]
    This experiment is a variation of the previous one.
    There are two carts instead of one and the object to fetch is on top of the new cart.
    The two carts meet before the arm takes an object placed on top of the second cart and then, both go back to their initial positions.
\item[Twist and Turn.]
    In this experiment, the two carts start in front of the each other.
    The arm takes an object from the small cart.
    Then all three robots move simultaneously.

    The cart carrying the arm rotates in place, the other cart describes a curve around the first cart, and arm moves 
    from one side of the cart to the other side.
    At the end, the arm puts the object on the carrier.
\item[Underpass.]
    In this experiment, the arm and the cart cooperate to go under an obstacle.
    First, the cart goes toward the arm, which takes the object from the cart. 
    The cart goes around the arm passing under an obstacle. 
    Finally, the arm puts the object back on the cart on the other side of the obstacle.
\end{description}

Table~\ref{tbl:xp-revisit} reports the size of the examples and their specifications, as well as the verification time.
The verification time is broken down into syntactic well-formedness checks on the choreography, 
the motion compatibility checks for the trajectories, and typing the processes w.r.t.\ local types.
The motion compatibility checks dominate the verification time.
When we compare our new calculus to the previous approach, we can 
observe that the global types are slightly larger but the verification time can be significantly smaller.
The increase in specification size comes with the addition of new Assume-Guarantee contracts when $\GLPAR$ is used.

The verification time from the existing specification are higher than the time reported in the previously published results.
When implementing our new calculus, we found and fixed some bugs in the verifier code from \PGCD.
Those bugs resulted in incomplete collision checks and fixing them increased the burden on the SMT solver.\footnote{\label{note:dreal}
	Two out of 217 queries in the Fetch example are particularly hard for the solver and could not be solved with 1 hour.
	We suspect a bug in the solver as all the other queries are solved in $134$s.
}

\newcommand{\centercell}[1]{\multicolumn{2}{c}{#1}}

\begin{table}[t]
\centering
\caption{Programs, specification, and verification time for the \PGCD examples. ``Prev'' refers to \cite{ECOOP19}.}
\vspace{-1ex}
\scriptsize
\begin{tabular}{lrrr|cc|cc|cc|cc} \toprule
Scenario        & \multicolumn{3}{c}{Program (Loc)}     & \centercell{Global Type}  & \centercell{Syntactic}    &  \centercell{Motion}         & \centercell{Typing (s.)}  \\
                &           &         &                 & \centercell{(LoC)}        & \centercell{Checks (s.)}  &  \centercell{Compatibility (s.)}     &                      \\ 
                &   Arm     & Cart 1  & Cart 2          & Prev & This work&  Prev & This work  & Prev &  This work   &  Prev & This work   \\ \midrule
Fetch           &  $14$     &  $19$   &  $-$            &    $22$ & $42$            &       $0.1$  & $0.1$      & $> 3600$ \footnoteref{note:dreal} &   $\bf 7$           &  $0.1$ & $0.1$       \\
Handover        &   $9$     &   $8$   &  $6$            &    $14$ & $26$            &       $0.1$  & $0.1$      &   $2224$  &   $\bf 70$           &  $0.1$ & $0.1$       \\
Twist and turn  &   $12$    &  $16$   &  $6$            &    $15$ & $17$            &       $0.1$  & $0.1$      &    $599$  &  $\bf 203$           &  $0.1$ & $0.1$       \\ 
Underpass       &   $18$    &   $3$   & $14$            &    $21$ & $65$            &       $0.1$  & $0.1$      &    $187$  &  $\bf 114$           &  $0.1$ & $0.1$       \\ \bottomrule
\end{tabular}
\label{tbl:xp-revisit}
\end{table}

\begin{figure}[t]
\centering
    \begin{subfigure}[b]{0.4\textwidth}
        \includegraphics[width=0.965\textwidth]{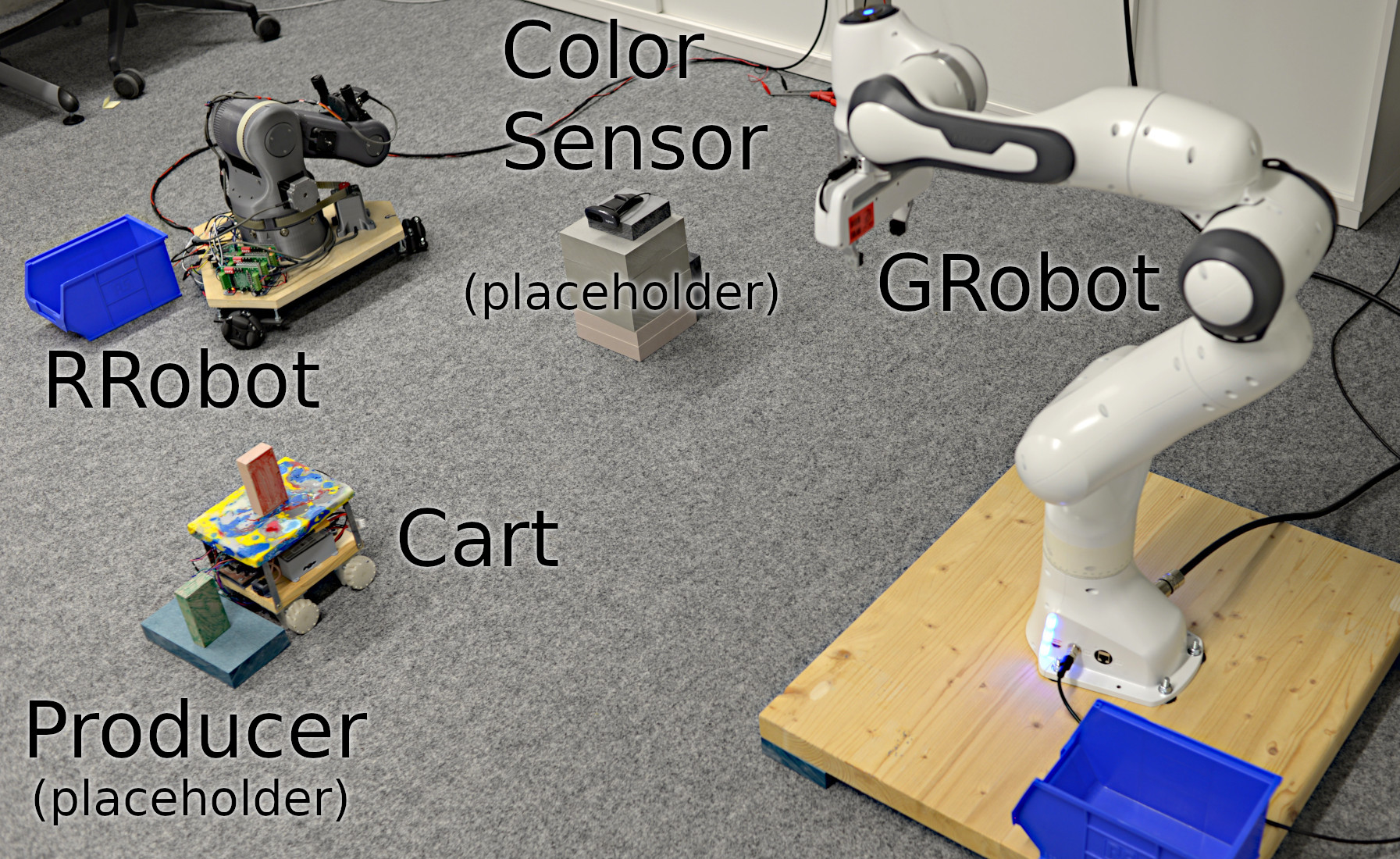}
        \caption{Setup}
        \label{fig:robots-xp-setup}
    \end{subfigure}
    \begin{subfigure}[b]{0.4\textwidth}
        \includegraphics[width=\textwidth]{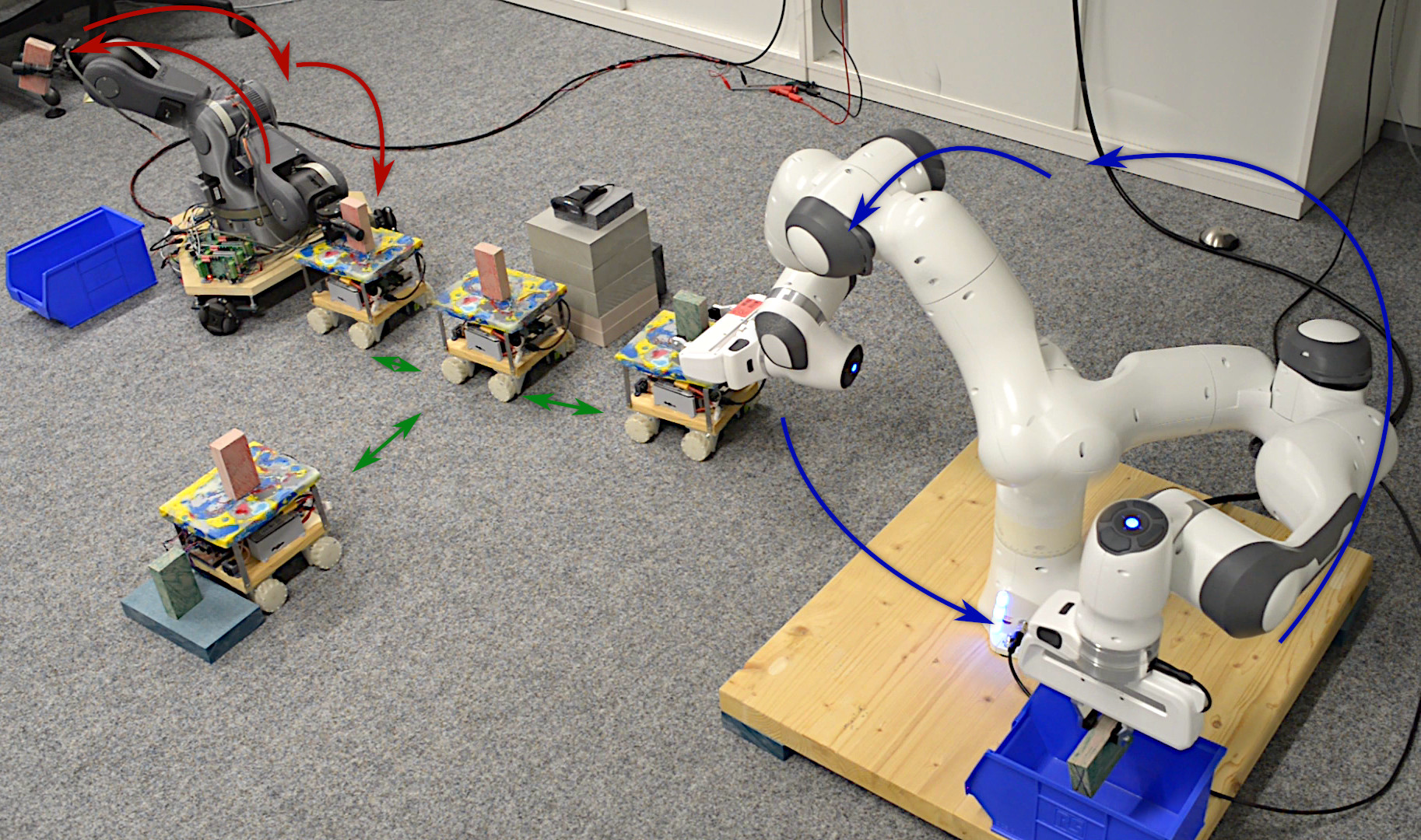}
        \caption{Sequence of motions}
        \label{fig:robots-xp}
    \end{subfigure}
\vspace{-1ex}
    \caption{Experimental setup for the sorting example}
\vspace{-1ex}
\end{figure}
\myparagraph{(3) A Complex Case Study}
The purpose of the case study is to show that we can semi-automatically verify systems beyond the scope of previous work. 
Table~\ref{tbl:xp1} shows the sizes of the processes (in lines of code, in the syntax of \PGCD)
and statistics related to the global specification and verification time. 
The global specification consists of the global type, the environment description, 
and (manually provided) annotations for the verification.
The annotations are mostly related to the footprints and the parallel composition.
Each time we use the parallel composition operator we specify a partition of 
the current footprint (the ``$\exists \FP_1,\FP_2$'' in rule \rulename{AGcomp}).

In Table~\ref{tbl:xp1}, we split the running times into the well-formedness checks which can be done syntactically,
the verification conditions for the execution of the motion primitives, and the typing.
We did not try to encode this example using the older \PGCD specifications for two reasons.
First, the stronger requirement on discrete time steps and global synchronisation in time in the previous calculus would subtantially change this example.
Remember that motions in \cite{ECOOP19} are specified simultaneously for all the robots and must have the same duration.
When multiple motion primitives have different durations, a motion step 
can only be as long as the shortest motion. 
Thus, we would need to chunk the longer motions into a sequence of smaller steps. 
This would, in turn, make the (sub)typing much more difficult as one motion step in the code would correspond to 
a sequence of motion steps in the specification.
Second, as the previous calculus does not include the separating conjunction, 
there is little hope that the SMT solver can cope with the compelxity of this example: it contains more robots and more complex ones.
Because our global types include separating conjunctions, we can more efficiently generate and discharge the 
verification conditions.
At the modest cost of specifying the footprints for each thread, the 
collision checks only need to consider the robots within a thread and not all the robots in the system.
This brings a substantial reduction in the complexity of the verification conditions.

In conclusion, the case study demonstrates the power of our method
in specifying and verifying non-trivial coordination tasks between multiple robots.
It requires the expressiveness of our motion primitive specifications
and the modularity of our separating conjunction.

\begin{table}[t]
\centering
\caption{Programs, motion specification, type, and verification time}
\vspace{-1ex}
\small
\begin{tabular}{lcc|lrl} \toprule
Robot     &   Motion Spec (LoC)   &   Program (LoC)   & Specification      \\ \midrule
Moveo arm &   306                 &   28              & Global type          &  96               & LoC          \\
Panda arm &   348                 &   31              & Syntactic   checks   &  0.2              & s.           \\
Carts     &   404                 &   45              & Compatibility checks &  2410             & s.           \\
Sensor/Producer    &   105        &   17 / 10             & Typing               &  3.1              & s.           \\ \bottomrule
\end{tabular}
\label{tbl:xp1}
\vspace{-1ex}
\end{table}

\section{Related Work}
\label{sec:related}

The field of concurrent robotics has made enormous progress---from robot soccer to
self-driving systems and to industrial manufacturing.
However, to the best of our knowledge, none of these impressive systems come with formal
guarantees of correctness.
Our motivation, like many other similar projects in the area of high-confidence robotics and cyber-physical systems design, is to 
be able to reason formally about such systems.

A spectrum of computer-assisted formal methods techniques can be, and have been, applied to reasoning about concurrent robotics,
ranging from interactive theorem proving to automated analysis via model checking.
As is well known, these techniques provide a tradeoff between manual effort and the expressiveness of
specifications and the strength of guarantees.
Our choice using manually specified choreographies and motion primitive specifications and automatically
checked type correctness attempts to explore a point in the design space that requires manual abstraction
of motion and geometry but provides automated checks for the interaction.
We build on a type-based foundation rather than global model checking to again reflect the tradeoff:
our use of choreographies and projections from global to local types restricts the structure of programs
that can be type checked but enables a more scalable \emph{local} check for each process; 
in contrast, a model checking approach could lead to state-space explosion already at the level of concurrency.

There are a vast number of extensions and applications of session
types and choreographies~\cite{Huttel:2016:FST:2911992.2873052,ABBCCDGGGHJMMMNNPVY2016,BETTY2017},
but little work bringing types to practical programming in the domain of robotics or cyber-physical systems.
We discuss the most related work. 
Our starting point was the theory of motion session types and PGCD \cite{PGCD,ECOOP19}, whose goals are
similar to ours.
We considerably extend the scope and expressiveness of motion session types:
through \emph{continuous-time} motion primitives and through separating conjunction for modular synchronisation.

Syntactic extensions  
of original global types \cite{HYC08} to represent more expressive
communication structures have been studied, 
e.g.,~in \cite{DBLP:conf/popl/LangeTY15}, 
in the context of synthesis from communicating
automata; in~\cite{DBLP:journals/corr/abs-1203-0780} to include 
parallel and choices; in \cite{DH2012} 
to represent nested global types.
Our aim is to include the minimum syntax extension to 
\cite{HYC08} to represent separation and synchronisations 
best suited to implementing robotics applications. 
In turn, a combination of motion primitives and predicates 
required us to develop a novel and non-trivial data flow analysis of 
global choreographies for the well-formedness check.

Hybrid extensions to process algebras \cite{RoundsS03,DBLP:journals/iandc/LynchSV03,BERGSTRA2005215,DBLP:conf/aplas/LiuLQZZZZ10}
extend process algebras with hybrid behaviour and
study classical concurrency issues such as process equivalences.
Extensions to timed (but not hybrid) specifications in the $\pi$-calculus are studied in  
\citet{BYY2014,BMVY2019} to  
express properties on times on top of \emph{binary} session typed processes. 

The theory of hybrid automata \cite{ACHH,Henzinger96} provides a foundation for verifying
models of hybrid systems.
The main emphasis in hybrid automaton research has been in defining semantics and designing
model checking algorithms---too many to enumerate, see \cite{Henzinger96,Platzer18}---and 
not on programmability. 
Assume guarantee reasoning for hybrid systems
has been studied, e.g.~\cite{Nuzzo:EECS-2015-189,DBLP:journals/fteda/BenvenisteCNPRR18,DBLP:journals/pieee/NuzzoSBGV15,DBLP:journals/pieee/Tripakis16}.
These works do not consider programmability or choreography aspects.

Deductive verification for hybrid systems attempts to define logics and invariant-based reasoning to hybrid systems.
Differential dynamic logic (dL) \cite{Platzer18,PlatzerT18} is a general logical framework to deductively reason about hybrid systems.
It extends \emph{dynamic logic} with differential operators and shows sound and (relatively) complete axiomatisations for the logic.
Keymaera \cite{DBLP:conf/cade/FultonMQVP15} and HHL Prover \cite{HHL} are interactive theorem provers based on hybrid progam logics. 
These tools can verify complex properties of systems at the cost of intensive manual effort.
In contrast, we explore a point in the design space with more automation but less expressiveness.
The ``trusted base'' in an interactive theorem prover is the core logic; optimizing the trusted base was not one of our goals.

A well-studied workflow in high-confidence cyber-physical systems is model-based design (see, e.g., \cite{HenzingerSifakis07} for an overview), 
where a system is constructed by successive refinement of an abstract model down to an implementation.
An important problem in model-based design is to ensure property-preserving refinement: one verifies properties of the system
at higher levels of abstraction, and ensures that properties are preserved through refinement.
In the presence of continuous dynamics, defining an appropriate notion of refinement and proving property-preserving
compilation formally are hard problems \cite{DBLP:journals/tosem/YanJWWZ20,DBLP:conf/pldi/BohrerTMMP18}.

In our implementation, we use \emph{automated} tools based on the dReal SMT solver \cite{dReal}.
Our verification is \emph{semi-automatic}, as we require user annotations for footprints or for
the motion primitive specifications, but discharge verification conditions through dReal.
Of course, there are programs that go past the capability of the solver---this
is already true for systems that \emph{only} have concurrency or only deal with dynamics. 
Our proof rules is to allow sound reasoning, potentially inside an interactive prover. 
Our implementation shows that---at least some---nontrivial examples do allow automation.
Non-linear arithmetic is difficult to scale.
Our observation is that a combination of manual specification of \emph{abstract} footprints
that replace complex geometrical shapes with simpler over-approximations along with the power of
state-of-the-art SMT solvers is reasonably effective even for complicated examples. 

Our choice of dReal (as opposed to a different SMT solver) is dReal's ``off the shelf'' support
for non-linear arithmetic and trigonometric functions. 
Trigonometry shows up in our handling of frame shifts. 
Note that dReal only considers $\delta$-decidability and can be incomplete in theory.
We expect that most implementations will be collision free in a ``robust'' way 
(that is, two robots will not just not collide, they would be separated by a minimum distance).
Therefore we believe the potential incompleteness is less of a concern.
Indeed, the main limiting factor in our experiments was scalability for larger verification conditions.

Formal methods have been applied to multi-robot planning \cite{Drona,Antlab}.
These systems ignore programming or geometry concerns and view robots as sequences of motion primitives.

\section{Discussion and Future Directions}
\label{sec:discussion}

In this paper, we have shown how choreographies can be extended with dynamic motion primitives to enable compositional
reasoning in the presence of continuous-time dynamics. 
We have developed an automated verification tool and a compiler from type-correct programs to distributed
robotics applications using ROS and commercial and custom-made robotics hardware.
Our goal is to \emph{integrate} types and static analysis techniques into existing robotics frameworks,
rather than provide fully verified stacks (see, e.g., \cite{DBLP:conf/pldi/BohrerTMMP18}). 
We explain our language features in terms of a calculus but session processes 
can be easily embedded into existing frameworks for robot programming. %
We have demonstrated that the language and type system are expressive enough to statically verify 
distributed manoeuvres on top of existing hardware and software. 

We view our paper as a first step in verifying robotics programs.
There are many other important but yet unmodelled aspects.
We outline several directions not addressed in our work.

For example, we omit \emph{probabilistic robotics} aspects, including the perception stack (vision, LIDAR, etc.),
and aspects such as filtering, localisation, and mapping
\cite{ProbabilisticRobotics,LaValle2012}.
These will require a probabilistic extension to our theory. 
Such a theory requires a nontrivial extension of program logics and analyses for probabilistic programs \cite{McIverMorgan} with
the verification and synthesis for stochastic continuous-state systems \cite{ZamaniEMAL14}.
We also omit any modeling of the perception stack or dynamic techniques, often based on machine learning, of learning the environment.
Instead, our models \emph{assume} worst-case disturbance bounds on the sensing or dynamics.
We believe an integration of learning techniques with formal methods is an interesting challenge
but goes beyond the scope of this paper.

Our framework statically verifies properties of a program.
In practice, robots work in dynamic, often unknown, environments \cite{LaValle,LaValle2012,Siegwart,ProbabilisticRobotics}.
When confronted with a formal method, domain experts often expect that
a verification methodology should be able to verify correctness of behaviors in an \emph{arbitrary} dynamic environment
and any failure to do so simply shows the inadequacy of verification techniques.
Formal methods cannot prove correctness in an \emph{arbitrary} dynamic environment.
When we verify a system, it is---as true in any formal methods---\emph{relative} to an environment assumption;
such assumptions are usually implicit in robotics implementations.
Thus, we can model moving obstacles, etc.\ in the environment through assumptions on the behaviour
of such obstacles (e.g., limits on their speed or trajectories); these assumptions are propagated
by our assume-guarantee proof system, and show up as a premise in the eventual correctness proof.

We focus on communication safety and collision freedom as the basic correctness conditions any
system has to satisfy.
An interesting next step is to extend the reasoning to more expressive specifications.
For safety specifications, such as invariants, one could reduce the problem to checking
communication safety.
For liveness specifications, the proof system would need to be extended with ranking arguments.

While all the above problems are interesting in their own right, they are orthogonal to our main contribution that 
one can reason about concurrency and dynamics in continuous time in a type-based setting. 
Our future work will look at more expressive scenarios, but the setting in our paper already required complex proofs and it 
was important for us to get the core correct.
We believe a verification system that can faithfully model and uniformly reason about more complex interactions and that scales
to larger implementations remains a grand challenge in computer science (see, e.g., \cite{Lozano-Perez} for an articulation of these challenges).

\begin{acks}                            %
We thank the OOPSLA reviewers for useful comments and suggestions; and
Julia Gabet for testing the artifact.
Majumdar and Zufferey are supported in part by 
the Deutsche Forschungsgemeinschaft project 389792660 TRR 248--CPEC
and by the European Research Council under the
Grant Agreement 610150 (http://www.impact-erc.eu/) (ERC Synergy Grant ImPACT).
Yoshida is supported in part by
  EPSRC EP/T006544/1, EP/K011715/1, EP/K034413/1, EP/L00058X/1, EP/N027833/1,
  EP/N028201/1, EP/T006544/1, EP/T014709/1 and EP/V000462/1, and NCSS/EPSRC VeTSS. 
\end{acks}

\bibliography{biblio,jlamp}


\begin{thebibliography}{55}


\ifx \showCODEN    \undefined \def \showCODEN     #1{\unskip}     \fi
\ifx \showDOI      \undefined \def \showDOI       #1{#1}\fi
\ifx \showISBNx    \undefined \def \showISBNx     #1{\unskip}     \fi
\ifx \showISBNxiii \undefined \def \showISBNxiii  #1{\unskip}     \fi
\ifx \showISSN     \undefined \def \showISSN      #1{\unskip}     \fi
\ifx \showLCCN     \undefined \def \showLCCN      #1{\unskip}     \fi
\ifx \shownote     \undefined \def \shownote      #1{#1}          \fi
\ifx \showarticletitle \undefined \def \showarticletitle #1{#1}   \fi
\ifx \showURL      \undefined \def \showURL       {\relax}        \fi
\providecommand\bibfield[2]{#2}
\providecommand\bibinfo[2]{#2}
\providecommand\natexlab[1]{#1}
\providecommand\showeprint[2][]{arXiv:#2}

\bibitem[\protect\citeauthoryear{Abadi and Lamport}{Abadi and Lamport}{1993}]%
        {AbadiLamport93}
\bibfield{author}{\bibinfo{person}{M. Abadi} {and} \bibinfo{person}{L.
  Lamport}.} \bibinfo{year}{1993}\natexlab{}.
\newblock \showarticletitle{Composing Specifications}.
\newblock \bibinfo{journal}{\emph{ACM Transactions on Programming Languages and
  Systems}} \bibinfo{volume}{15}, \bibinfo{number}{1} (\bibinfo{year}{1993}),
  \bibinfo{pages}{73--132}.
\newblock


\bibitem[\protect\citeauthoryear{Alur, Courcoubetis, Halbwachs, Henzinger, Ho,
  Nicollin, Olivero, Sifakis, and Yovine}{Alur et~al\mbox{.}}{1995}]%
        {ACHH}
\bibfield{author}{\bibinfo{person}{R. Alur}, \bibinfo{person}{C. Courcoubetis},
  \bibinfo{person}{N. Halbwachs}, \bibinfo{person}{T.A. Henzinger},
  \bibinfo{person}{P.-H. Ho}, \bibinfo{person}{X. Nicollin},
  \bibinfo{person}{A. Olivero}, \bibinfo{person}{J. Sifakis}, {and}
  \bibinfo{person}{S. Yovine}.} \bibinfo{year}{1995}\natexlab{}.
\newblock \showarticletitle{The algorithmic analysis of hybrid systems}.
\newblock \bibinfo{journal}{\emph{Theoretical Computer Science}}
  \bibinfo{volume}{138} (\bibinfo{year}{1995}), \bibinfo{pages}{3--34}.
\newblock


\bibitem[\protect\citeauthoryear{Ancona, Bono, Bravetti, Campos, Castagna,
  Denielou, Gay, Gesbert, Giachino, Hu, Johnsen, Martins, Mascardi, Montesi,
  Neykova, Ng, Padovani, Vasconcelos, and Yoshida}{Ancona
  et~al\mbox{.}}{2016}]%
        {ABBCCDGGGHJMMMNNPVY2016}
\bibfield{author}{\bibinfo{person}{Davide Ancona}, \bibinfo{person}{Viviana
  Bono}, \bibinfo{person}{Mario Bravetti}, \bibinfo{person}{Joana Campos},
  \bibinfo{person}{Giuseppe Castagna}, \bibinfo{person}{Pierre-Malo Denielou},
  \bibinfo{person}{Simon~J. Gay}, \bibinfo{person}{Nils Gesbert},
  \bibinfo{person}{Elena Giachino}, \bibinfo{person}{Raymond Hu},
  \bibinfo{person}{Einar~Broch Johnsen}, \bibinfo{person}{Francisco Martins},
  \bibinfo{person}{Viviana Mascardi}, \bibinfo{person}{Fabrizio Montesi},
  \bibinfo{person}{Rumyana Neykova}, \bibinfo{person}{Nicholas Ng},
  \bibinfo{person}{Luca Padovani}, \bibinfo{person}{Vasco~T. Vasconcelos},
  {and} \bibinfo{person}{Nobuko Yoshida}.} \bibinfo{year}{2016}\natexlab{}.
\newblock \showarticletitle{{Behavioral Types in Programming Languages}}.
\newblock \bibinfo{journal}{\emph{FTPL}}  \bibinfo{volume}{3(2-3)}
  (\bibinfo{year}{2016}), \bibinfo{pages}{95--230}.
\newblock


\bibitem[\protect\citeauthoryear{Banusic, Majumdar, Pirron, Schmuck, and
  Zufferey}{Banusic et~al\mbox{.}}{2019}]%
        {PGCD}
\bibfield{author}{\bibinfo{person}{Gregor~B. Banusic}, \bibinfo{person}{Rupak
  Majumdar}, \bibinfo{person}{Marcus Pirron}, \bibinfo{person}{Anne{-}Kathrin
  Schmuck}, {and} \bibinfo{person}{Damien Zufferey}.}
  \bibinfo{year}{2019}\natexlab{}.
\newblock \showarticletitle{{PGCD:} robot programming and verification with
  geometry, concurrency, and dynamics}. In
  \bibinfo{booktitle}{\emph{Proceedings of the 10th {ACM/IEEE} International
  Conference on Cyber-Physical Systems, {ICCPS} 2019, Montreal, QC, Canada,
  April 16-18, 2019}}, \bibfield{editor}{\bibinfo{person}{Xue Liu},
  \bibinfo{person}{Paulo Tabuada}, \bibinfo{person}{Miroslav Pajic}, {and}
  \bibinfo{person}{Linda Bushnell}} (Eds.). \bibinfo{publisher}{{ACM}},
  \bibinfo{pages}{57--66}.
\newblock


\bibitem[\protect\citeauthoryear{Benveniste, Caillaud, Nickovic, Passerone,
  Raclet, Reinkemeier, Sangiovanni{-}Vincentelli, Damm, Henzinger, and
  Larsen}{Benveniste et~al\mbox{.}}{2018}]%
        {DBLP:journals/fteda/BenvenisteCNPRR18}
\bibfield{author}{\bibinfo{person}{Albert Benveniste},
  \bibinfo{person}{Beno{\^{\i}}t Caillaud}, \bibinfo{person}{Dejan Nickovic},
  \bibinfo{person}{Roberto Passerone}, \bibinfo{person}{Jean{-}Baptiste
  Raclet}, \bibinfo{person}{Philipp Reinkemeier}, \bibinfo{person}{Alberto~L.
  Sangiovanni{-}Vincentelli}, \bibinfo{person}{Werner Damm},
  \bibinfo{person}{Thomas~A. Henzinger}, {and} \bibinfo{person}{Kim~G.
  Larsen}.} \bibinfo{year}{2018}\natexlab{}.
\newblock \showarticletitle{Contracts for System Design}.
\newblock \bibinfo{journal}{\emph{Foundations and Trends in Electronic Design
  Automation}} \bibinfo{volume}{12}, \bibinfo{number}{2-3}
  (\bibinfo{year}{2018}), \bibinfo{pages}{124--400}.
\newblock
\urldef\tempurl%
\url{https://doi.org/10.1561/1000000053}
\showDOI{\tempurl}


\bibitem[\protect\citeauthoryear{Bergstra and Middelburg}{Bergstra and
  Middelburg}{2005}]%
        {BERGSTRA2005215}
\bibfield{author}{\bibinfo{person}{J.A. Bergstra} {and} \bibinfo{person}{C.A.
  Middelburg}.} \bibinfo{year}{2005}\natexlab{}.
\newblock \showarticletitle{Process algebra for hybrid systems}.
\newblock \bibinfo{journal}{\emph{Theoretical Computer Science}}
  \bibinfo{volume}{335}, \bibinfo{number}{2} (\bibinfo{year}{2005}),
  \bibinfo{pages}{215 -- 280}.
\newblock
\showISSN{0304-3975}
\urldef\tempurl%
\url{https://doi.org/10.1016/j.tcs.2004.04.019}
\showDOI{\tempurl}
\newblock
\shownote{Process Algebra.}


\bibitem[\protect\citeauthoryear{Bocchi, Lange, and Yoshida}{Bocchi
  et~al\mbox{.}}{2015}]%
        {BLY2015}
\bibfield{author}{\bibinfo{person}{Laura Bocchi}, \bibinfo{person}{Julien
  Lange}, {and} \bibinfo{person}{Nobuko Yoshida}.}
  \bibinfo{year}{2015}\natexlab{}.
\newblock \showarticletitle{{Meeting Deadlines Together}}. In
  \bibinfo{booktitle}{\emph{26th International Conference on Concurrency
  Theory}} \emph{(\bibinfo{series}{LIPIcs})}, Vol.~\bibinfo{volume}{42}.
  \bibinfo{publisher}{Schloss Dagstuhl}, \bibinfo{pages}{283--296}.
\newblock


\bibitem[\protect\citeauthoryear{Bocchi, Murgia, Vasconcelos, and
  Yoshida}{Bocchi et~al\mbox{.}}{2019}]%
        {BMVY2019}
\bibfield{author}{\bibinfo{person}{Laura Bocchi}, \bibinfo{person}{Maurizio
  Murgia}, \bibinfo{person}{Vasco~Thudichum Vasconcelos}, {and}
  \bibinfo{person}{Nobuko Yoshida}.} \bibinfo{year}{2019}\natexlab{}.
\newblock \showarticletitle{{Asynchronous Timed Session Types}}. In
  \bibinfo{booktitle}{\emph{28th European Symposium on Programming}}
  \emph{(\bibinfo{series}{LNCS})}, Vol.~\bibinfo{volume}{11423}.
  \bibinfo{publisher}{Springer}, \bibinfo{pages}{583--610}.
\newblock


\bibitem[\protect\citeauthoryear{Bocchi, Yang, and Yoshida}{Bocchi
  et~al\mbox{.}}{2014}]%
        {BYY2014}
\bibfield{author}{\bibinfo{person}{Laura Bocchi}, \bibinfo{person}{Weizhen
  Yang}, {and} \bibinfo{person}{Nobuko Yoshida}.}
  \bibinfo{year}{2014}\natexlab{}.
\newblock \showarticletitle{{Timed Multiparty Session Types}}. In
  \bibinfo{booktitle}{\emph{25th International Conference on Concurrency
  Theory}} \emph{(\bibinfo{series}{LNCS})}, Vol.~\bibinfo{volume}{8704}.
  \bibinfo{publisher}{Springer}, \bibinfo{pages}{419--434}.
\newblock


\bibitem[\protect\citeauthoryear{Bohrer, Tan, Mitsch, Myreen, and
  Platzer}{Bohrer et~al\mbox{.}}{2018}]%
        {DBLP:conf/pldi/BohrerTMMP18}
\bibfield{author}{\bibinfo{person}{Brandon Bohrer}, \bibinfo{person}{Yong~Kiam
  Tan}, \bibinfo{person}{Stefan Mitsch}, \bibinfo{person}{Magnus~O. Myreen},
  {and} \bibinfo{person}{Andr{\'{e}} Platzer}.}
  \bibinfo{year}{2018}\natexlab{}.
\newblock \showarticletitle{VeriPhy: verified controller executables from
  verified cyber-physical system models}. In
  \bibinfo{booktitle}{\emph{Proceedings of the 39th {ACM} {SIGPLAN} Conference
  on Programming Language Design and Implementation, {PLDI} 2018, Philadelphia,
  PA, USA, June 18-22, 2018}}, \bibfield{editor}{\bibinfo{person}{Jeffrey~S.
  Foster} {and} \bibinfo{person}{Dan Grossman}} (Eds.).
  \bibinfo{publisher}{{ACM}}, \bibinfo{pages}{617--630}.
\newblock
\urldef\tempurl%
\url{https://doi.org/10.1145/3192366.3192406}
\showDOI{\tempurl}


\bibitem[\protect\citeauthoryear{Castagna, Dezani{-}Ciancaglini, and
  Padovani}{Castagna et~al\mbox{.}}{2012}]%
        {DBLP:journals/corr/abs-1203-0780}
\bibfield{author}{\bibinfo{person}{Giuseppe Castagna},
  \bibinfo{person}{Mariangiola Dezani{-}Ciancaglini}, {and}
  \bibinfo{person}{Luca Padovani}.} \bibinfo{year}{2012}\natexlab{}.
\newblock \showarticletitle{On Global Types and Multi-Party Session}.
\newblock \bibinfo{journal}{\emph{Logical Methods in Computer Science}}
  \bibinfo{volume}{8}, \bibinfo{number}{1} (\bibinfo{year}{2012}).
\newblock


\bibitem[\protect\citeauthoryear{Chandy and Misra}{Chandy and Misra}{1988}]%
        {ChandyMisra88}
\bibfield{author}{\bibinfo{person}{K.M. Chandy} {and} \bibinfo{person}{J.
  Misra}.} \bibinfo{year}{1988}\natexlab{}.
\newblock \bibinfo{booktitle}{\emph{Parallel Program Design: A Foundation}}.
\newblock \bibinfo{publisher}{Addison-Wesley Publishing Company}.
\newblock


\bibitem[\protect\citeauthoryear{Demangeon and Honda}{Demangeon and
  Honda}{2012}]%
        {DH2012}
\bibfield{author}{\bibinfo{person}{Romain Demangeon} {and}
  \bibinfo{person}{Kohei Honda}.} \bibinfo{year}{2012}\natexlab{}.
\newblock \showarticletitle{{Nested Protocols in Session Types}}. In
  \bibinfo{booktitle}{\emph{23rd International Conference on Concurrency
  Theory}} \emph{(\bibinfo{series}{LNCS})}, Vol.~\bibinfo{volume}{7454}.
  \bibinfo{publisher}{Springer}, \bibinfo{pages}{272--286}.
\newblock


\bibitem[\protect\citeauthoryear{Deni{\'{e}}lou and Yoshida}{Deni{\'{e}}lou and
  Yoshida}{2012}]%
        {DBLP:conf/esop/DenielouY12}
\bibfield{author}{\bibinfo{person}{Pierre{-}Malo Deni{\'{e}}lou} {and}
  \bibinfo{person}{Nobuko Yoshida}.} \bibinfo{year}{2012}\natexlab{}.
\newblock \showarticletitle{Multiparty Session Types Meet Communicating
  Automata}. In \bibinfo{booktitle}{\emph{{ESOP} 2012 - European Symposium on
  Programming}}. \bibinfo{publisher}{Springer}.
\newblock
\urldef\tempurl%
\url{https://doi.org/10.1007/978-3-642-28869-2_10}
\showDOI{\tempurl}


\bibitem[\protect\citeauthoryear{Desai, Saha, Yang, Qadeer, and Seshia}{Desai
  et~al\mbox{.}}{2017}]%
        {Drona}
\bibfield{author}{\bibinfo{person}{Ankush Desai}, \bibinfo{person}{Indranil
  Saha}, \bibinfo{person}{Jianqiao Yang}, \bibinfo{person}{Shaz Qadeer}, {and}
  \bibinfo{person}{Sanjit~A. Seshia}.} \bibinfo{year}{2017}\natexlab{}.
\newblock \showarticletitle{{DRONA:} a framework for safe distributed mobile
  robotics}. In \bibinfo{booktitle}{\emph{Proceedings of the 8th International
  Conference on Cyber-Physical Systems, {ICCPS} 2017, Pittsburgh, Pennsylvania,
  USA, April 18-20, 2017}}, \bibfield{editor}{\bibinfo{person}{Sonia
  Mart{\'{\i}}nez}, \bibinfo{person}{Eduardo Tovar}, \bibinfo{person}{Chris
  Gill}, {and} \bibinfo{person}{Bruno Sinopoli}} (Eds.).
  \bibinfo{pages}{239--248}.
\newblock
\urldef\tempurl%
\url{https://doi.org/10.1145/3055004.3055022}
\showDOI{\tempurl}


\bibitem[\protect\citeauthoryear{Dezani-Ciancaglini, Ghilezan, Jaksic,
  Pantovic, and Yoshida}{Dezani-Ciancaglini et~al\mbox{.}}{2015}]%
        {DGJPY15}
\bibfield{author}{\bibinfo{person}{Mariangiola Dezani-Ciancaglini},
  \bibinfo{person}{Silvia Ghilezan}, \bibinfo{person}{Svetlana Jaksic},
  \bibinfo{person}{Jovanka Pantovic}, {and} \bibinfo{person}{Nobuko Yoshida}.}
  \bibinfo{year}{2015}\natexlab{}.
\newblock \showarticletitle{Precise subtyping for synchronous multiparty
  sessions}. In \bibinfo{booktitle}{\emph{PLACES}}
  \emph{(\bibinfo{series}{EPTCS})}, Vol.~\bibinfo{volume}{203}.
  \bibinfo{pages}{29--43}.
\newblock
\urldef\tempurl%
\url{https://doi.org/10.4204/EPTCS.203.3}
\showDOI{\tempurl}


\bibitem[\protect\citeauthoryear{Dezani-Ciancaglini, Ghilezan, Jaksic,
  Pantovic, and Yoshida}{Dezani-Ciancaglini et~al\mbox{.}}{2016}]%
        {DGJPY2016}
\bibfield{author}{\bibinfo{person}{Mariangiola Dezani-Ciancaglini},
  \bibinfo{person}{Silvia Ghilezan}, \bibinfo{person}{Svetlana Jaksic},
  \bibinfo{person}{Jovanka Pantovic}, {and} \bibinfo{person}{Nobuko Yoshida}.}
  \bibinfo{year}{2016}\natexlab{}.
\newblock \showarticletitle{Denotational and Operational Preciseness of
  Subtyping: A Roadmap}. In \bibinfo{booktitle}{\emph{Theory and Practice of
  Formal Methods: Essays Dedicated to Frank de Boer on the Occasion of His 60th
  Birthday}} \emph{(\bibinfo{series}{LNCS})}, Vol.~\bibinfo{volume}{9660}.
  \bibinfo{publisher}{Springer}, \bibinfo{pages}{155--172}.
\newblock


\bibitem[\protect\citeauthoryear{Fulton, Mitsch, Quesel, V{\"{o}}lp, and
  Platzer}{Fulton et~al\mbox{.}}{2015}]%
        {DBLP:conf/cade/FultonMQVP15}
\bibfield{author}{\bibinfo{person}{Nathan Fulton}, \bibinfo{person}{Stefan
  Mitsch}, \bibinfo{person}{Jan{-}David Quesel}, \bibinfo{person}{Marcus
  V{\"{o}}lp}, {and} \bibinfo{person}{Andr{\'{e}} Platzer}.}
  \bibinfo{year}{2015}\natexlab{}.
\newblock \showarticletitle{KeYmaera {X:} An Axiomatic Tactical Theorem Prover
  for Hybrid Systems}. In \bibinfo{booktitle}{\emph{Automated Deduction -
  {CADE-25} - 25th International Conference on Automated Deduction, Berlin,
  Germany, August 1-7, 2015, Proceedings}} \emph{(\bibinfo{series}{Lecture
  Notes in Computer Science})}, \bibfield{editor}{\bibinfo{person}{Amy~P.
  Felty} {and} \bibinfo{person}{Aart Middeldorp}} (Eds.),
  Vol.~\bibinfo{volume}{9195}. \bibinfo{publisher}{Springer},
  \bibinfo{pages}{527--538}.
\newblock
\urldef\tempurl%
\url{https://doi.org/10.1007/978-3-319-21401-6\_36}
\showDOI{\tempurl}


\bibitem[\protect\citeauthoryear{Gao, Kong, and Clarke}{Gao
  et~al\mbox{.}}{2013}]%
        {dReal}
\bibfield{author}{\bibinfo{person}{Sicun Gao}, \bibinfo{person}{Soonho Kong},
  {and} \bibinfo{person}{Edmund~M. Clarke}.} \bibinfo{year}{2013}\natexlab{}.
\newblock \showarticletitle{{dReal}: An {SMT} Solver for Nonlinear Theories
  over the Reals}. In \bibinfo{booktitle}{\emph{Automated Deduction -
  {CADE-24}}} \emph{(\bibinfo{series}{Lecture Notes in Computer Science})},
  Vol.~\bibinfo{volume}{7898}. \bibinfo{publisher}{Springer},
  \bibinfo{pages}{208--214}.
\newblock


\bibitem[\protect\citeauthoryear{Gavran, Majumdar, and Saha}{Gavran
  et~al\mbox{.}}{2017}]%
        {Antlab}
\bibfield{author}{\bibinfo{person}{Ivan Gavran}, \bibinfo{person}{Rupak
  Majumdar}, {and} \bibinfo{person}{Indranil Saha}.}
  \bibinfo{year}{2017}\natexlab{}.
\newblock \showarticletitle{Antlab: {A} Multi-Robot Task Server}.
\newblock \bibinfo{journal}{\emph{{ACM} Trans. Embedded Comput. Syst.}}
  \bibinfo{volume}{16}, \bibinfo{number}{5s} (\bibinfo{year}{2017}),
  \bibinfo{pages}{190:1--190:19}.
\newblock
\urldef\tempurl%
\url{https://doi.org/10.1145/3126513}
\showDOI{\tempurl}


\bibitem[\protect\citeauthoryear{Gay and Ravera}{Gay and Ravera}{2017}]%
        {BETTY2017}
\bibfield{editor}{\bibinfo{person}{Simon Gay} {and} \bibinfo{person}{Antonio
  Ravera}} (Eds.). \bibinfo{year}{2017}\natexlab{}.
\newblock \bibinfo{booktitle}{\emph{{Behavioural Types: from Theory to
  Tools}}}.
\newblock \bibinfo{publisher}{River Publishers}.
\newblock


\bibitem[\protect\citeauthoryear{Ghilezan, Jaksic, Pantovi, Scalas, and
  Yoshida}{Ghilezan et~al\mbox{.}}{2019a}]%
        {GHPSY19}
\bibfield{author}{\bibinfo{person}{Silvia Ghilezan}, \bibinfo{person}{Svetlana
  Jaksic}, \bibinfo{person}{Jovanka Pantovi}, \bibinfo{person}{Alceste Scalas},
  {and} \bibinfo{person}{Nobuko Yoshida}.} \bibinfo{year}{2019}\natexlab{a}.
\newblock \showarticletitle{Precise subtyping for synchronous multiparty
  sessions}.
\newblock \bibinfo{journal}{\emph{Journal of Logical and Algebraic Methods in
  Programming}} (\bibinfo{year}{2019}).
\newblock
\newblock
\shownote{To appear.}


\bibitem[\protect\citeauthoryear{Ghilezan, Jaksic, Pantovic, Scalas, and
  Yoshida}{Ghilezan et~al\mbox{.}}{2019b}]%
        {DBLP:journals/jlp/GhilezanJPSY19}
\bibfield{author}{\bibinfo{person}{Silvia Ghilezan}, \bibinfo{person}{Svetlana
  Jaksic}, \bibinfo{person}{Jovanka Pantovic}, \bibinfo{person}{Alceste
  Scalas}, {and} \bibinfo{person}{Nobuko Yoshida}.}
  \bibinfo{year}{2019}\natexlab{b}.
\newblock \showarticletitle{Precise subtyping for synchronous multiparty
  sessions}.
\newblock \bibinfo{journal}{\emph{J. Log. Algebr. Meth. Program.}}
  \bibinfo{volume}{104} (\bibinfo{year}{2019}), \bibinfo{pages}{127--173}.
\newblock
\urldef\tempurl%
\url{https://doi.org/10.1016/j.jlamp.2018.12.002}
\showDOI{\tempurl}


\bibitem[\protect\citeauthoryear{Henzinger}{Henzinger}{1996}]%
        {Henzinger96}
\bibfield{author}{\bibinfo{person}{T.A. Henzinger}.}
  \bibinfo{year}{1996}\natexlab{}.
\newblock \showarticletitle{The theory of hybrid automata}. In
  \bibinfo{booktitle}{\emph{Proceedings of the 11th Annual Symposium on Logic
  in Computer Science}}. \bibinfo{publisher}{IEEE Computer Society Press},
  \bibinfo{pages}{278--292}.
\newblock


\bibitem[\protect\citeauthoryear{Henzinger, Minea, and Prabhu}{Henzinger
  et~al\mbox{.}}{2001}]%
        {DBLP:conf/hybrid/HenzingerMP01}
\bibfield{author}{\bibinfo{person}{Thomas~A. Henzinger},
  \bibinfo{person}{Marius Minea}, {and} \bibinfo{person}{Vinayak~S. Prabhu}.}
  \bibinfo{year}{2001}\natexlab{}.
\newblock \showarticletitle{Assume-Guarantee Reasoning for Hierarchical Hybrid
  Systems}. In \bibinfo{booktitle}{\emph{Hybrid Systems: Computation and
  Control, 4th International Workshop, {HSCC} 2001, Rome, Italy, March 28-30,
  2001, Proceedings}} \emph{(\bibinfo{series}{Lecture Notes in Computer
  Science})}, \bibfield{editor}{\bibinfo{person}{Maria Domenica~Di Benedetto}
  {and} \bibinfo{person}{Alberto~L. Sangiovanni{-}Vincentelli}} (Eds.),
  Vol.~\bibinfo{volume}{2034}. \bibinfo{publisher}{Springer},
  \bibinfo{pages}{275--290}.
\newblock
\urldef\tempurl%
\url{https://doi.org/10.1007/3-540-45351-2\_24}
\showDOI{\tempurl}


\bibitem[\protect\citeauthoryear{Henzinger and Sifakis}{Henzinger and
  Sifakis}{2007}]%
        {HenzingerSifakis07}
\bibfield{author}{\bibinfo{person}{Thomas~A. Henzinger} {and}
  \bibinfo{person}{Joseph Sifakis}.} \bibinfo{year}{2007}\natexlab{}.
\newblock \showarticletitle{The Discipline of Embedded Systems Design}.
\newblock \bibinfo{journal}{\emph{IEEE Computer}} \bibinfo{volume}{40},
  \bibinfo{number}{10} (\bibinfo{year}{2007}), \bibinfo{pages}{36--44}.
\newblock


\bibitem[\protect\citeauthoryear{Honda, Yoshida, and Carbone}{Honda
  et~al\mbox{.}}{2008}]%
        {HYC08}
\bibfield{author}{\bibinfo{person}{Kohei Honda}, \bibinfo{person}{Nobuko
  Yoshida}, {and} \bibinfo{person}{Marco Carbone}.}
  \bibinfo{year}{2008}\natexlab{}.
\newblock \showarticletitle{Multiparty Asynchronous Session Types}. In
  \bibinfo{booktitle}{\emph{POPL}}. \bibinfo{publisher}{ACM Press},
  \bibinfo{pages}{273--284}.
\newblock
\urldef\tempurl%
\url{https://doi.org/10.1145/1328438.1328472}
\showDOI{\tempurl}


\bibitem[\protect\citeauthoryear{Honda, Yoshida, and Carbone}{Honda
  et~al\mbox{.}}{2016}]%
        {HYC2016}
\bibfield{author}{\bibinfo{person}{Kohei Honda}, \bibinfo{person}{Nobuko
  Yoshida}, {and} \bibinfo{person}{Marco Carbone}.}
  \bibinfo{year}{2016}\natexlab{}.
\newblock \showarticletitle{{Multiparty Asynchronous Session Types}}.
\newblock \bibinfo{journal}{\emph{JACM}}  \bibinfo{volume}{63}
  (\bibinfo{year}{2016}), \bibinfo{pages}{1--67}.
\newblock
Issue 1-9.


\bibitem[\protect\citeauthoryear{H\"{u}ttel, Lanese, Vasconcelos, Caires,
  Carbone, Deni{\'e}lou, Mostrous, Padovani, Ravara, Tuosto, Vieira, and
  Zavattaro}{H\"{u}ttel et~al\mbox{.}}{2016}]%
        {Huttel:2016:FST:2911992.2873052}
\bibfield{author}{\bibinfo{person}{Hans H\"{u}ttel}, \bibinfo{person}{Ivan
  Lanese}, \bibinfo{person}{Vasco~T. Vasconcelos}, \bibinfo{person}{Lu\'{\i}s
  Caires}, \bibinfo{person}{Marco Carbone}, \bibinfo{person}{Pierre-Malo
  Deni{\'e}lou}, \bibinfo{person}{Dimitris Mostrous}, \bibinfo{person}{Luca
  Padovani}, \bibinfo{person}{Ant\'{o}nio Ravara}, \bibinfo{person}{Emilio
  Tuosto}, \bibinfo{person}{Hugo~Torres Vieira}, {and}
  \bibinfo{person}{Gianluigi Zavattaro}.} \bibinfo{year}{2016}\natexlab{}.
\newblock \showarticletitle{Foundations of Session Types and Behavioural
  Contracts}.
\newblock \bibinfo{journal}{\emph{ACM Comput. Surv.}} \bibinfo{volume}{49},
  \bibinfo{number}{1}, Article \bibinfo{articleno}{3} (\bibinfo{year}{2016}).
\newblock
\urldef\tempurl%
\url{https://doi.org/10.1145/2873052}
\showDOI{\tempurl}


\bibitem[\protect\citeauthoryear{Jones}{Jones}{1983}]%
        {Jones83}
\bibfield{author}{\bibinfo{person}{Cliff~B. Jones}.}
  \bibinfo{year}{1983}\natexlab{}.
\newblock \showarticletitle{Tentative Steps Toward a Development Method for
  Interfering Programs}.
\newblock \bibinfo{journal}{\emph{{ACM} Trans. Program. Lang. Syst.}}
  \bibinfo{volume}{5}, \bibinfo{number}{4} (\bibinfo{year}{1983}),
  \bibinfo{pages}{596--619}.
\newblock
\urldef\tempurl%
\url{https://doi.org/10.1145/69575.69577}
\showDOI{\tempurl}


\bibitem[\protect\citeauthoryear{Kouzapas and Yoshida}{Kouzapas and
  Yoshida}{2013}]%
        {KY13}
\bibfield{author}{\bibinfo{person}{Dimitrios Kouzapas} {and}
  \bibinfo{person}{Nobuko Yoshida}.} \bibinfo{year}{2013}\natexlab{}.
\newblock \showarticletitle{Globally Governed Session Semantics}. In
  \bibinfo{booktitle}{\emph{CONCUR}} \emph{(\bibinfo{series}{LNCS})},
  \bibfield{editor}{\bibinfo{person}{Pedro~R. D'Argenio} {and}
  \bibinfo{person}{Hern{\'a}n~C. Melgratti}} (Eds.),
  Vol.~\bibinfo{volume}{8052}. \bibinfo{publisher}{Springer},
  \bibinfo{pages}{395--409}.
\newblock
\urldef\tempurl%
\url{https://doi.org/10.1145/1328438.1328472}
\showDOI{\tempurl}


\bibitem[\protect\citeauthoryear{Kouzapas and Yoshida}{Kouzapas and
  Yoshida}{2015}]%
        {KY2015}
\bibfield{author}{\bibinfo{person}{Dimitrios Kouzapas} {and}
  \bibinfo{person}{Nobuko Yoshida}.} \bibinfo{year}{2015}\natexlab{}.
\newblock \showarticletitle{Globally Governed Session Semantics}.
\newblock \bibinfo{journal}{\emph{Logical Methods in Computer Science}}
  \bibinfo{volume}{10}, \bibinfo{number}{4} (\bibinfo{year}{2015}).
\newblock


\bibitem[\protect\citeauthoryear{Lange, Tuosto, and Yoshida}{Lange
  et~al\mbox{.}}{2015}]%
        {DBLP:conf/popl/LangeTY15}
\bibfield{author}{\bibinfo{person}{Julien Lange}, \bibinfo{person}{Emilio
  Tuosto}, {and} \bibinfo{person}{Nobuko Yoshida}.}
  \bibinfo{year}{2015}\natexlab{}.
\newblock \showarticletitle{From Communicating Machines to Graphical
  Choreographies}. In \bibinfo{booktitle}{\emph{{POPL}}}.
  \bibinfo{publisher}{{ACM}}, \bibinfo{pages}{221--232}.
\newblock


\bibitem[\protect\citeauthoryear{LaValle}{LaValle}{2006}]%
        {LaValle}
\bibfield{author}{\bibinfo{person}{Steven~M. LaValle}.}
  \bibinfo{year}{2006}\natexlab{}.
\newblock \bibinfo{booktitle}{\emph{Planning Algorithms}}.
\newblock \bibinfo{publisher}{Cambride University Press}.
\newblock


\bibitem[\protect\citeauthoryear{LaValle}{LaValle}{2012}]%
        {LaValle2012}
\bibfield{author}{\bibinfo{person}{Steven~M. LaValle}.}
  \bibinfo{year}{2012}\natexlab{}.
\newblock \bibinfo{booktitle}{\emph{Sensing and Filtering}}.
\newblock \bibinfo{publisher}{NOW Publishers}.
\newblock


\bibitem[\protect\citeauthoryear{Liu, Lv, Quan, Zhan, Zhao, Zhou, and Zou}{Liu
  et~al\mbox{.}}{2010}]%
        {DBLP:conf/aplas/LiuLQZZZZ10}
\bibfield{author}{\bibinfo{person}{Jiang Liu}, \bibinfo{person}{Jidong Lv},
  \bibinfo{person}{Zhao Quan}, \bibinfo{person}{Naijun Zhan},
  \bibinfo{person}{Hengjun Zhao}, \bibinfo{person}{Chaochen Zhou}, {and}
  \bibinfo{person}{Liang Zou}.} \bibinfo{year}{2010}\natexlab{}.
\newblock \showarticletitle{A Calculus for Hybrid {CSP}}. In
  \bibinfo{booktitle}{\emph{Programming Languages and Systems - 8th Asian
  Symposium, {APLAS} 2010, Shanghai, China, November 28 - December 1, 2010.
  Proceedings}} \emph{(\bibinfo{series}{Lecture Notes in Computer Science})},
  Vol.~\bibinfo{volume}{6461}. \bibinfo{publisher}{Springer},
  \bibinfo{pages}{1--15}.
\newblock


\bibitem[\protect\citeauthoryear{Lozano-P\'{e}rez}{Lozano-P\'{e}rez}{1983}]%
        {Lozano-Perez}
\bibfield{author}{\bibinfo{person}{Tom\'{a}s Lozano-P\'{e}rez}.}
  \bibinfo{year}{1983}\natexlab{}.
\newblock \showarticletitle{Robot programming}.
\newblock \bibinfo{journal}{\emph{Proc. IEEE}} \bibinfo{volume}{71},
  \bibinfo{number}{7} (\bibinfo{year}{1983}), \bibinfo{pages}{821--841}.
\newblock


\bibitem[\protect\citeauthoryear{Lynch, Segala, and Vaandrager}{Lynch
  et~al\mbox{.}}{2003}]%
        {DBLP:journals/iandc/LynchSV03}
\bibfield{author}{\bibinfo{person}{Nancy~A. Lynch}, \bibinfo{person}{Roberto
  Segala}, {and} \bibinfo{person}{Frits~W. Vaandrager}.}
  \bibinfo{year}{2003}\natexlab{}.
\newblock \showarticletitle{Hybrid {I/O} automata}.
\newblock \bibinfo{journal}{\emph{Inf. Comput.}} \bibinfo{volume}{185},
  \bibinfo{number}{1} (\bibinfo{year}{2003}), \bibinfo{pages}{105--157}.
\newblock
\urldef\tempurl%
\url{https://doi.org/10.1016/S0890-5401(03)00067-1}
\showDOI{\tempurl}


\bibitem[\protect\citeauthoryear{Majumdar, Pirron, Yoshida, and
  Zufferey}{Majumdar et~al\mbox{.}}{2019}]%
        {ECOOP19}
\bibfield{author}{\bibinfo{person}{Rupak Majumdar}, \bibinfo{person}{Marcus
  Pirron}, \bibinfo{person}{Nobuko Yoshida}, {and} \bibinfo{person}{Damien
  Zufferey}.} \bibinfo{year}{2019}\natexlab{}.
\newblock \showarticletitle{Motion Session Types for Robotic Interactions}. In
  \bibinfo{booktitle}{\emph{Proceedings of the 33rd European Conference on
  Object-Oriented Programming (ECOOP '19)}} \emph{(\bibinfo{series}{LIPIcs})}.
  \bibinfo{publisher}{{Schloss Dagstuhl - Leibniz-Zentrum fuer Informatik}}.
\newblock


\bibitem[\protect\citeauthoryear{McIver and Morgan}{McIver and Morgan}{2005}]%
        {McIverMorgan}
\bibfield{author}{\bibinfo{person}{Annabelle McIver} {and}
  \bibinfo{person}{Carroll Morgan}.} \bibinfo{year}{2005}\natexlab{}.
\newblock \bibinfo{booktitle}{\emph{Abstraction, Refinement and Proof for
  Probabilistic Systems}}.
\newblock \bibinfo{publisher}{Springer}.
\newblock


\bibitem[\protect\citeauthoryear{Nuzzo}{Nuzzo}{2015}]%
        {Nuzzo:EECS-2015-189}
\bibfield{author}{\bibinfo{person}{Pierluigi Nuzzo}.}
  \bibinfo{year}{2015}\natexlab{}.
\newblock \emph{\bibinfo{title}{Compositional Design of Cyber-Physical Systems
  Using Contracts}}.
\newblock \bibinfo{thesistype}{Ph.D. Dissertation}. \bibinfo{school}{EECS
  Department, University of California, Berkeley}.
\newblock
\urldef\tempurl%
\url{http://www2.eecs.berkeley.edu/Pubs/TechRpts/2015/EECS-2015-189.html}
\showURL{%
\tempurl}


\bibitem[\protect\citeauthoryear{Nuzzo, Sangiovanni{-}Vincentelli, Bresolin,
  Geretti, and Villa}{Nuzzo et~al\mbox{.}}{2015}]%
        {DBLP:journals/pieee/NuzzoSBGV15}
\bibfield{author}{\bibinfo{person}{Pierluigi Nuzzo},
  \bibinfo{person}{Alberto~L. Sangiovanni{-}Vincentelli},
  \bibinfo{person}{Davide Bresolin}, \bibinfo{person}{Luca Geretti}, {and}
  \bibinfo{person}{Tiziano Villa}.} \bibinfo{year}{2015}\natexlab{}.
\newblock \showarticletitle{A Platform-Based Design Methodology With Contracts
  and Related Tools for the Design of Cyber-Physical Systems}.
\newblock \bibinfo{journal}{\emph{Proc. IEEE}} \bibinfo{volume}{103},
  \bibinfo{number}{11} (\bibinfo{year}{2015}), \bibinfo{pages}{2104--2132}.
\newblock
\urldef\tempurl%
\url{https://doi.org/10.1109/JPROC.2015.2453253}
\showDOI{\tempurl}


\bibitem[\protect\citeauthoryear{Platzer}{Platzer}{2018}]%
        {Platzer18}
\bibfield{author}{\bibinfo{person}{Andr{\'{e}} Platzer}.}
  \bibinfo{year}{2018}\natexlab{}.
\newblock \bibinfo{booktitle}{\emph{Logical Foundations of Cyber-Physical
  Systems}}.
\newblock \bibinfo{publisher}{Springer}.
\newblock
\showISBNx{978-3-319-63587-3}
\urldef\tempurl%
\url{https://doi.org/10.1007/978-3-319-63588-0}
\showDOI{\tempurl}


\bibitem[\protect\citeauthoryear{Platzer and Tan}{Platzer and Tan}{2018}]%
        {PlatzerT18}
\bibfield{author}{\bibinfo{person}{Andr{\'{e}} Platzer} {and}
  \bibinfo{person}{Yong~Kiam Tan}.} \bibinfo{year}{2018}\natexlab{}.
\newblock \showarticletitle{Differential Equation Axiomatization: The
  Impressive Power of Differential Ghosts}. In
  \bibinfo{booktitle}{\emph{Proceedings of the 33rd Annual {ACM/IEEE} Symposium
  on Logic in Computer Science, {LICS} 2018}}. \bibinfo{publisher}{{ACM}},
  \bibinfo{pages}{819--828}.
\newblock


\bibitem[\protect\citeauthoryear{Quigley, Conley, Gerkey, Faust, Foote, Leibs,
  Wheeler, and Ng}{Quigley et~al\mbox{.}}{2009}]%
        {ROS}
\bibfield{author}{\bibinfo{person}{Morgan Quigley}, \bibinfo{person}{Ken
  Conley}, \bibinfo{person}{Brian Gerkey}, \bibinfo{person}{Josh Faust},
  \bibinfo{person}{Tully Foote}, \bibinfo{person}{Jeremy Leibs},
  \bibinfo{person}{Rob Wheeler}, {and} \bibinfo{person}{Andrew~Y Ng}.}
  \bibinfo{year}{2009}\natexlab{}.
\newblock \showarticletitle{ROS: an open-source Robot Operating System}. In
  \bibinfo{booktitle}{\emph{ICRA workshop on open source software}}.
\newblock


\bibitem[\protect\citeauthoryear{Rondon, Kawaguchi, and Jhala}{Rondon
  et~al\mbox{.}}{2008}]%
        {LiquidTypes}
\bibfield{author}{\bibinfo{person}{Patrick~Maxim Rondon}, \bibinfo{person}{Ming
  Kawaguchi}, {and} \bibinfo{person}{Ranjit Jhala}.}
  \bibinfo{year}{2008}\natexlab{}.
\newblock \showarticletitle{Liquid types}. In
  \bibinfo{booktitle}{\emph{Proceedings of the {ACM} {SIGPLAN} 2008 Conference
  on Programming Language Design and Implementation}}.
  \bibinfo{publisher}{{ACM}}, \bibinfo{pages}{159--169}.
\newblock


\bibitem[\protect\citeauthoryear{Rounds and Song}{Rounds and Song}{2003}]%
        {RoundsS03}
\bibfield{author}{\bibinfo{person}{W.C. Rounds} {and} \bibinfo{person}{H.
  Song}.} \bibinfo{year}{2003}\natexlab{}.
\newblock \showarticletitle{The Phi-Calculus: {A} Language for Distributed
  Control of Reconfigurable Embedded Systems}. In
  \bibinfo{booktitle}{\emph{HSCC}}. \bibinfo{publisher}{Springer},
  \bibinfo{pages}{435--449}.
\newblock


\bibitem[\protect\citeauthoryear{Scalas and Yoshida}{Scalas and
  Yoshida}{2019}]%
        {SY2019}
\bibfield{author}{\bibinfo{person}{Alceste Scalas} {and}
  \bibinfo{person}{Nobuko Yoshida}.} \bibinfo{year}{2019}\natexlab{}.
\newblock \showarticletitle{Less is More: Multiparty Session Types Revisited}.
\newblock \bibinfo{journal}{\emph{Proc. ACM Program. Lang.}}
  \bibinfo{volume}{3}, \bibinfo{number}{POPL}, Article \bibinfo{articleno}{30}
  (\bibinfo{date}{Jan.} \bibinfo{year}{2019}), \bibinfo{numpages}{29}~pages.
\newblock
\showISSN{2475-1421}
\urldef\tempurl%
\url{https://doi.org/10.1145/3290343}
\showDOI{\tempurl}


\bibitem[\protect\citeauthoryear{Siegwart, Nourbakhsh, and Scaramuzza}{Siegwart
  et~al\mbox{.}}{2011}]%
        {Siegwart}
\bibfield{author}{\bibinfo{person}{Roland Siegwart},
  \bibinfo{person}{Illah~Reza Nourbakhsh}, {and} \bibinfo{person}{Davide
  Scaramuzza}.} \bibinfo{year}{2011}\natexlab{}.
\newblock \bibinfo{booktitle}{\emph{Introduction to Autonomous Mobile Robots}}.
\newblock \bibinfo{publisher}{MIT}.
\newblock


\bibitem[\protect\citeauthoryear{Thrun, Burgard, and Fox}{Thrun
  et~al\mbox{.}}{2006}]%
        {ProbabilisticRobotics}
\bibfield{author}{\bibinfo{person}{Sebastian Thrun}, \bibinfo{person}{Wolfram
  Burgard}, {and} \bibinfo{person}{Dieter Fox}.}
  \bibinfo{year}{2006}\natexlab{}.
\newblock \bibinfo{booktitle}{\emph{Probabilistic Robotics}}.
\newblock \bibinfo{publisher}{MIT}.
\newblock


\bibitem[\protect\citeauthoryear{Tripakis}{Tripakis}{2016}]%
        {DBLP:journals/pieee/Tripakis16}
\bibfield{author}{\bibinfo{person}{Stavros Tripakis}.}
  \bibinfo{year}{2016}\natexlab{}.
\newblock \showarticletitle{Compositionality in the Science of System Design}.
\newblock \bibinfo{journal}{\emph{Proc. IEEE}} \bibinfo{volume}{104},
  \bibinfo{number}{5} (\bibinfo{year}{2016}), \bibinfo{pages}{960--972}.
\newblock
\urldef\tempurl%
\url{https://doi.org/10.1109/JPROC.2015.2510366}
\showDOI{\tempurl}


\bibitem[\protect\citeauthoryear{Wang, Zhan, and Zou}{Wang
  et~al\mbox{.}}{2015}]%
        {HHL}
\bibfield{author}{\bibinfo{person}{Shuling Wang}, \bibinfo{person}{Naijun
  Zhan}, {and} \bibinfo{person}{Liang Zou}.} \bibinfo{year}{2015}\natexlab{}.
\newblock \showarticletitle{An Improved {HHL} Prover: An Interactive Theorem
  Prover for Hybrid Systems}. In \bibinfo{booktitle}{\emph{Formal Methods and
  Software Engineering - 17th International Conference on Formal Engineering
  Methods, {ICFEM} 2015, Paris, France, November 3-5, 2015, Proceedings}}
  \emph{(\bibinfo{series}{Lecture Notes in Computer Science})},
  Vol.~\bibinfo{volume}{9407}. \bibinfo{publisher}{Springer},
  \bibinfo{pages}{382--399}.
\newblock


\bibitem[\protect\citeauthoryear{Yan, Jiao, Wang, Wang, and Zhan}{Yan
  et~al\mbox{.}}{2020}]%
        {DBLP:journals/tosem/YanJWWZ20}
\bibfield{author}{\bibinfo{person}{Gaogao Yan}, \bibinfo{person}{Li Jiao},
  \bibinfo{person}{Shuling Wang}, \bibinfo{person}{Lingtai Wang}, {and}
  \bibinfo{person}{Naijun Zhan}.} \bibinfo{year}{2020}\natexlab{}.
\newblock \showarticletitle{Automatically Generating SystemC Code from {HCSP}
  Formal Models}.
\newblock \bibinfo{journal}{\emph{{ACM} Trans. Softw. Eng. Methodol.}}
  \bibinfo{volume}{29}, \bibinfo{number}{1} (\bibinfo{year}{2020}),
  \bibinfo{pages}{4:1--4:39}.
\newblock
\urldef\tempurl%
\url{https://doi.org/10.1145/3360002}
\showDOI{\tempurl}


\bibitem[\protect\citeauthoryear{Yoshida and Gheri}{Yoshida and Gheri}{2020}]%
        {YG2020}
\bibfield{author}{\bibinfo{person}{Nobuko Yoshida} {and}
  \bibinfo{person}{Lorenzo Gheri}.} \bibinfo{year}{2020}\natexlab{}.
\newblock \showarticletitle{{A Very Gentle Introduction to Multiparty Session
  Types}}. In \bibinfo{booktitle}{\emph{16th International Conference on
  Distributed Computing and Internet Technology}}
  \emph{(\bibinfo{series}{LNCS})}, Vol.~\bibinfo{volume}{11969}.
  \bibinfo{publisher}{Springer}, \bibinfo{pages}{73--93}.
\newblock


\bibitem[\protect\citeauthoryear{Zamani, Esfahani, Majumdar, Abate, and
  Lygeros}{Zamani et~al\mbox{.}}{2014}]%
        {ZamaniEMAL14}
\bibfield{author}{\bibinfo{person}{Majid Zamani},
  \bibinfo{person}{Peyman~Mohajerin Esfahani}, \bibinfo{person}{Rupak
  Majumdar}, \bibinfo{person}{Alessandro Abate}, {and} \bibinfo{person}{John
  Lygeros}.} \bibinfo{year}{2014}\natexlab{}.
\newblock \showarticletitle{Symbolic Control of Stochastic Systems via
  Approximately Bisimilar Finite Abstractions}.
\newblock \bibinfo{journal}{\emph{{IEEE} Trans. Automat. Contr.}}
  \bibinfo{volume}{59}, \bibinfo{number}{12} (\bibinfo{year}{2014}),
  \bibinfo{pages}{3135--3150}.
\newblock
\urldef\tempurl%
\url{https://doi.org/10.1109/TAC.2014.2351652}
\showDOI{\tempurl}


\end{thebibliography}

\iftoggle{full}{
\appendix 
\section{Appendix}
\label{sec:appendix}

\subsection{Proofs for Subject Reduction Theorem and Progress Theorem}
\label{app:sr}
We first list several basic lemmas. In the proofs, we often omit $I$ or $\mathcal{P}$ unless required. 

\begin{lemma}[Inversion lemma]
  \label{lem:Inv_S}
 \begin{enumerate}[label=(\arabic*)]
 \item\label{lem:Inv_S1}%
   Let $\der{\Gamma}{\PP}{\T}$.
   \begin{enumerate}[label=(\alph*)]
   \item\label{lem:Inv_S1a} 
     If $\PP = \sum\limits_{i\in I}\procin  {\pp}   {\ell_i(\x)} {\Q_i}$, then
     $ \&_{i\in I}\procin {\pp} {\ell_i{(\S_i)}}{\T_i}\subt\T $ and $ \der{\Gamma, x:\S_i}{ \Q_i}{\T_i} ,$ for every $i\in I.$
   \item\label{lem:Inv_S1b} If $\PP = \procout  {\pp}{\ell}{\e}{\Q}$, then
     $ \der{\Gamma}{\e}{\S}$  and  $\der{\Gamma}{ \Q}{\T'} $ and $\S'\subs \S$ and $\procout{\pp}{\ell}{\S'}{\T'}\subt\T.$
   \item\label{lem:Inv_S1c}
     If $\PP = \cond{\e}{\PP_1}{\PP_2} $, then
     $\der{\Gamma}{\PP_1}{\T_1}$ and $\der{\Gamma}{\PP_2}{\T_2}$  and    $\T_1\subt \T$ and $\T_2\subt\T.$ 
   \item\label{lem:Inv_S1d}  If $\PP = \mu \procvar.\Q $, then $ \der{\Gamma, \procvar:\T}{\Q}{\T} $.
   \item\label{lem:Inv_S1e}
     If $\PP = \procvar$, then $\Gamma = \Gamma', \procvar:\T'$ and $\T'\subt\T$.

\item\label{lem:Inv_M1}%
If $P=\motion{dt}{a}.{Q}$, then 
$\der{\Gamma}{Q}{\T'}$ and 
$\T'\subt\T$.

\item\label{lem:Inv_M2}%
If $P=\sum\limits_{i\in I}{\procin  \q   {\ell_i(x_i)}
  \PP_i}+\motion{dt}{a}.{Q}$, then 
$\&_{i\in I}\procin {\pp} {\ell_i{(\S_i)}}{\T_i} \& \T'\subt\T $ 
and $ \der{\Gamma, x:\S_i}{ \PP_i}{\T_i} ,$ for every $i\in I$
and $\der{\Gamma}{\motion{dt}{a}.{Q}}{\T'}$.

   \end{enumerate}
 \item\label{lem:Inv_S2}
   If $\der{} {\prod\limits_{i\in I}\pa{\pp_i}{\PP_i}}\GL$, then
   $\participant \GL\subseteq\set{\pp_i\mid i \in I}$ and
   $\der{}{\PP_i}{\projt \GL {{\pp_i}}}$  for every $i\in I$.

 \end{enumerate}
\end{lemma}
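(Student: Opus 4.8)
The plan is to prove both parts by induction on the typing derivation, with the only genuine complication being the non-syntax-directed subsumption rule \rulename{t-sub}. First I would record two standard auxiliary facts about the coinductively defined subtyping relation $\subt$ of Definition~\ref{def:subt}: that it is reflexive and transitive. Reflexivity is immediate by exhibiting the identity relation as a witness closed under the rules of Figure~\ref{fig:subtyping}; transitivity follows by showing that the relational composition $\subt \circ \subt$ is itself closed under those rules (the motion premise $\ma \refines \ma'$ composing by transitivity of refinement, and the predicate implications composing in the obvious way). These let me collapse any chain of \rulename{t-sub} applications into a single subtyping step.

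For part \ref{lem:Inv_S1} I would proceed by induction on the derivation of $\der{\Gamma}{\PP}{\T}$, inspecting the last rule. For a fixed process shape there is exactly one matching syntax-directed rule, so the last rule is either that rule or \rulename{t-sub}. If it is the matching rule---say \rulename{t-choice1} for $\PP = \sum_{i\in I}\procin{\pp}{\ell_i(\x)}{\Q_i}$---I read the premises directly and obtain the required subtyping by reflexivity. If the last rule is \rulename{t-sub}, its premise is $\der{\Gamma}{\PP}{T''}$ with $T''\subt\T$; the induction hypothesis applied to that premise yields a type $T_0$ with $T_0 \subt T''$ together with the desired typings of the subterms, and transitivity gives $T_0 \subt \T$. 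Each of the cases \ref{lem:Inv_S1a}--\ref{lem:Inv_S1e} and the motion cases \ref{lem:Inv_M1}--\ref{lem:Inv_M2} is handled uniformly in this way, using respectively \rulename{t-out} (together with the sort-subsumption side condition $\S'\subs\S$), \rulename{t-cond}, \rulename{t-rec}, the context-lookup axiom for process variables, \rulename{t-motion}, and \rulename{t-choice2}.

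Part \ref{lem:Inv_S2} is a direct inversion: the only rule whose conclusion is a judgement of the form $\der{}{\prod_{i\in I}\pa{\pp_i}{\PP_i}}{\GL}$ is \rulename{t-sess} (there is no subsumption at the level of global types), so reading off its premises immediately gives $\participant{\GL} = \PC$ and $\der{}{\PP_i}{\projt{\GL}{\pp_i}}$ for each $i\in I$; since the components appearing in the session are exactly $\{\pp_i \mid i\in I\}$, the containment $\participant{\GL}\subseteq\{\pp_i\mid i\in I\}$ follows.

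I expect the only real obstacle to be the bookkeeping for \rulename{t-sub} in part \ref{lem:Inv_S1}: because subtyping is coinductive, the reflexivity and transitivity lemmas must be proved by exhibiting appropriate candidate relations and checking closure under the rules of Figure~\ref{fig:subtyping}, and some care is needed in the output and input cases, where subtyping may enlarge or shrink the index set (the passage from $I$ to $I\cup J$) and tighten predicate refinements, so that the reconstructed premises still match the shape demanded in the statement. Everything else is routine case analysis.
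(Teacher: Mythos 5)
Your proposal is correct and follows exactly the route the paper takes: the paper's own proof is the single line ``by induction on type derivations,'' and your elaboration—collapsing chains of \rulename{t-sub} via reflexivity and transitivity of the coinductive subtyping relation, matching each process shape to its unique syntax-directed rule, and handling part (2) by direct inversion of \rulename{t-sess}—is the standard way to discharge it. No gaps.
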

\begin{proof}
By induction on type derivations.
\end{proof}

\begin{lemma}[Substitution lemma]\label{lem:Subst_S}
\label{lem:subs_X}
\begin{enumerate}
\item If $\der{\Gamma, x:\S}{\PP}{\T} $ and $\der{\Gamma}{\val}{\S}$, then 
$\der{\Gamma}{\PP \sub{\val}{\x}}{\T}$.
\item 
If 
$\der{\Gamma, \procvar:\T}{\PP}{\T} $ and $\der{\Gamma}{\Q}{\T}$, then 
$\der{\Gamma}{\PP \sub{\Q}{\procvar}}{\T}$.
\end{enumerate}
\end{lemma}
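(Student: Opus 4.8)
The plan is to prove both parts by induction on the typing derivation, using the Inversion Lemma (Lemma~\ref{lem:Inv_S}) to expose the last rule applied and the induction hypothesis (IH) on the immediate subderivations. In each case the substitution commutes with the term constructor — e.g.\ $(\procout{\q}{\ell}{\e}{\Q})\sub{\val}{\x} = \procout{\q}{\ell}{\e\sub{\val}{\x}}{\Q\sub{\val}{\x}}$ — so after applying the IH to the premises one simply reassembles the derivation with the same rule. Throughout I would adopt the Barendregt convention, assuming the bound variables (the $x_i$ in an external choice, the $\procvar'$ in a nested recursion) are fresh for the substituted value or process, so that no variable capture occurs. The subsumption rule \rulename{t-sub} is handled uniformly in both parts: apply the IH to the premise $\der{\Gamma}{\PP}{\T_0}$ and re-apply the same subtyping $\T_0 \subt \T$ to the result.

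For part~(1), the interesting cases are those whose typing rule consults the logical context $\Sigma$. In \rulename{t-out}, inversion gives $\der{\Gamma}{\e}{\S'}$, $\der{\Gamma}{\Q}{\T'}$ and the obligation $\Sigma \Rightarrow \pred \wedge \pred'\sub{\e}{\nu}$; I would first establish a companion expression-substitution fact (the type and value of $\e$ are preserved when $\e$ is replaced by $\e\sub{\val}{\x}$), then observe that validity of a logical implication is preserved under substitution, so that $\Sigma\sub{\val}{\x} \Rightarrow \pred\sub{\val}{\x} \wedge \pred'\sub{\e\sub{\val}{\x}}{\nu}$ still holds. The cases \rulename{t-choice1}, \rulename{t-choice2} and \rulename{t-cond} are analogous: each augments $\Sigma$ with a guard or a received-value predicate, and the IH is applied to the continuations under the substituted context. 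In \rulename{t-motion} the pre- and postconditions range only over the physical variables $X \cup W$, which are syntactically disjoint from the value variable $x$; hence the motion obligations are untouched by $\sub{\val}{\x}$ and only the continuation needs the IH. The variable and recursion cases are immediate, since $\procvar\sub{\val}{\x} = \procvar$.

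For part~(2), I would first generalise the statement so that the induction goes through: prove that $\der{\Gamma,\procvar{:}\T}{\PP}{\T''}$ together with $\der{\Gamma}{\Q}{\T}$ imply $\der{\Gamma}{\PP\sub{\Q}{\procvar}}{\T''}$ for an \emph{arbitrary} conclusion type $\T''$ (the lemma is the instance $\T''=\T$), because the subterms of $\PP$ carry types different from $\T$. The base case $\PP=\procvar$ uses inversion item~\ref{lem:Inv_S1e} to obtain $\T\subt\T''$, and since $\procvar\sub{\Q}{\procvar}=\Q$ with $\der{\Gamma}{\Q}{\T}$, subsumption yields $\der{\Gamma}{\Q}{\T''}$; a distinct variable $\procvar'\neq\procvar$ is left unchanged. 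The only delicate inductive case is nested recursion $\mu\procvar'.\PP'$ with $\procvar'$ fresh: its premise types $\PP'$ under $\Gamma,\procvar{:}\T,\procvar'{:}\T''$, so before applying the IH I must weaken $\der{\Gamma}{\Q}{\T}$ to $\der{\Gamma,\procvar'{:}\T''}{\Q}{\T}$ via a standard weakening lemma (fresh bindings preserve typing), after which \rulename{t-rec} is reapplied.

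The main obstacle I anticipate is not the term-structural bookkeeping, which is routine, but the disciplined handling of the logical context $\Sigma$ and the predicate refinements that distinguish this system from ordinary multiparty session types. The substitution must be pushed simultaneously through the typing context, the refinement predicates attached to messages, and the side-conditions of the form $\Sigma \Rightarrow \varphi$; the argument rests on the (standard but here load-bearing) fact that substitution preserves validity of first-order implications and commutes with the refinement annotations. I would therefore isolate this as a small auxiliary lemma on substitution into predicates and logical contexts and invoke it uniformly in the \rulename{t-out}, \rulename{t-choice1}, \rulename{t-choice2}, and \rulename{t-cond} cases, keeping the physical variables $X,W$ syntactically separate from the value variables so that the motion obligations remain invariant.
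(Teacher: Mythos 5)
Your proposal is correct and follows essentially the same route as the paper, whose entire proof is the single line ``By structural induction on $\PP$''; your case analysis, the generalisation of part~(2) to an arbitrary conclusion type, and the care taken with the logical context $\Sigma$ and capture-avoidance are all the details that one-liner implicitly delegates to the reader. Nothing in your argument diverges from or goes beyond what the paper intends.
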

\begin{proof}
By structural induction on $\PP$.
\end{proof}

\begin{lemma}{Subject Congruence}\label{lem:cong}$ \ $
	\begin{enumerate}[label=(\arabic*)]
	\item \label{lem:congP}
		Let $ \der{\Gamma}{\PP}{\T} $ and $ \PP\equiv\Q$. 
		Then $ \der{\Gamma}{\Q}{\T} $.
	\item \label{lem:congN}
		Let $ \der{}{M}{\GL}$ and $M\equiv M'$. 
		Then $ \der{}{M}{\GL}$.
	\end{enumerate}
\end{lemma}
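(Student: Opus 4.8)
The plan is to prove both parts by induction on the derivation of the structural congruence, reducing each case to the inversion and substitution lemmas already established (Lemma~\ref{lem:Inv_S} and Lemma~\ref{lem:Subst_S}). Since $\equiv$ is the least congruence generated by the axioms in Figure~\ref{tab:sync:congr}, it suffices to treat the generating axioms together with the reflexive, symmetric, transitive, and compatible closure. The closure cases are routine: reflexivity is trivial, symmetry and transitivity follow by composing the two statements proved for the generating axioms in both directions, and compatibility follows by inverting the typing derivation, applying the induction hypothesis to the congruent subterm, and re-applying the relevant typing rule.

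For part~(1) the only generating axiom is \rulename{s-rec}, i.e.\ $\mu\procvar.\PP \equiv \PP\sub{\mu\procvar.\PP}{\procvar}$. In the unfolding direction I would assume $\der{\Gamma}{\mu\procvar.\PP}{\T}$, apply the inversion lemma for recursion (Lemma~\ref{lem:Inv_S}\ref{lem:Inv_S1d}) to obtain $\der{\Gamma,\procvar{:}\T}{\PP}{\T}$, and then instantiate the substitution lemma for process variables (Lemma~\ref{lem:Subst_S}(2)) with the substituted term taken to be $\mu\procvar.\PP$ itself---whose typing $\der{\Gamma}{\mu\procvar.\PP}{\T}$ is our hypothesis---to conclude $\der{\Gamma}{\PP\sub{\mu\procvar.\PP}{\procvar}}{\T}$.

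For part~(2) I would induct on the derivation of $M \equiv M'$. The axiom \rulename{s-multi} reduces directly to part~(1): by the inversion lemma for sessions (Lemma~\ref{lem:Inv_S}\ref{lem:Inv_S2}) each participant is typed against its projection $\der{}{\PP_i}{\projt{\GL}{\pp_i}}$, so replacing one process by a structurally congruent one preserves the session typing. The axioms \rulename{s-par 1} and \rulename{s-par 2} (commutativity and associativity of $\parcomp$) are immediate from the shape of rule \rulename{t-sess}, which types a parallel composition as an indexed product $\prod_{i\in I}\pa{\pp_i}{\PP_i}$: reordering or reassociating the factors only permutes the index set $I$, leaving the premises---the projections and the pairwise collision-freedom conditions---unchanged.

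The main obstacle is the folding direction of \rulename{s-rec}: from $\der{\Gamma}{\PP\sub{\mu\procvar.\PP}{\procvar}}{\T}$ we must recover $\der{\Gamma,\procvar{:}\T}{\PP}{\T}$ in order to re-apply \rulename{t-rec}, and this converse of substitution is not supplied directly by the lemmas as stated. My plan is to exploit guardedness: every occurrence of $\procvar$ in $\PP$ lies under a prefix, so in the unfolded term each corresponding subterm is exactly $\mu\procvar.\PP$ and, by inversion, is typed with $\T$; I would then re-abstract these subterms back to $\procvar$, which is sound because $\procvar{:}\T$ is recorded in the context, and the equi-recursive identification of types (types with the same regular tree are equal) guarantees the abstraction does not disturb the types assigned along the derivation. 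Some care is also needed with the logical context $\Sigma$, since \rulename{t-rec} resets it to $\true$; I would verify that the context accumulated at each guarded recursion point still permits \rulename{t-rec} to fire, which holds because that rule is applicable under an arbitrary $\Sigma$.
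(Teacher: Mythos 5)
Your proposal takes essentially the same route as the paper: the paper's proof handles only the \rulename{s-rec} axiom of part~(1), by inverting the typing of $\mu\procvar.\PP$ (Lemma~\ref{lem:Inv_S}\ref{lem:Inv_S1}\ref{lem:Inv_S1d}) and then applying the substitution lemma (Lemma~\ref{lem:Subst_S}) with $\mu\procvar.\PP$ substituted for $\procvar$, exactly as you do, and declares part~(2) ``straightforward.'' Your extra care about the folding direction of \rulename{s-rec} and the explicit treatment of \rulename{s-par 1}/\rulename{s-par 2} go beyond what the paper writes down, but they refine rather than depart from its argument.
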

\begin{proof}
By case analysis on the derivation of $ \PP\equiv \Q $ and $ M\equiv
M'$
\begin{enumerate}
\item The only interesting case is \rulename{t-rec}.
		So, in this case, $ \PP = \mu \procvar.\PP' $ and $ \Q = \PP'\sub {\mu \procvar.\PP'}{\procvar}.$
		By Lemma~\ref{lem:Inv_S}.\ref{lem:Inv_S1}.\ref{lem:Inv_S1d} we get $ \der{\Gamma,\procvar:\T}{\PP'}{\T}. $
		From that, $ \der{\Gamma}{\PP}{\T} $ 
and Lemma~\ref{lem:subs_X}, we conclude.
\item Straightforward.  
\end{enumerate}
\end{proof}

\begin{lemma}
  \label{lem:erase}
  If\; $\tout\q\ell{\S}.{\T}\leq\projt\GL\pp$ %
  \;and\;
  $\tin\pp\ell{\S'}.{\T'} \leq \projt\GL\q$, %
then 
  \begin{enumerate}[label=(\arabic*)]
  \item\label{item:erase:subp}%
    $\T \subt \projt{(\redG\GL\pp\ell\pq)}\pp$
  \item\label{item:erase:subq}%
    $\T' \subt \projt{(\redG\GL\pp\ell\pq)}\pq$; and 
  \item\label{item:erase:g-consume-proj-r}%
    $\projt\GL\pr \subt \projt{(\redG\GL\pp\ell\q)}\pr$ %
    \;for\; $\pr\not=\pp$, $\pr\not=\q$.%
  \end{enumerate}
\end{lemma}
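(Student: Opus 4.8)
The plan is to prove Lemma~\ref{lem:erase} by analysing the structure of $\GL$ and tracking how projection interacts with the consumption operation $\redG\GL\pp\ell\q$ defined in Definition~\ref{def:global-type-consumption}. The hypotheses tell us that $\pp$'s projection offers an output $\tout\q\ell\S.\T$ (so $\pp$ sends $\ell$ to $\q$) and $\q$'s projection offers a matching input $\tin\pp\ell{\S'}.\T'$. Since projection onto $\pp$ produces a selection only in case~(2) of Definition~\ref{def:projection}, and onto $\q$ produces a branching only in case~(1), the relevant portion of $\GL$ must (up to the structural manipulations of merging, separation, and sequencing) begin with a communication prefix $\GvtPre{\pp}{\q}{[\pred_i]\ell}{\dots}$ with $\ell$ among its labels. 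The first step is therefore to perform induction on the structure of $\GL$ (equivalently, on the derivation of the projection relation in Definition~\ref{def:projection}), peeling off the constructs that do not directly realise the $\pp\to\q$ exchange.

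First I would handle the base case where $\GL = \Gvti{\pp}{\q}{[\pred_i]\ell}{\pred'}{\GL}$ with $\ell = \ell_k$ for some $k\in I$. Here $\redG\GL\pp\ell\q = \GL_k$ by clause~(1) of Definition~\ref{def:global-type-reduction}. By Definition~\ref{def:projection}(2), $\projt\GL\pp = \oplus\{[\pred_i]\tout\q{\ell_i}{\pred'_i}.\projt{\GL_i}\pp\}_{i\in I}$, and since the hypothesis gives $\tout\q\ell\S.\T \leq \projt\GL\pp$, subtyping \rulename{sub-out} forces $\T \subt \projt{\GL_k}\pp = \projt{(\redG\GL\pp\ell\q)}\pp$, establishing~\ref{item:erase:subp}; the dual argument using Definition~\ref{def:projection}(1) and \rulename{sub-in} gives~\ref{item:erase:subq}. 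For a third participant $\pr\notin\{\pp,\q\}$, Definition~\ref{def:projection}(3) gives $\projt\GL\pr = \bigsqcap_{i\in I}\projt{\GL_i}\pr$, and since the merge is a greatest lower bound with respect to $\subt$, we have $\bigsqcap_{i\in I}\projt{\GL_i}\pr \subt \projt{\GL_k}\pr = \projt{(\redG\GL\pp\ell\q)}\pr$, establishing~\ref{item:erase:g-consume-proj-r}.

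Next I would treat the inductive cases corresponding to the other clauses of Definition~\ref{def:global-type-reduction}. For clause~(2), where the leading communication is $\pr\to\ps$ with $\{\pr,\ps\}\cap\{\pp,\q\}=\emptyset$, the exchange $\pp\to\q$ happens inside each continuation $\GL_i$; the projections onto $\pp$, $\q$, and onto a third party $\pr$ each commute through the prefix (using the merge-commutation and the induction hypothesis on the $\GL_i$), and subtyping is preserved because \rulename{sub-in} and the merge operator are monotone. For the separation clause~(3), $\redG{(\GLPre_1\ast\GLPre_2)}\pp\ell\q$ reduces the component containing $\{\pp,\q\}$; since full-separation guarantees $\pr$ lies in exactly one component and projection onto $\pr$ only sees that component, the three sub-claims reduce to the induction hypothesis applied to $\GLPre_1$ (or $\GLPre_2$), while the untouched component's projection is literally unchanged. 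The sequencing clause~(6) is handled by noting that $\projt{(\GLPre.\GL)}\pr = \projt{\GLPre}\pr.\projt{\GL}\pr$ and that reduction acts on the prefix, so the tail is preserved verbatim. The structural-equivalence/reconfiguration clause~(7), using $\rightsquigarrow^\ast$, requires a preliminary lemma that each such step preserves all three projections up to $\subt$; I would factor this out.

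The main obstacle I expect is the bookkeeping around the merge operator $\bigsqcap$ in clause~\ref{item:erase:g-consume-proj-r}: I must verify that $\bigsqcap$ is genuinely a greatest lower bound under $\subt$ (so that projecting the merge and then selecting one branch yields a subtype of projecting a single continuation), and that this interacts correctly with the motion-primitive merge $\ma\bigsqcap\ma'$ from Definition~\ref{def:merge}, whose output refines both arguments. A secondary technical difficulty is the well-scopedness and full-separation bookkeeping needed to justify that, under the separating conjunction, projection onto an off-path participant $\pr$ is unaffected by a reduction in the other frame — this relies essentially on the uniqueness of scope labelling guaranteed for fully-separated types. I would isolate both of these as auxiliary lemmas on $\bigsqcap$ and on scope-stability before assembling the induction.
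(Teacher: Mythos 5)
Your proposal is correct and follows essentially the same route as the paper, which proves this lemma by induction on $\GL$; your case analysis over the clauses of Definition~\ref{def:global-type-consumption} and Definition~\ref{def:projection} is a faithful elaboration of that induction, and the auxiliary facts you isolate (that $\bigsqcap$ on input types is a lower bound under $\subt$ via the union-of-labels clauses, and that full separation insulates off-path participants) are exactly the properties that make the inductive steps go through.
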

\begin{proof}
By induction on $\GL$.  
\end{proof}

\theoremSR*

\begin{proof}
The second close is a corollary from the first close and
Lemma~\ref{lem:cong}.

The proof is by induction on the derivation of $\N \red \N'.$

\begin{itemize}
\item[] $\rulename{comm}$:

In this case, we can let:
$\N=\pa\pp{\sum\limits_{i\in I} \procin{\q}{\ell_i(\x)}{\PP_i}}\; \ | \ 
\;\pa\q{\procout \pp {\ell_j}{\e} \Q}$ 
and 
$\N' = \pa\pp{\PP_j}\sub{\val}{\x}\; | \;\pa\q\Q$ with
$\eval{\e}{\val}$. 

From $\der{}\N\GL,$ by Lemma \ref{lem:Inv_S}\ref{lem:Inv_S2},

 \begin{align}
    &  \participant\GL\subseteq \{\pp,\pq\} \cup \{\pp_l: l\in L\} \\
    &  \der{}{\sum\limits_{i\in I} \procin{\q}{\ell_i(\x)}{\PP_i}}{\projt{\GL}{\pp}} \label{eqv:proof28}\\
    & \der{}{\procout{\pp}{\ell_j}{\e}{\Q}}{\projt{\GL}{\pq}} \label{eqv:proof29} \\%
    &  \der{}{\PP_l}{\projt{\GL}{\pp_l}}  \text{ for every }l \in L \label{eqv:proof30}
\end{align}
By Lemma \ref{lem:Inv_S}\ref{lem:Inv_S1}\ref{lem:Inv_S1a}, from \eqref{eqv:proof28},
\begin{align}
    &   \bigwedge\limits_{i\in I}\tin \pq {\ell_i}{\S_i}.{\T_i}  \subt \projt{\GL}{\pp} \label{eqv:proof31}\\
    &   \der{\x:\S_i}{\PP_i}{\T_i} \text{ for every }i\in I   \label{eqv:proof32}%
\end{align}
By Lemma \ref{lem:Inv_S}\ref{lem:Inv_S1}\ref{lem:Inv_S1b}, from \eqref{eqv:proof29},
\begin{align}
    &   \procout {\pp}{\ell_{j}}{\S_j'}{\T_2'''}  \subt \projt{\GL}{\pq} \label{eqv:proof33}\\
    &   \der{}{\Q}{\T_2'''} \label{eqv:proof34} \\
    & \der{}{e}{\S_j} \label{eqv:proof35} \\
    & \S_j' \subs \S_j
  \end{align}
  By Lemma~\ref{lem:Subst_S}, from \eqref{eqv:proof32}, \eqref{eqv:proof35} and $\eval{\e}{\val},$
  \begin{align}
    \der{}{\PP_j\sub{v}{\x}}{\T_j} \label{eqv:proof37}
  \end{align}
  By Lemma~\ref{lem:erase}, from \eqref{eqv:proof31} and \eqref{eqv:proof33}
  \begin{align}
    \T_2''' \subt \projt{(\redG\GL\q{{\ell_j}}\pp)}{\pq}{}  \quad
    \T_{j}  \subt \projt{(\redG\GL\pp{{\ell_j}}\q)}{\pp}{} \quad
    \projt{\GL}{\pp_l} \subt \projt{(\redG\GL\q{{\ell_j}}\pp)}{\pp_l}{}, \;l\in L  \label{eqv:proof38}
 \end{align}
 By \rulename{t-sub} and \rulename{t-sess}, from \eqref{eqv:proof30}, \eqref{eqv:proof34}, \eqref{eqv:proof37} and \eqref{eqv:proof38},
 \begin{align*}
      \der{}{\N'}{\redG\GL\q{{\ell_j}}\pp}.
  \end{align*}

\item[] \rulename{default}: 

$\N = \pa\pp{\sum\limits_{i\in I} \procin{\q}{\ell_i(\x)}{\PP_i} +
  \motion{\dtsess}{a}{\PP}}$ and  $\N'=
  \pp\plays \proc{\motion{\dtsess}{\ma\AT t}.\PP}{\sstore{X}'}{\sstore{W}'}$ with
  \begin{enumerate}
\item\label{thm:Def_E1}%
$\pp\plays \proc{\motion{\dtsess}{\ma}.\PP}{\sstore{X}}{\sstore{W}} \xrightarrow{\tau} 
\pp\plays \proc{\motion{\dtsess}{\ma\AT 0}.\PP}{\sstore{X}}{\sstore{W}}$ and 
\item\label{thm:Def_E2}%
$\pp\plays \proc{\motion{\dtsess}{\ma\AT 0}.\PP}{\sstore{X}}{\sstore{W}} \xrightarrow{\motion{\dtsess}{\ma},\xi,\nu, t} 
\pp\plays \proc{\motion{\dtsess}{\ma\AT t}.\PP}{\sstore{X}'}{\sstore{W}'}$ 
\end{enumerate}
  From Item (\ref{thm:Def_E1}), by (IH) of rule \rulename{traj-base}, we have
  $\der{}{\N_0}{\GL_0}$
  such that $\N_0=\pp\plays \proc{\motion{\dtsess}{\ma\AT 0}.\PP}{\sstore{X}}{\sstore{W}}$ and ${\GL}\RedG{\GL_0}$.
  Similarly
  from Item (\ref{thm:Def_E2}), by (IH) of rule \rulename{traj-step},
  we have $\der{}{\N'}{\GL'}$ such that ${\GL_0}\RedG{\GL'}$, 
  hence ${\GL}\RedG{\GL'}$. 
  
\item[] \rulename{traj-base}:  By Definition~\ref{def:global-type-consumption}(\ref{Item:GC41},\ref{Item:GC6}).

\item[] \rulename{traj-step}:  By Definition~\ref{def:global-type-consumption}(\ref{Item:GC4},\ref{Item:GC6}).   

\item[] \rulename{non-interrupt}: By Definition~\ref{def:global-type-consumption}(\ref{Item:GC7}.\ref{Item:GC7A},\ref{Item:GC6}).   

\item[] \rulename{interrupt}: By Definition~\ref{def:global-type-consumption}(\ref{Item:GC7}.\ref{Item:GC7B},\ref{Item:GC6}).     

\item[]  $\rulename{t-conditional}$: 
     $\N=    \pa\pp{\cond{\e}{\PP}{\Q}} \; | \;  \prod\limits_{l\in L} \pa{\pp_l}{\PP_l},\;
       \N'=  \pa\pp\PP\; | \;   \prod\limits_{l\in L} \pa{\pp_l}{\PP_l},\; \eval{e}{\true}.$
\\
From $\der{}\N\GL,$ by Lemma \ref{lem:Inv_S}\ref{lem:Inv_S2},
\begin{align}
  & \der{}{{\cond{\e}{\PP}{\Q}}}{\projt{\GL}{\pp}}  \label{eqv:proof39}  \\
  & \der{}{\PP_l}{\projt{\GL}{\pp_l}},\;l\in L \label{eqv:proof40}
\end{align}
By Lemma \ref{lem:Inv_S}\ref{lem:Inv_S1}\ref{lem:Inv_S1c}, from \eqref{eqv:proof39},
\begin{align}
  & \der{}{\PP}{\T_1}  \qquad \der{}{\Q}{\T_2} \label{eqv:proof41} \\
  & \T_1\subt \projt{\GL}{\pp} \qquad  \T_2\subt \projt{\GL}{\pp} \label{eqv:proof42}
\end{align}
By  \rulename{t-sub} and \rulename{t-sess}, from \eqref{eqv:proof40},
\eqref{eqv:proof41} and  \eqref{eqv:proof42}, we derive
$\der{}{\N'}{\GL}.$

\item[]  $\rulename{f-conditional}$: Similar with
  $\rulename{t-conditional}$.

\item[] \rulename{m-par} 

By Definition~\ref{def:global-type-consumption}(\ref{Item:GC7}.\ref{Item:GC7B},\ref{Item:GC6}) and by (IH) from \rulename{traj-step}. 

\item[] \rulename{r-par} By (IH) of $\N_1 \xrightarrow{\alpha}\N_2$.

\item[] \rulename{r-struct}

  Assume that the last applied rule was \rulename{r-struct}.
  Then, $\N_1' \xrightarrow{\alpha} \N_2'$ was derived from $\N_1 \xrightarrow{\alpha} \N_2$  where $\N_1 \equiv \N_1'$ and $\N_2 \equiv    
  \N_2'.$ By Lemma~\ref{lem:cong}\ref{lem:congN}, from the assumption $\der{}{\N_1'}{\GL},$ we deduce $\der{}{\N_1}{\GL}.$
  By induction hypothesis,  $\der{}{\N_2}{\GL'} \text{ for some }\GL' \text{ such that }\GL\RedG\GL'.$
  By Lemma \ref{lem:cong}\ref{lem:congN}, $\der{}{\N'_2}{\GL'} \text{ for some }\GL' \text{ such that }\GL\RedG\GL'.$
\end{itemize}
\end{proof}

\theoremPROG*

\begin{proof} 
{\bf\em (Communication Progress)} The proof is divided into two cases:
  {\bf (a)} the guard appearing in the branching type followed by
  {\bf (b)} a message exchange between two parties; or 
reduces by conditional. 

{\bf Case (a)} follows from the fact that there is always at least one guard
  in a choice whose predicate is evaluated to true (Definition
  \ref{def:well-formed}(1)). 
The typing rules ensure that the predicates are satisfied when a
  message is sent.

{\bf Case (b)} The case when $\N$ contains the conditional 
is trivial since always $\N \red \N'$. 
There are three main cases:
\begin{itemize}
\item[]
$\GL=\GvtPair \pp\pq{\ell_i(S_i).\GL_i}$. 
Then $\N \equiv  \pa\pp\PP \;| \pa\q\Q \;|\;\N_1$ and
$\der{}\PP{\oplus_{i\in I}\tout \pq {\ell_i}{\S_i}.{(\projt{\GL_i}{\pp})} }$ 
 and 
$\der{}\Q{\&_{i\in I} \tin \pp
  {\ell_i}{\S_i}.{(\projt{\GL_i}{\pq})}}.$ 
Then we have 
     $\PP \equiv  \procout \pq {\ell_{j}}{\e} \PP_{j}^{\pp},$ $j\in I,$  $\eval\e v,$ and  $\Q\equiv \sum\limits_{i\in I'} \procin\pp{\ell_i(\x_i)}{\PP_i^\pq} \text{ and }I\subseteq I':$ 
      $\N \red \pa\pp \ \PP_{j}^{\pp} \;| \; \pa\q \PP_j^\pq \sub{v}{\x} \;| \; \N_1,$ by  \rulename{comm}.

\item[] Suppose $\GL= g.\GL'$.
  \begin{itemize}
  \item If $g = \GvtPre{\pp}{\q}{[\pred_i]\ell_i}{\set{\nu:\SL\mid\pred'_i}}.g'$, 
 it is proved as the first case above.
 \item The case $g$ is a motion primitive is proved in
 ({\bf\em Motion Progress}) below.
 \item The case $g=g_1 + g_2$  is reduced to the first case since
 all branching types can be written in the form of
 $\GvtPair \pp\pq{\ell_i(S_i).\GL_i}$ (see the paragraph below
 Definition~\ref{def:global-types}).
\item Suppose $\GL= (g_1 \ast g_2).\GL'$.

  Then by Definition~\ref{def:projection} and the fully separated condition of the well-formedness, if $\pp \in g_1$, then
  $\pp \not\in g_1$. Suppose
  $g_1 = \GvtPre{\pp}{\q}{[\pred_i]\ell_i}{\set{\nu:\SL\mid\pred'_i}}.g_i'$. 
  Then
  $\N \equiv  \pa\pp\PP \;| \pa\q\Q \;|\;\N_1$ and
$\der{}\PP{\oplus_{i\in I}\tout \pq {\ell_i}{\S_i}.{(\projt{\GL_i}{\pp})} }$ 
 and 
 $\der{}\Q{\&_{i\in I} \tin \pp {\ell_i}{\S_i}.{(\projt{\GL_i}{\pq})}}.$
 Thus this case is the same as $\GL=\GvtPair \pp\pq{\ell_i(S_i).\GL_i}$ above.

 When $g_i$ is a motion primitive is proved in ({\bf\em Motion Progress}) below.
  \end{itemize}
\item The case $\GL= \mu \ty.\GL'$. Since 
  $\GL=\GL'\sub{\mu \ty.\GL'}{\ty}$, the case reduces to one of the above. 
   \end{itemize}
{\bf\em (Motion Progress)} 
For executing motions, Definition~\ref{def:well-formed}(3) and Theorem~\ref{th:motion-compat}
ensure local trajectories can be composed into global trajectories.
Suppose $\GL=g.\GL'$.
\begin{itemize}
\item Suppose $g = \motion{\dtsess}{(\pp_i{:}\ma_i)}$. 
  In this case, we have:
  $\N=\prod_{i \in I} \pp_i\plays\proc{P_i}{\sstore{X}_i}{\sstore{W}_i}$.  
  and we can write $P_i=\motion{\dtsess}{\ma_i\AT t_i}.P_i'$.   
  By Definition~\ref{def:well-formed}(3) and Theorem~\ref{th:motion-compat},
  we know $i\neq j \Rightarrow \mathsf{disjoint}(\ma_i,\ma_j)$. Hence
  by \rulename{M-Par}, we obtain $\N \xrightarrow{\motion{\dtsess}{\ma_i}, t} \N'$, as required. 
\item The case
  $g = g_1\ast g_2$. If one of them is a branching, it is reduced to
  {\bf\em (Communication Progress)} proved above. Hence w.o.l.g., we can
  assume $g = g_1\ast g_2 \ast\cdots \ast g_n$ and 
  $g_i = \motion{\dtsess}{\pp_i:\ma_i}$. By the fully separation condition
  and Definition~\ref{def:well-formed}(3), 
  this case is similar with the case above. 
\end{itemize}  
{\bf\em (Collision-free Progress)} 
By induction, following the same structure as {\bf\em (Motion Progress)}.
\begin{itemize}
\item The base case is given by $I$ and by \rulename{t-sess} which guarantees that $\N$ is initially collision free.
\item For the communication ($\GvtPair \pp\pq{\ell_i(S_i).\GL_i}$), note that it does not change the physical state, and therefore, does not impact the geometric footprint used by a process.
      Hence, the case $\redG \GL\pp\ell\q$ is straightforward.
\item For motion actions ($\motion{\dtsess}{(\pp_i{:}\ma_i)}$), Definition~\ref{def:well-formed}(3) and Theorem~\ref{th:motion-compat} ensures collision freedom through execution steps.
      Therefore, after the corresponding reduction $\N \xrightarrow{\motion{\dtsess}{\ma_i}, t} \N'$, $\N'$ is also collision free.
\item For the separation ($g_1\ast g_2$), Definition~\ref{def:well-formed}(2) enforces that the two parts are fully separated and the induction hypothesis means each subpart is collision-free.
      Therefore, the overall system is collision-free.
\item The remaining cases are bookkeeping rules, e.g., inserting $@t$ or removing fully consumed motions, similar to the communication these rules do not change the physical state, and therefore, preserve collision freedom.
\end{itemize}
\end{proof}

 }{}

\end{document}